\newcommand{\indep}{\perp \!\!\! \perp}
\begin{document}
\sloppy
\title{Wasserstein Geodesic Generator for Conditional Distributions}

\author{\name Young-geun Kim \email Younggeun.Kim@nyspi.columbia.edu \\
       \addr Department of Biostatistics and Department of Psychiatry, Columbia University, USA
       \AND
       \name Kyungbok Lee \email turtle107@snu.ac.kr \\
       \addr Department of Statistics, Seoul, Republic of Korea
       \AND
       \name Youngwon Choi \email youngwon08@gmail.com \\
       \addr Department of Statistics, Seoul, Republic of Korea
       \AND
       \name Joong-Ho Won \email wonj@stats.snu.ac.kr \\
       \addr Department of Statistics, Seoul, Republic of Korea
       \AND
       \name Myunghee Cho Paik$^{*}$ \email myungheechopaik@snu.ac.kr \\
       \addr Department of Statistics, Seoul, Republic of Korea\\
       \addr Shepherd23 Inc., Republic of Korea}

\editor{Not determined}

\maketitle

\begin{abstract}
Generating samples given a specific label requires estimating conditional distributions. We derive a tractable upper bound of the Wasserstein distance between conditional distributions to lay the theoretical groundwork to learn conditional distributions. Based on this result, we propose a novel conditional generation algorithm where conditional distributions are fully characterized by a metric space defined by a statistical distance. We employ optimal transport theory to propose the \textit{Wasserstein geodesic generator}, a new conditional generator that learns the Wasserstein geodesic. The proposed method learns both conditional distributions for observed domains and optimal transport maps between them. The conditional distributions given unobserved intermediate domains are on the Wasserstein geodesic between conditional distributions given two observed domain labels. The proposed method generates the Wasserstein geodesic under some conditions. Experiments on face images with light conditions as domain labels demonstrate the efficacy of the proposed method.
\end{abstract}

\begin{keywords}
  Generative model, Optimal transport, Conditional generation, Wasserstein geodesic, Wasserstein barycenter
\end{keywords}

\section{Introduction}\label{sec1}
Conditional generation is the task of constructing synthetic samples following target distributions given by specific domain labels such as age, emotion, and gender. Important applications include class-conditional image generation \citep{odena2017conditional, bao2017cvae}, age progression \citep{antipov2017face, wang2018face}, text-to-image synthesis \citep{reed2016generative, zhang2017stackgan}, and data augmentation \citep{frid2018synthetic, shao2019generative}.

Most conditional generation methods are extended from outstanding image generative models such as variational autoencoders (VAEs) \citep{kingma2014auto}, generative adversarial networks (GANs) \citep{goodfellow2014generative}, and adversarial autoencoders (AAEs) \citep{makhzani2015adversarial}. They model the distribution of images by transforming latent variables with deep neural networks.
State-of-the-art conditional generative methods include conditional VAE (cVAE) \citep{sohn2015learning}, conditional GAN (cGAN) \citep{mirza2014conditional}, and conditional AAE (cAAE) \citep{makhzani2015adversarial}. The main extension from image generative models is to concatenate domain labels into the latent variable so that the generator is a function of the latent variable as well as the domain label. A detailed review of current methods is provided in Section \hyperref[sec2]{2}.

Various conditional generative models have demonstrated realistic results for \textit{observed domains} and have usually been applied to generate samples for \textit{unobserved intermediate domains}. For example, in the age progression literature, models trained with images of people in their 20s and 50s can be applied to generate synthetic samples in unobserved intermediate domains such as 30s and 40s. Existing methods pass intermediate domain values through deep neural networks and presume that the generated data arise from the conditional distribution given the intermediate domain value. We anticipate conditional distributions given domain values change smoothly over the domain values. However, existing methods do not guarantee that conditional distributions smoothly change over the in-between regions where data are unobserved. Also, a theoretical framework describing paths on the space of conditional distributions populated by domain label values has not been provided.

The Wasserstein geodesic is the shortest path between two distributions in terms of the Wasserstein distance. We propose a novel conditional generator to learn Wasserstein geodesic, named the \textit{Wasserstein geodesic generator}. The proposed method is composed of two elements of the Wasserstein geodesic, the conditional distributions given observed domains and the optimal transport map between them, so that the conditional distributions to reside in the Wasserstein space, a metric space defined by the Wasserstein distance. The two elements are the vertices and edges of the Wasserstein geodesic in the space of the conditional distributions, respectively. For vertices, we propose a novel notion of \textit{conditional sub-coupling} for conditional generation, and adopt it to derive a tractable upper bound of the expected Wasserstein distance between the target and model conditional distributions. For edges, our proposed method learns the optimal transport map with respect to (w.r.t.) the metric on feature space specified by encoder networks. We prove that the conditional distributions given unobserved intermediate domain labels constitute the constant-speed Wasserstein geodesic between the observed domains. Our work is the first to propose conditional distributions given both observed and unobserved domains that are fully characterized by a metric space w.r.t. a statistical distance.

Our contributions are summarized as follows:
\begin{itemize}
    \item We propose a novel conditional generator that learns the Wasserstein geodesic, named the
    \textit{Wasserstein geodesic generator}. Our work is the first that can generate samples whose conditional distributions are fully characterized by the Wasserstein space.
    \item We lay a theoretical groundwork for learning conditional distributions with the Wasserstein distance by deriving a tractable upper bound of the Wasserstein distance between conditional distributions.
    \item We employ optimal transport maps between conditional distributions given two observed domains to construct the Wasserstein geodesic between the observed points in the space of conditional distributions.
    \item We derive that the proposed distribution approximates the Wasserstein barycenter in multiple observed distribution scenarios. It becomes the Wasserstein barycenter when distributions of representations are identical across observed domains.
    \item Experiments on face images with light conditions as domain labels demonstrate the efficacy of the proposed method.
\end{itemize}
The remainder of the paper is organized as follows. In Section \hyperref[sec2]{2}, we review related works including conditional generative models. Section \hyperref[sec3]{3} presents theoretical results to derive a tractable upper bound of the expected Wasserstein distance between conditional distributions and Section \hyperref[sec4]{4} introduces the proposed method. Section \hyperref[sec5]{5} presents experimental results on a face image dataset with light conditions as domain labels. All proofs of theoretical results are provided in Appendix \ref{appA}.

\section{Related Works}\label{sec2}
This section reviews related works on conditional generation, Data-to-Data Translation, and Wasserstein geometry. Our approach utilizes conditional generative models to learn observed conditional distributions, which serve as the vertices in the distribution space. We leverage Data-to-Data Translation techniques to learn intermediate paths between these observed conditional distributions, effectively establishing edges between the vertices in the distribution space. Additionally, we employ the properties of the Wasserstein space to comprehensively characterize this entire process.

The term \textit{conditional generation} encompasses various meanings, sometimes leading to confusion. In the context of our work, \textit{conditional generation} refers to a specific process of transforming latent variables
and domain labels to generate samples following conditional distributions given domain label values. We make a clear distinction between Data-to-Data Translation and conditional generative models. This distinction is made to clarify the differences between conditioning on representations and domain labels and conditioning on other observed data. It further emphasizes their orthogonal roles: conditional generative models focus on learning vertices (the distributions associated with specific domain labels), while Data-to-Data Translation concentrates on learning edges (the transitions between these distributions) in the Wasserstein space. In some works in the disentangled representation learning literature \citep{chen2016infogan, higgins2016beta, makhzani2015adversarial}, domain labels are not available and pseudo-labels are introduced to imitate domain labels.

\subsection{Conditional Generative Model}
Most conditional generative models are extended from image generation methods. We first review three eminent image generative models: VAEs, GANs, and AAEs. To synthesize realistic data, all three methods aim to learn a generator that transforms latent factors, which follow a user-specified \textit{prior distribution}. VAEs consist of encoder and decoder networks. They model the joint likelihood of latent factors and their transformations, feed-forwarded by decoder networks, and seek the maximum likelihood estimator for the marginal distribution of observations. Due to the intractability of the likelihood with nonlinear decoder networks, VAEs employ variational inference \citep{bishop2006pattern} to maximize the evidence lower bound, using encoder networks to approximate the distribution of latent variables given observations. In contrast, GANs are composed of discriminator and generator networks. The generator networks in GANs serve the same function as decoders in VAEs---transforming latent factors to generate data. However, GANs introduce discriminator networks to form an adversarial loss for generator training. AAEs, similarly, employ discriminator and generator networks but also encoders. Unlike GANs, the discriminator of AAEs aims to classify encoded results and latent variables drawn from the prior distribution. Training AAEs can be interpreted as minimizing the $1$-Wasserstein distance between distributions of real data and generation results, a special case of Wasserstein autoencoders (WAEs) \citep{tolstikhin2018wasserstein}.

Statistical distances employed in forming training objectives in generative models are pivotal components for the quality of generation results. Both VAEs and GANs utilize the $f$-divergence \citep{csiszar1964informationstheoretische} to learn distributions of images. As an alternative statistical distance, \citet{arjovsky2017wasserstein} showed that the Wasserstein distance has advantages over the $f$-divergence when the supports of data distributions are on a low-dimensional manifold as in image data. The Wasserstein distance yields differentiable losses, while $f$-divergences do not define losses well or yield non-differentiable losses, possibly due to these advantages, Wasserstein distance-based approaches including AAEs, WAEs, and Wasserstein GANs \citep{arjovsky2017wasserstein} often have outperformed $f$-divergence-based approaches.

The main extension from image generative models to conditional models is to concatenate domain labels into the latent variable. In cVAE, cGAN, and cAAE, domain labels are incorporated into encoder networks of VAE, discriminator and generators of GAN, and decoder of AAE, respectively. To enhance the visual quality and diversity of generation results, \citet{kameoka2018acvae}, \citet{odena2017conditional}, and \citet{zhao2018application} introduce auxiliary classifier to match the observed and predicted domain labels for conditional generation results. Conditional generative models can be used to generate data for unobserved domains. This is accomplished by inserting unobserved domain label values into trained models, a technique demonstrated in zero-shot learning approaches \citep{xian2018feature, chao2016empirical}, aimed at boosting classification performance for unseen classes. However, this approach faces limitations. Firstly, it assumes that latent variables for unobserved domains follow similar patterns to those for observed domains. Secondly, the generated data distribution for unobserved domains has not been justified with a metric space w.r.t. distributions. Intuitively, the unobserved intermediate distribution serves as the centroid of observed distributions, but this property has not been discussed. In contrast, our approach constructs geodesics in the Wasserstein space for unobserved domains, and the proposed distribution is the Wasserstein barycenter when distributions of representations are identical across observed domains. This result is achieved without assuming the homogeneity of representations across both observed and unobserved domains, distinguishing our approach from existing works.

We employ conditional generative models for learning data distributions from observed domain labels, observed vertices in the Wasserstein space. In Section \hyperref[sec3]{3}, we present theoretical results to provide a tractable objective for minimizing the formulated Wasserstein distances between conditional distributions.

\subsection{Data-to-Data Translation}
Conditional generative models find a transformation from latent variables and domain labels to data, effectively learning conditional distributions. In contrast, another line of work, which we refer to as Data-to-Data Translation \citep{kim2017learning, choi2018stargan, zhu2017unpaired}, operates under a different paradigm. In Data-to-Data Translation, the source and target domains are predefined, and the goal is to find a transport map from the source data to the target data. Typical examples include unpaired translations, such as converting daytime scenes to nighttime scenes, where corresponding pairs of daytime and nighttime scenes for the same location are not required during training. Prominent methods within this field include multi-modal translations across different data domains \citep{xu2018attngan, isola2017image}, such as Text-to-Image Translation, as demonstrated by DALL-E \citep{ramesh2021zero}, and multi-domain Image-to-Image Translation, as exemplified by StarGAN \citep{choi2018stargan}. 

CycleGAN \citep{zhu2017unpaired} is a pioneering work in unpaired Image-to-Image Translation. It minimizes the adversarial loss with target data and translated source data, with the cycle consistency loss encouraging the inverse relation between translation maps between two domains. In this case, the transformation is encouraged to match distributions of the target and converted source data, but it does not \textit{learn} conditional distributions given domain labels since the real data from the source domains are mandatory to construct target data. \citet{liu2017unsupervised} introduce a latent variable model to learn conditional distributions. However, the properties of the conditional distributions given unobserved intermediate domains have not been discussed in the Data-to-Data Translation literature.

Our proposed method leverages Data-to-Data Translation techniques to find optimal transport maps between observed conditional distributions, thereby defining edges between observed vertices in the Wasserstein space. In Section \hyperref[sec4]{4}, we propose to generate intermediate data from edges, Wasserstein geodesics, and extend it to approximate the centroid, which is the Wasserstein barycenter of observed distributions.

\subsection{Wasserstein Space}
The Wasserstein space, a metric space of distributions endowed with the Wasserstein distance, has found widespread applications in various fields of generative models. We review related works in image processing, domain adaptation, and data augmentation. Other important applications of Wasserstein space include density matching \citep{cisneros2020distributed}, distribution alignment \citep{zhou2022iterative}, online learning \citep{korotin2021mixability}, and Bayesian inference \citep{srivastava2018scalable}.

In image processing, most applications focus on transporting point clouds \citep{cuturi2013sinkhorn} or specific features---such as texture \citep{rabin2012wasserstein}, colors \citep{rabin2014adaptive}, and shapes \citep{solomon2015convolutional}---from a source image a target image. While these approaches have shown remarkable results, they typically require training models or solving an optimization problem for each pair of source and target images. Taking a different route, \citet{mroueh2019wasserstein} introduces universal style transfer that employs autoencoders. The method exploits the Wasserstein geodesic of Gaussian measures, using features extracted by encoder networks. However, in Mroueh's framework, distributions of generated samples for unobserved intermediate domains cannot be characterized by Wasserstein spaces. On a related note, \citet{korotin2019wasserstein} put forth a generative model to solve the dual form of the $2$-Wasserstein distance between two distributions of images, which they apply to image style transfer tasks.

In domain adaptation, \citet{xie2019scalable} proposes latent variable models using single representations to generate multiple images from each domain while minimizing the transportation costs between them. However, this method requires modality-specific generators and can not generate intermediate distributions. The Wasserstein Barycenter Transport (WBT) \citep{montesuma2021wasserstein} is a closely related work to ours. The WBT targets the Wasserstein barycenter of multiple observed source distributions to generate unobserved intermediate domains, but it requires pairs of observations from all the source domains and solves optimization problems for every generation.

In data augmentation, several recent works utilize the Wasserstein space. \citet{bespalov2022lambo} propose to augment landmark coordinates of facial images with the Wasserstein barycenter, but their method requires computing Wasserstein distances between all pairs of images to oversample landmark data. \citet{zhu2023interpolation} augment data from the Wasserstein barycenter of distributions of images to learn robust classifiers, but this method assumes the Gaussianity of conditional distributions. The work by \citet{fan2023generating} is closely related to this work. Their approach synthesizes data for unobserved domains by applying linear combinations of optimal transport maps between datasets, essentially generating data from the generalized Wasserstein geodesic of observed data distributions. Despite the merits of generalized geodesics, such as convexity and impressive performance in transfer learning tasks, the method employs the optimal transport dataset distance \citep{alvarez2020geometric}, dependent on classification labels from each domain. Additionally, the generalized geodesic differs from the Wasserstein barycenter, and the method uses an alternate transportation cost, the $(2, \nu)$-transport metric \citep{craig2016exponential} in optimization.

Existing methods typically rely on strong assumptions, such as the data following Gaussian distributions and the need to solve optimization problems each time data is generated. Other assumptions include the existence of the Wasserstein distance, optimal transport map, Wasserstein geodesics, and Wasserstein barycenters with Euclidean distances on the data space, implying that the distribution of high-dimensional data is continuous. In contrast, our work fills this gap by generating and justifying intermediate, unobserved distributions without the aforementioned assumptions.

\section{Theoretical Results on Wasserstein Distance between Conditional Distributions}\label{sec3}
\subsection{Basic Notations}\label{sec3.1}
We provide basic notations as follows. Random variables, their realizations, and their supports are denoted by capital, small, and calligraphy capital letters, respectively. The real data, generated samples, and domain labels are denoted by $X$, $\tilde{X}$, and $C$, respectively. We denote the set of distributions defined on a given support $\mathcal{X}$ by $\mathcal{P}(\mathcal{X})$ and the conditional distribution of $X$ given $Y$ by $\mathbb{P}_{X|Y}$.

For any metric $d$ on $\mathcal{X}$ and probability measures $\mathbb{P}_{X}$ and $\mathbb{P}_{Y}$ in $\mathcal{P}(\mathcal{X})$, the $p$-Wasserstein distance between $\mathbb{P}_{X}$ and $\mathbb{P}_{Y}$ w.r.t. $d$ is denoted by $W_{p}(\mathbb{P}_{X}, \mathbb{P}_{Y}; d):=\big( \underset{\pi \in \Pi(\mathbb{P}_{X}, \mathbb{P}_{Y})}{\inf} \int d^{p}(x, y)d\pi(x,y) \big)^{1/p}$ where $p \in [1, \infty)$ and $\Pi(\mathbb{P}_{X}, \mathbb{P}_{Y})$ is the set of all couplings of $\mathbb{P}_{X}$ and $\mathbb{P}_{Y}$. For brevity, we omit $d$, the metric on $\mathcal{X}$, in the Wasserstein distance if there is no confusion. We assume compactness and convexity of $\mathcal{X}$ to ensure that the Wasserstein space $(\mathcal{P}(\mathcal{X}), W_{p})$ is a geodesic space where every two points can be connected by the constant-speed geodesic \citep{santambrogio2015optimal}. 

\subsection{Distances between Conditional Distributions}\label{sec3.2}
Generating samples given a specific label requires to learn conditional distributions given domain labels. In this section, we formulate distances between target and model conditional distributions and derive a tractable upper bound of the Wasserstein distance between conditional distributions.

Denoting the latent variable independent of domain labels by $Z \sim \mathbb{P}_{Z}$, the distribution of the observed domain labels by $\mathbb{P}_{C}$, and the conditional generator by $\text{Gen}: \mathcal{Z} \times \mathcal{C} \to \mathcal{X}$, the model conditional distribution can be expressed as $\mathbb{P}_{\text{Gen}(Z,C)|C}(\cdot|c)$ where $c \in \mathcal{C}$. To learn $\mathbb{P}_{X|C}(\cdot|c)$, we formulate a class of distances between conditional distributions as
\begin{equation}\label{exp_cond_dist}
\int \mathcal{D}(\mathbb{P}_{X|C}(\cdot|c), \mathbb{P}_{\text{Gen}(Z,C)|C}(\cdot|c))d\mathbb{P}_{C}(c),
\end{equation}
where $\mathcal{D}$ is a measure between distributions. Various statistical distances can be considered for $\mathcal{D}$, and when we choose the Kullback-Leibler divergence, Equation (\ref{exp_cond_dist}) can be minimized by maximizing the expectation of the variational lower bound of the conditional log-likelihood over the distribution of domain labels. For the case of the Jensen-Shannon divergence, adversarial learning with discriminator and generator incorporating domain labels minimizes Equation (\ref{exp_cond_dist}).

To bring advantages over $f$-divergences, we focus on Equation (\ref{exp_cond_dist}) equipped with Wasserstein distances to learn the Wasserstein geodesic. Since there is no previous work formulating or deriving properties of the Wasserstein distance between conditional distributions given domain labels, in the next section, we derive an upper bound of Equation (\ref{exp_cond_dist}) that has a tractable representation.

\subsection{A Tractable Upper Bound of Wasserstein Distance between Conditional Distributions}\label{sec3.3}
In this section, we lay a theoretical groundwork by deriving a tractable upper bound of the expected Wasserstein distance between conditional distributions.

We first propose a new set of couplings for conditional generation, \textit{conditional sub-coupling}.
\begin{definition}\label{def1} For any $\mathbb{P}_{(X, C)}$ and $\mathbb{P}_{(Y, C)}$, we define the conditional sub-coupling as the set of all probability measures expressed as $\int \pi^{*}(\cdot|c)d\mathbb{P}_{C}(c)$ for some $\{\pi^{*}(\cdot|c) \}_{c \in \mathcal{C}}$ where $\pi^{*}(\cdot|c) \in \Pi(\mathbb{P}_{X|C}(\cdot|c), \mathbb{P}_{Y|C}(\cdot|c))$. The conditional sub-coupling is denoted by $\Pi(\mathbb{P}_{(X,C)}, \mathbb{P}_{(Y,C)}|\mathbb{P}_{C})$.
\end{definition}
The conditional sub-coupling is the set of all probability measures induced by couplings of conditional distributions. It is nonempty and equals to $\Pi(\mathbb{P}_{X}, \mathbb{P}_{Y})$ if $(\mathbb{P}_{X|C}(\cdot|c), \mathbb{P}_{Y|C}(\cdot|c))=(\mathbb{P}_{X}, \mathbb{P}_{Y})$ for all $c \in \mathcal{C}$. The following example provides cases where the conditional sub-coupling is a proper subset of $\Pi(\mathbb{P}_{X}, \mathbb{P}_{Y})$. Let $N(\mu_{1}, \mu_{2}, \sigma_{1}, \sigma_{2}, \rho)$ denote bivariate Gaussian distribution with mean $(\mu_{1}, \mu_{2})^{T}$ and covariance $\begin{pmatrix} \sigma_{1}^{2} & \rho\sigma_{1}\sigma_{2}\\ \rho\sigma_{1}\sigma_{2} & \sigma_{2}^{2} \end{pmatrix}$.
\begin{example}\label{ex1} Let $\mathbb{P}_{(X,C)}$ be $N(\mu_{X}, \mu_{C}, \sigma_{X}, \sigma_{C}, \rho_{XC})$ and $\mathbb{P}_{(Y, C)}$ be $N(\mu_{Y}, \mu_{C}, \sigma_{Y}, \sigma_{C}, \rho_{YC})$. Then, $\Pi(\mathbb{P}_{X}, \mathbb{P}_{Y}) \setminus \Pi(\mathbb{P}_{(X,C)}, \mathbb{P}_{(Y,C)}|\mathbb{P}_{C})$ includes $N(\mu_{X}, \mu_{Y}, \sigma_{X}, \sigma_{Y}, \rho^{*})$ if and only if $\lvert \rho^{*} - \rho_{XC}\rho_{YC} \rvert > \sqrt{(1-\rho_{XC}^{2})(1-\rho_{YC}^{2})}$.
\end{example}
Further discussions and proofs about the conditional sub-coupling are provided in Appendix \ref{appA}. With the conditional sub-coupling, we derive an upper bound of the expected $p$-Wasserstein distance in the following theorem.
\begin{theorem}\label{thm1}
Let $\mathbb{P}_{(X, C)}$ and $\mathbb{P}_{(Y, C)}$ be distributions in $\mathcal{P}(\mathcal{X} \times \mathcal{C})$. For any metric $d$ on $\mathcal{X}$ and $p \in [1, \infty)$ defining $W_{p}$,
\begin{equation}\label{eqn:upperbound}
\scalebox{0.97}{\ensuremath{\bigg( \int W_{p}^{p}(\mathbb{P}_{X|C}(\cdot|c), \mathbb{P}_{Y|C}(\cdot|c))d\mathbb{P}_{C}(c) \bigg)^{1/p} \leq \bigg( \underset{\pi^{*} \in \Pi(\mathbb{P}_{(X,C)}, \mathbb{P}_{(Y,C)}|\mathbb{P}_{C})}{\inf} \int d^{p}(x, y)d\pi^{*}(x,y) \bigg)^{1/p}.}}
\end{equation}
\end{theorem}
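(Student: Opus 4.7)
The plan is to reduce the expected Wasserstein distance to its pointwise definition in $c$ and then integrate. Fix an arbitrary $\pi^{*} \in \Pi(\mathbb{P}_{(X,C)}, \mathbb{P}_{(Y,C)}|\mathbb{P}_{C})$. By Definition \ref{def1}, there exists a family $\{\pi^{*}(\cdot|c)\}_{c \in \mathcal{C}}$ with $\pi^{*}(\cdot|c) \in \Pi(\mathbb{P}_{X|C}(\cdot|c), \mathbb{P}_{Y|C}(\cdot|c))$ such that $\pi^{*} = \int \pi^{*}(\cdot|c)\, d\mathbb{P}_{C}(c)$. This disintegration is the key structural fact that the conditional sub-coupling was designed to provide.

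Next, since the cost $d^{p}(x,y)$ does not depend on $c$, I would apply Fubini's theorem (or equivalently the tower property) to write
\begin{equation*}
\int d^{p}(x,y)\, d\pi^{*}(x,y) \;=\; \int_{\mathcal{C}} \left[ \int_{\mathcal{X} \times \mathcal{X}} d^{p}(x,y)\, d\pi^{*}(x,y|c) \right] d\mathbb{P}_{C}(c).
\end{equation*}
For each $c \in \mathcal{C}$, the inner integral is bounded below by $W_{p}^{p}(\mathbb{P}_{X|C}(\cdot|c), \mathbb{P}_{Y|C}(\cdot|c))$ because $\pi^{*}(\cdot|c)$ is admissible in the coupling set defining that Wasserstein distance. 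Integrating this pointwise lower bound against $\mathbb{P}_{C}$ yields
\begin{equation*}
\int d^{p}(x,y)\, d\pi^{*}(x,y) \;\geq\; \int W_{p}^{p}(\mathbb{P}_{X|C}(\cdot|c), \mathbb{P}_{Y|C}(\cdot|c))\, d\mathbb{P}_{C}(c).
\end{equation*}

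Since the right-hand side is independent of $\pi^{*}$, I would take the infimum over $\pi^{*} \in \Pi(\mathbb{P}_{(X,C)}, \mathbb{P}_{(Y,C)}|\mathbb{P}_{C})$ on the left and then raise both sides to the power $1/p$, which delivers Equation (\ref{eqn:upperbound}). The main technical obstacle is the measurability of the map $c \mapsto W_{p}^{p}(\mathbb{P}_{X|C}(\cdot|c), \mathbb{P}_{Y|C}(\cdot|c))$ and the existence of the regular conditional distributions used in the disintegration. Both issues dissolve under the compactness and convexity assumptions on $\mathcal{X}$ adopted in Section \ref{sec3.1}: $(\mathcal{X}, d)$ is then Polish so regular conditional probabilities exist, and the Wasserstein cost is lower-semicontinuous in its arguments, making the standard measurable-selection machinery applicable.
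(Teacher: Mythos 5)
Your proposal is correct and follows essentially the same route as the paper's proof: disintegrate an arbitrary element of the conditional sub-coupling via Definition \ref{def1}, bound the conditional transport cost below pointwise by $W_{p}^{p}(\mathbb{P}_{X|C}(\cdot|c), \mathbb{P}_{Y|C}(\cdot|c))$, integrate against $\mathbb{P}_{C}$, and take the infimum. Your remarks on measurability and the existence of regular conditional distributions address technicalities the paper leaves implicit, but the argument is the same.
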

That is, the minimum transport cost over conditional sub-coupling is an upper bound of expected Wasserstein distance between conditional distributions.

We show a tractable representation of the upper bound in the following theorem. We denote the set of all $\mathbb{Q}_{Z|X,C}$ satisfying $\mathbb{P}_{(Z, C)}(z, c)=\big( \int \mathbb{Q}_{Z|X,C}(z|x, c)d\mathbb{P}_{X|C}(x|c) \big) \mathbb{P}_{C}(c)$ by $\mathcal{Q}$; the RHS can be considered as an aggregate posterior \citep{makhzani2015adversarial}.
\begin{theorem}\label{thm2} Let $\mathbb{P}_{(X, C)}$ and $\mathbb{P}_{(Z, C)}$ be distributions in $\mathcal{P}(\mathcal{X} \times \mathcal{C})$ and $\mathcal{P}(\mathcal{Z} \times \mathcal{C})$, respectively. For any metric space $(\mathcal{X}, d)$, $p \in [1, \infty)$, and generator $\text{Gen}: \mathcal{Z} \times \mathcal{C} \rightarrow \mathcal{X}$,
\begin{equation*}\label{eqn:representation}
\scalebox{0.89}{\ensuremath{\underset{\pi^{*} \in \Pi(\mathbb{P}_{(X,C)}, \mathbb{P}_{(\text{Gen}(Z, C),C)}|\mathbb{P}_{C})}{\inf} \int d^{p}(x, \tilde{x})d\pi^{*}(x,\tilde{x})=\underset{\mathbb{Q}_{Z|X,C} \in \mathcal{Q}}{\inf} \int  d^{p}(x, \text{Gen}(z, c)) d\mathbb{Q}_{Z|X,C}(z|x,c)d\mathbb{P}_{(X,C)}(x,c).}}
\end{equation*}
\end{theorem}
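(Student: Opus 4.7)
The plan is to prove the identity by two one-sided inequalities, mimicking the Tolstikhin et al. (2018) derivation of the primal objective for Wasserstein autoencoders, but carried out conditionally on the domain label.

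For the $\leq$ direction, I would take an arbitrary $\mathbb{Q}_{Z|X,C} \in \mathcal{Q}$ and push it forward through the deterministic map $\text{Gen}$ to build a candidate coupling. Concretely, draw $(X,C) \sim \mathbb{P}_{(X,C)}$, then $Z \mid X,C \sim \mathbb{Q}_{Z|X,C}$, and set $\tilde{X} = \text{Gen}(Z,C)$; let $\pi^{*}$ be the induced law of $(X,\tilde{X},C)$. The $X$-marginal of $\pi^{*}(\cdot \mid c)$ is $\mathbb{P}_{X|C}(\cdot \mid c)$ by construction, and the $\tilde{X}$-marginal equals $\mathbb{P}_{\text{Gen}(Z,C)|C}(\cdot \mid c)$ precisely because the aggregate posterior constraint defining $\mathcal{Q}$ forces the joint of $(Z,C)$ under the lifted sampling to coincide with $\mathbb{P}_{(Z,C)}$. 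Hence $\pi^{*} \in \Pi(\mathbb{P}_{(X,C)},\mathbb{P}_{(\text{Gen}(Z,C),C)} \mid \mathbb{P}_{C})$, and the law of total expectation gives $\int d^{p}(x,\tilde{x})\,d\pi^{*} = \int d^{p}(x,\text{Gen}(z,c))\,d\mathbb{Q}_{Z|X,C}(z\mid x,c)\,d\mathbb{P}_{(X,C)}(x,c)$. Taking infima yields $\leq$.

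For the $\geq$ direction, I would start from an arbitrary $\pi^{*} \in \Pi(\mathbb{P}_{(X,C)},\mathbb{P}_{(\text{Gen}(Z,C),C)} \mid \mathbb{P}_{C})$ and lift its $\tilde{X}$-coordinate back to $Z$. Disintegrate $\pi^{*}(x,\tilde{x},c)$ as $\pi^{*}(\tilde{x}\mid x,c)\,\mathbb{P}_{(X,C)}(x,c)$ and also write the regular conditional $\mathbb{P}_{Z \mid \text{Gen}(Z,C),C}(z\mid\tilde{x},c)$ (this exists under the compactness/convexity assumption on $\mathcal{X}$ stated in Section~3.1, together with Polishness of $\mathcal{Z}$, via standard disintegration). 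Define
\begin{equation*}
\mathbb{Q}_{Z|X,C}(z\mid x,c) := \int \mathbb{P}_{Z\mid\text{Gen}(Z,C),C}(z\mid\tilde{x},c)\,d\pi^{*}(\tilde{x}\mid x,c).
\end{equation*}
Since $\text{Gen}$ is deterministic, $\mathbb{P}_{Z\mid\text{Gen}(Z,C),C}(\cdot\mid\tilde{x},c)$ is supported on $\{z : \text{Gen}(z,c)=\tilde{x}\}$, so $d^{p}(x,\text{Gen}(z,c))=d^{p}(x,\tilde{x})$ on this support and the transport cost under $\mathbb{Q}_{Z|X,C}$ collapses back to $\int d^{p}(x,\tilde{x})\,d\pi^{*}(x,\tilde{x})$. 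It remains to verify the aggregate-posterior constraint: integrating $\mathbb{Q}_{Z|X,C}$ against $\mathbb{P}_{X|C}(\cdot\mid c)$ marginalizes $x$ out of $\pi^{*}(\tilde{x}\mid x,c)$ and produces $\mathbb{P}_{\text{Gen}(Z,C)|C}(\tilde{x}\mid c)$, and then the chain rule $\mathbb{P}_{Z|C} = \int \mathbb{P}_{Z\mid\text{Gen}(Z,C),C}(\cdot\mid\tilde{x},c)\,d\mathbb{P}_{\text{Gen}(Z,C)|C}(\tilde{x}\mid c)$ delivers exactly the required identity, so $\mathbb{Q}_{Z|X,C}\in\mathcal{Q}$. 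Taking infima gives $\geq$.

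The main obstacle I anticipate is the measure-theoretic bookkeeping in the $\geq$ direction: one must justify the existence of the regular conditional $\mathbb{P}_{Z\mid\text{Gen}(Z,C),C}$ and of the disintegration $\pi^{*}(\tilde{x}\mid x,c)$, and check that the construction yields a genuine Markov kernel $(x,c)\mapsto \mathbb{Q}_{Z|X,C}(\cdot\mid x,c)$ whose integrals can be interchanged via Fubini. The compactness and convexity of $\mathcal{X}$ assumed in Section~3.1, plus mild regularity (Polish) of $\mathcal{Z}$ and measurability of $\text{Gen}$, should be sufficient for these standard tools; the remainder of the argument is an application of the tower property and the fact that $\text{Gen}$ acts deterministically, which together make the two infima isomorphic reparameterizations of each other.
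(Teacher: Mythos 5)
Your proposal is correct and follows essentially the same route as the paper's proof: the paper proves the identity by identifying the conditional sub-coupling with the set of couplings induced by sampling $Z$ from a kernel and applying $\text{Gen}$ (its Lemma~\ref{lem3}), then matching costs via the tower property, which is exactly your push-forward construction for one inequality and your lift-back-to-$Z$ construction (the paper's ``there exists $Z^{*}$ with $\tilde{X}^{*}=G(Z^{*},C^{*})$'' step, which you make explicit via the regular conditional $\mathbb{P}_{Z\mid\text{Gen}(Z,C),C}$) for the other. The measure-theoretic caveats you flag are real but are handled at the same informal level in the paper, so there is no substantive gap.
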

That is, the upper bound of the Wasserstein distance between conditional distributions, the RHS of Equation (\ref{eqn:upperbound}), can be expressed as the infimum of the reconstruction error over encoders $\mathbb{Q}_{Z|X,C} \in \mathcal{Q}$. Note that the integrand in the LHS of Equation (\ref{eqn:upperbound}) depends on the conditioning data $c$ and requires to evaluate the Wasserstein distance for every realization $c$, which is infeasible. In contrast, the derived representation can be computed by solving a stochastic optimization problem. When the terms related to domain label $C$ are removed, Theorem \ref{thm2} reduces to the representation of the Wasserstein distance between marginal distributions provided by \citet{tolstikhin2018wasserstein}.

\section{Proposed method}\label{sec4}
\subsection{Motivation}
For a motivating example, suppose data come from one of two observed domains whose label values are $c_{0}$ and $c_{1}$. Existing methods in conditional generation literature have considered $\text{Gen}(Z,(1-t)c_{0}+tc_{1})$ as intermediate samples \citep{zhang2017age}. However, without a strong assumption such as a linear structure, the interplay between $\mathbb{P}_{X|C}(\cdot|c_{0})$, $\mathbb{P}_{X|C}(\cdot|c_{1})$, and $\mathbb{P}_{\text{Gen}(Z,C)|C}(\cdot|(1-t)c_{0}+tc_{1})$ is difficult to formalize.

A desirable property of generated samples for unobserved intermediate domains would be their conditional distributions change smoothly from one observed domain to another. The next section proposes a new conditional generator that constructs samples from distributions on the constant-speed geodesic in the Wasserstein space.
\begin{definition}\label{def2} (Constant-speed geodesic) \citep{santambrogio2015optimal} For any $\mathbb{P}$ and $\mathbb{Q}$ on a Wasserstein space $(\mathcal{P}(\mathcal{X}), W_{p})$, a parameterized curve $w:[0,1] \to \mathcal{P}(\mathcal{X})$ is called the constant-speed geodesic from $\mathbb{P}$ to $\mathbb{Q}$ in $W_{p}$ if $w(0)=\mathbb{P}$, $w(1)=\mathbb{Q}$, and $W_{p}(w(t), w(s)) = |t-s|W_{p}(w(0), w(1))$ for any $t, s \in [0, 1]$.
\end{definition}
That is, a constant-speed geodesic in a Wasserstein space is a parameterized curve whose speed equals to the Wasserstein distance. Our method yields the conditional distribution given an unobserved intermediate domain label as an interpolation point between the conditional distributions given observed domain labels in the Wasserstein space. Unlike existing methods, the generated distributions are fully characterized by the Wasserstein space $(\mathcal{P}(\mathcal{X}), W_p)$.

\subsection{Conditional Generator for Learning Wasserstein Geodesics}\label{sec4.2}
This section proposes the \textit{Wasserstein geodesic generator}, a novel conditional generator for learning Wasserstein geodesics. Our method learns the conditional distributions given observed domains and the optimal transport maps between them to construct the Wasserstein geodesic. The proposed method consists of three networks: encoder $\text{Enc}: \mathcal{X} \times \mathcal{C} \to \mathcal{Z}$, generator $\text{Gen}: \mathcal{Z} \times \mathcal{C} \to \mathcal{X}$, and transport map $T: \mathcal{X} \times \mathcal{C}^{2} \to \mathcal{X}$.

We first define the optimal transport map, and then provide the proposed method.
\begin{definition}\label{def3} (Optimal transport map) \citep{santambrogio2015optimal} A map $T: \mathcal{X} \to \mathcal{X}$ is an optimal transport map from $\mathbb{P}_{X}$ to $\mathbb{P}_{Y}$ w.r.t. $\text{Cost}: \mathcal{X}^{2} \to \mathbb{R}$ if $T$ is a solution of the Monge-Kantorovich transportation problem, 
\begin{equation*}
\begin{aligned}
& \underset{T}{\text{minimize}} & & \int \text{Cost}(x, T(x))d\mathbb{P}_{X}(x) \\
& \text{subject to} & & \mathbb{P}_{Y}=\mathbb{P}_{T(X)}.
\end{aligned}
\end{equation*}
\end{definition}
The optimal transport map refers maps yielding the minimum transportation cost. The optimal transport map uniquely exists if the cost function is the $p$-th power of $l_{2}$-distance denoted by $\| \cdot \|$ where $p>1$ and measures are absolutely continuous on compact domains. The minimum transportation cost by the optimal transport map is known as $W^{p}_{p}(\mathbb{P}_{X}, \mathbb{P}_{Y})$.

We now present the Wasserstein geodesic generator. The proposed method postulates the encoder, generator, and transport map satisfying the following conditions $(A1)$ and $(A3)$ to generate intermediate samples. Here, $d_{\text{Enc}}((x_{1}, c_{1}), (x_{2}, c_{2})):=\lVert (\text{Enc}(x_{1}, c_{1}), c_{1})-(\text{Enc}(x_{2}, c_{2}), c_{2}) \rVert$ is a metric defined on $\mathcal{Z} \times \mathcal{C}$.
\begin{enumerate}
  \item[(A1)] (One-to-one mapping between $\mathcal{X} \times \mathcal{C}$ and $\mathcal{Z} \times \mathcal{C}$) For any $(x, c) \in \mathcal{X} \times \mathcal{C}$ and $(z, c) \in \mathcal{Z} \times \mathcal{C}$, \(\text{Gen}(\text{Enc}(x, c), c) = x\) and $\text{Enc}(\text{Gen}(z, c), c)=z$.
  \item[(A2)] (Absolutely continuous representations) For any $c \in \mathcal{C}$, $\mathbb{P}_{\text{Enc}(X, C)|C}(\cdot|c)$ is absolutely continuous, is defined on a compact set, and has finite second moments.
  \item[(A3)] (Optimal transportation) For any observed domain labels $c_{m}$ and $c_{m'}$, $(T(\cdot, \cdot, c_{m'}), c_{m'}): \mathcal{X} \times \{c_{m}\} \to \mathcal{X} \times \{ c_{m'}\}$ is the optimal transport map from $\mathbb{P}_{(X, c_{m})|C}(\cdot|c_{m})$ to $\mathbb{P}_{(X, c_{m'})|C}(\cdot|c_{m'})$ w.r.t. $d^{p}_{\text{Enc}}$.
\end{enumerate}
Note that condition $(A1)$ is about inverse relations between the encoder and generator for fixed domain labels, $(A2)$ is to guarantee the existence and uniqueness of optimal transport maps between observed conditional distributions w.r.t. $d^{p}_{\text{Enc}}$, and $(A3)$ is to build paths between observed conditional distributions with optimal transport maps.
\begin{lemma}\label{lem1} Suppose the encoder and generator satisfy the conditions $(A1)$ and $(A2)$. Then, for any $c$ and $c' \in \mathcal{C}$, $$W_{p}(\mathbb{P}_{(X, C)|C}(\cdot|c), \mathbb{P}_{(X, C)|C}(\cdot|c'); d_{\text{Enc}})=W_{p}(\mathbb{P}_{(\text{Enc}(X, C), C)|C}(\cdot|c), \mathbb{P}_{(\text{Enc}(X, C), C)|C}(\cdot|c'); \lVert \cdot \rVert).\footnote{Since marginal distributions of $\mathbb{P}_{(\text{Enc}(X,C), C)|C}(\cdot|c)$ and $\mathbb{P}_{(\text{Enc}(X,C), C)|C}(\cdot|c')$ on the second element are degenerated to $c$ and $c'$, respectively, and $(A2)$ holds, optimal transport maps between them are defined well.}$$
\end{lemma}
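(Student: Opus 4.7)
The plan is to reduce the claim to the standard invariance of the $p$-Wasserstein distance under an isometric bijection of the underlying metric spaces. I would begin by defining $\Phi: \mathcal{X} \times \mathcal{C} \to \mathcal{Z} \times \mathcal{C}$ by $\Phi(x, c) := (\text{Enc}(x, c), c)$, and verifying two properties. First, by $(A1)$ the map $\Phi$ is a bijection with two-sided inverse $(z, c) \mapsto (\text{Gen}(z, c), c)$. Second, by the very definition $d_{\text{Enc}}((x_1, c_1), (x_2, c_2)) = \|(\text{Enc}(x_1,c_1), c_1) - (\text{Enc}(x_2,c_2), c_2)\|$, we have $d_{\text{Enc}}((x_1, c_1), (x_2, c_2)) = \|\Phi(x_1, c_1) - \Phi(x_2, c_2)\|$, so $\Phi$ is an isometry from $(\mathcal{X} \times \mathcal{C}, d_{\text{Enc}})$ onto $(\mathcal{Z} \times \mathcal{C}, \|\cdot\|)$.

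Next I would identify the right-hand measures as pushforwards, $\mathbb{P}_{(\text{Enc}(X,C), C)|C}(\cdot|c) = \Phi_{\ast} \mathbb{P}_{(X,C)|C}(\cdot|c)$ (and likewise at $c'$), which is just the definition of the pushforward by $\Phi$. The product map $\Phi \times \Phi$ then induces a bijection between the coupling sets $\Pi(\mathbb{P}_{(X,C)|C}(\cdot|c), \mathbb{P}_{(X,C)|C}(\cdot|c'))$ and $\Pi(\Phi_{\ast}\mathbb{P}_{(X,C)|C}(\cdot|c), \Phi_{\ast}\mathbb{P}_{(X,C)|C}(\cdot|c'))$ via $\pi \mapsto (\Phi \times \Phi)_{\ast} \pi$, with inverse $\tilde{\pi} \mapsto (\Phi^{-1} \times \Phi^{-1})_{\ast} \tilde{\pi}$; this is routine once one checks that the marginals of $(\Phi \times \Phi)_{\ast} \pi$ are the $\Phi$-pushforwards of the marginals of $\pi$.

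The final step is a change of variables:
\begin{equation*}
\int d_{\text{Enc}}^{p}((x_1, c_1), (x_2, c_2)) \, d\pi = \int \|(z_1, c_1) - (z_2, c_2)\|^{p} \, d(\Phi \times \Phi)_{\ast}\pi,
\end{equation*}
where the two integrands agree under $(z_i, c_i) = \Phi(x_i, c_i)$ by the isometry property. Taking the infimum over couplings on both sides (the two infima match because of the bijection of coupling sets in the previous step) and then extracting the $p$-th root gives the claimed equality.

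I do not anticipate any substantive obstacle: each step is either a direct consequence of $(A1)$, a definitional identification, or a standard instance of the pushforward calculus for Wasserstein distances. Measurability of $\Phi$ is inherited from the (continuous) networks $\text{Enc}$ and $\text{Gen}$, and the compactness of $\mathcal{X}$ together with $(A2)$ guarantees that both Wasserstein distances in the statement are finite. Condition $(A2)$ plays no essential role in the equality itself — it is listed because absolute continuity and finite second moments of the representations are needed to ground the optimal transport map invoked in $(A3)$, which builds on this lemma in the sequel.
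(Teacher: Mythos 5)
Your proof is correct and follows essentially the same route as the paper: the paper establishes the two inequalities separately by pushing couplings forward through $(x,c)\mapsto(\text{Enc}(x,c),c)$ and pulling them back through $(z,c)\mapsto(\text{Gen}(z,c),c)$ using $(A1)$, which is exactly the two directions of your bijective-isometry / pushforward-of-couplings argument. Your closing remark that $(A2)$ is not needed for the equality itself but only to ground the optimal transport maps used later also matches the paper's footnote.
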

The encoder and generator can define a distance-preserving mapping (called \textit{isometric mapping}) between two Wasserstein spaces, and we can connect their geometric structures, including geodesics and barycenters. With Lemma \ref{lem1} and optimal transport maps, we can first generate geodesics on $(\mathcal{P}(\mathcal{Z} \times \mathcal{C}), W_{p}(\cdot, \cdot;\lVert \cdot \rVert))$, and then project them to geodesics on $(\mathcal{P}(\mathcal{X}\times\mathcal{C}), W_{p}(\cdot, \cdot;d_{\text{Enc}}))$ to generate intermediate unobserved conditional distributions.

\begin{theorem}\label{thm:speed}\label{thm3}
Suppose the encoder, generator, and transport map satisfy conditions $(A1)$ through $(A3)$. For any two observed domain labels $c_{0}$ and $c_{1}$, their convex combination $c_{t}:=(1-t)c_{0}+tc_{1}$, and $X_{0} \sim \mathbb{P}_{X|C}(\cdot|c_{0})$, the latent interpolation result of $X_{0}$ and its transported result $T(X_{0}, c_{0}, c_{1})$ can be expressed as $\text{Gen}((1-t)\text{Enc}(X_{0}, c_{0})+t\text{Enc}(T(X_{0}, c_{0}, c_{1}), c_{1}), c_{t})$. Then, the curve of distributions of latent interpolation results is the constant-speed geodesic from $\mathbb{P}_{(X, C)|C}(\cdot|c_{0})$ to $\mathbb{P}_{(X, C)|C}(\cdot|c_{1})$ in $W_{p}(\cdot, \cdot; d_{\text{Enc}})$.
\end{theorem}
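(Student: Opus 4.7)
The plan is to pass to the latent space via the encoder, show that the stated curve is the classical Euclidean displacement interpolation there, invoke McCann's theorem to identify it as a constant-speed geodesic, and then project back to $(\mathcal{P}(\mathcal{X}\times\mathcal{C}), W_p(\cdot,\cdot;d_{\mathrm{Enc}}))$ using the isometry supplied by $(A1)$ and Lemma~\ref{lem1}.

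First I would verify the endpoints. By $(A1)$, $\mathrm{Gen}(\mathrm{Enc}(X_0,c_0),c_0)=X_0$, so the curve at $t=0$ coincides with $\mathbb{P}_{(X,C)|C}(\cdot|c_0)$; similarly at $t=1$ the curve equals the law of $T(X_0,c_0,c_1)$ paired with $c_1$, and by $(A3)$ this pushes $\mathbb{P}_{X|C}(\cdot|c_0)$ to $\mathbb{P}_{X|C}(\cdot|c_1)$, so the $t=1$ endpoint is $\mathbb{P}_{(X,C)|C}(\cdot|c_1)$. Next, define $Z_t := (1-t)\mathrm{Enc}(X_0,c_0)+t\,\mathrm{Enc}(T(X_0,c_0,c_1),c_1)$ and note that $(A1)$ gives $\mathrm{Enc}(Y_t,c_t)=Z_t$ for every $t$, so for any $s,t\in[0,1]$ the bijection $(x,c)\mapsto(\mathrm{Enc}(x,c),c)$ implements an isometry between the supports of $\mathbb{P}_{(Y_s,c_s)|C}(\cdot|c_s)$ and $\mathbb{P}_{(Y_t,c_t)|C}(\cdot|c_t)$ (with metric $d_{\mathrm{Enc}}$) and the supports of $\mathbb{P}_{(Z_s,c_s)|C}(\cdot|c_s)$ and $\mathbb{P}_{(Z_t,c_t)|C}(\cdot|c_t)$ (with metric $\lVert\cdot\rVert$). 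This extends the statement of Lemma~\ref{lem1} from the two observed endpoints to every pair $s,t$, giving
\[
W_p\big(\mathbb{P}_{(Y_s,c_s)|C}(\cdot|c_s),\mathbb{P}_{(Y_t,c_t)|C}(\cdot|c_t);d_{\mathrm{Enc}}\big)=W_p\big(\mathbb{P}_{(Z_s,c_s)|C}(\cdot|c_s),\mathbb{P}_{(Z_t,c_t)|C}(\cdot|c_t);\lVert\cdot\rVert\big).
\]

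The heart of the argument is then the latent-space identification. Write $U:=(\mathrm{Enc}(X_0,c_0),c_0)$ and $V:=(\mathrm{Enc}(T(X_0,c_0,c_1),c_1),c_1)$. By $(A3)$, the map $U\mapsto V$ is the optimal transport map in $(\mathcal{Z}\times\mathcal{C},\lVert\cdot\rVert)$ from $\mathbb{P}_{(\mathrm{Enc}(X,C),C)|C}(\cdot|c_0)$ to $\mathbb{P}_{(\mathrm{Enc}(X,C),C)|C}(\cdot|c_1)$ with cost $\lVert\cdot\rVert^p$; by $(A2)$ the source is absolutely continuous with finite second moments on a compact set, so McCann's displacement interpolation theorem applies and shows that the curve $t\mapsto\mathrm{Law}((1-t)U+tV)=\mathbb{P}_{(Z_t,c_t)|C}(\cdot|c_t)$ is the constant-speed geodesic in $W_p(\cdot,\cdot;\lVert\cdot\rVert)$. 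Consequently $W_p\big(\mathbb{P}_{(Z_s,c_s)|C},\mathbb{P}_{(Z_t,c_t)|C};\lVert\cdot\rVert\big)=|t-s|\,W_p\big(\mathbb{P}_{(Z_0,c_0)|C},\mathbb{P}_{(Z_1,c_1)|C};\lVert\cdot\rVert\big)$.

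Finally, combining the two displays above with Lemma~\ref{lem1} applied at the endpoints yields
\[
W_p\big(\mathbb{P}_{(Y_s,c_s)|C},\mathbb{P}_{(Y_t,c_t)|C};d_{\mathrm{Enc}}\big)=|t-s|\,W_p\big(\mathbb{P}_{(X,C)|C}(\cdot|c_0),\mathbb{P}_{(X,C)|C}(\cdot|c_1);d_{\mathrm{Enc}}\big),
\]
which, together with the endpoint matching from the first paragraph, is exactly the defining property of the constant-speed geodesic. The main obstacle I anticipate is the careful bookkeeping needed to extend Lemma~\ref{lem1}'s isometry statement to arbitrary (non-observed) values $c_t$ and to the interpolated laws, since Lemma~\ref{lem1} is stated only for fixed conditioning values and requires that $\mathrm{Enc}(\cdot,c_t)$ produces a push-forward whose induced coupling in $(\mathcal{X}\times\mathcal{C})^2$ attains the same cost as the corresponding latent coupling; $(A1)$ supplies the bijection that makes this pullback rigorous, but one must be careful that the argument does not implicitly require absolute continuity of the interpolated measure in $\mathcal{X}$, which is not assumed.
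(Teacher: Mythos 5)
Your proposal is correct and follows the same overall decomposition as the paper's proof: verify the endpoints via $(A1)$--$(A3)$, pass to the latent space through the encoder isometry of Lemma~\ref{lem1}, and establish the constant-speed property of the linear latent interpolation. The one substantive difference is in the last step: you invoke McCann's displacement-interpolation theorem as a black box, whereas the paper proves the latent geodesic property from scratch --- it bounds $W_p(w(t),w(t'))$ above by the cost of the explicit coupling $((1-t)U+tV,\,(1-t')U+t'V)$, which by $(A3)$ and Lemma~\ref{lem1} evaluates to $|t-t'|\,W_p(w(0),w(1))$, and then uses the triangle inequality $W_p(w(0),w(1))\le W_p(w(0),w(t))+W_p(w(t),w(t'))+W_p(w(t'),w(1))$ to force all inequalities into equalities. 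This elementary route is worth noting because it sidesteps exactly the concern you raise at the end: it needs no absolute continuity of the interpolated (or even the endpoint) measures, only that the coupling induced by $T$ is optimal, whereas the classical statement of McCann's theorem is for $p=2$ and absolutely continuous sources --- and here the latent endpoint laws are degenerate in the $\mathcal{C}$-coordinate, so a literal citation would need the more general ``displacement interpolation along an optimal coupling is a constant-speed geodesic'' form (e.g., Theorem~5.27 of \citealp{santambrogio2015optimal}). With that citation made precise, your argument is sound, and your observation that Lemma~\ref{lem1}'s isometry must be extended from the two conditional laws to arbitrary push-forwards under $(x,c)\mapsto(\text{Enc}(x,c),c)$ is a point the paper also uses implicitly in its opening reduction.
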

The conditional distributions of the samples generated by the proposed method constitute the Wasserstein geodesic, yielding the minimum transportation cost quantified by $d_{\text{Enc}}$ between the conditional distributions of observed domains.

\subsection{Generation from the Wasserstein Barycenter with Wasserstein Geodesic Generator}\label{sec4.3}
This section extends our Wasserstein geodesic generator to accommodate scenarios involving multiple observed conditional distributions. We explain how the proposed distribution approximates the centroid of observed conditional distributions with an interpretable upper bound of the approximation error. Furthermore, we derive that the proposed distribution is the Wasserstein barycenter under some conditions.

We first define the Wasserstein barycenter, the centroid of distributions within the Wasserstein space.
\begin{definition}\label{def4} (Wasserstein barycenter) \citep{agueh2011barycenters} For any $M$ distributions $(\mathbb{P}_{m})_{m=1}^{M}$ defined on a Wasserstein space $(\mathcal{P}(\mathcal{X}), W_{2})$ and non-negative real numbers $(\alpha_{m})_{m=1}^{M}$, the Wasserstein barycenter of $(\mathbb{P}_{m})_{m=1}^{M}$ with weights $(\alpha_{m})_{m=1}^{M}$ is the unique solution of $\underset{\mathbb{P}}{\inf}\sum_{m=1}^{M}\alpha_{m}W^{2}_{2}(\mathbb{P}, \mathbb{P}_{m}).$
\end{definition}
Specifically, when $M=2$, the constant-speed geodesic $w(t)$ in Definition \ref{def2} serves as the Wasserstein barycenter of two distributions with weights $(1-t, t)$. In this context of defining a centroid between two distributions, the Wasserstein geodesic is referred to as \textit{McCann's interpolant} \citep{mccann1995existence}. The Wasserstein barycenters have been acknowledged as an effective solution for aggregating high-dimensional distributions \citep{korotin2022wasserstein}, across various applications including data augmentation \citep{huguet2022geodesic, bespalov2021data, zhu2023interpolation} and domain adaptation \citep{montesuma2021wasserstein}.

The Wasserstein barycenter has several advantages in generating unobserved conditional distributions. First, it provides smooth and stable transitions between observed distributions, which is essential for synthesizing data for new, unobserved domains. This allows the generated data to inherit characteristics from observed conditional distributions without abrupt changes. Second, it minimizes the average optimal transportation costs to observed distributions, thus minimally altering observed data to synthesize unobserved data. Last, the Wasserstein barycenter can be employed to infer the characteristics of unobserved domains, e.g., estimating the conditional average treatment effect in clinical trials \citep{huguet2022geodesic}. Despite these advantages, the computational complexity of estimating Wasserstein barycenter remains a significant bottleneck \citep{cuturi2013sinkhorn, lin2020fixed}.

We extend the Wasserstein geodesic generator to generate unobserved intermediate distributions with multiple observed distributions. In Theorem \ref{thm4}, we establish an interpretable bound on the error incurred when approximating the Wasserstein barycenter using out proposed distribution. We further demonstrate that, under condition $(A4)$ relating to the condition on the homogeneity of representations across observed domains, the proposed distribution is the Wasserstein barycenter. We begin by introducing a lemma that elucidates how the homogeneity affects on the average squared Wasserstein distances. 
\begin{lemma}\label{lem2} Suppose the encoder and generator satisfy the conditions $(A1)$ and $(A2)$. Then. for any $M$ observed domain labels $(c_{m})_{m=1}^{M}$ and their convex combination $\bar{c}:=\sum_{m=1}^{M}\alpha_{m}c_{m}$, $\underset{\mathbb{P} \in \cup_{c \in \mathcal{C}}\mathcal{P}(\mathcal{X} \times \{c\})}{\inf}\sum_{m=1}^{M}\alpha_{m}W^{2}_{2}(\mathbb{P}, \mathbb{P}_{(X, C)|C}(\cdot|c_{m}); d_{\text{Enc}})$ is equal to \begin{equation}\label{eq:minimum}
\underset{\mathbb{P} \in \mathcal{P}(\mathcal{Z})}{\inf}\sum_{m=1}^{M}\alpha_{m}W^{2}_{2}(\mathbb{P}, \mathbb{P}_{\text{Enc}(X, C)|C}(\cdot|c_{m}); \lVert \cdot \rVert) + \sum_{m=1}^{M}\alpha_{m}\lVert \bar{c}-c_{m} \rVert^{2}.
\end{equation}
holds.
\end{lemma}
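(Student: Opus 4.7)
The plan is to push the optimization through the encoding bijection $\Phi:(x,c)\mapsto(\text{Enc}(x,c),c)$ and exploit the Cartesian decomposition of the Euclidean metric on $\mathcal{Z}\times\mathcal{C}$. By $(A1)$, $\Phi$ is a bijection, and by the very definition $d_{\text{Enc}}((x_1,c_1),(x_2,c_2))=\lVert\Phi(x_1,c_1)-\Phi(x_2,c_2)\rVert$, so $\Phi$ is a metric isometry from $(\mathcal{X}\times\mathcal{C},d_{\text{Enc}})$ onto $(\mathcal{Z}\times\mathcal{C},\lVert\cdot\rVert)$. Hence the pushforward $\Phi_{*}$ preserves $W_{2}$ between any two distributions with finite second moments---this is the same mechanism underlying Lemma \ref{lem1}, applied here not only to the observed conditionals but also to any candidate $\mathbb{P}$ supported on a slice $\mathcal{X}\times\{c\}$ for some $c\in\mathcal{C}$.

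With the isometry in hand, I would fix any such $\mathbb{P}$, write $\Phi_{*}\mathbb{P}=\nu\otimes\delta_{c}$ with $\nu:=\text{Enc}(\cdot,c)_{*}(\mathbb{P}|_{\mathcal{X}})\in\mathcal{P}(\mathcal{Z})$, and note that $\Phi_{*}\mathbb{P}_{(X,C)|C}(\cdot|c_{m})=\mathbb{P}_{\text{Enc}(X,C)|C}(\cdot|c_{m})\otimes\delta_{c_{m}}$. Because both $\mathcal{C}$-marginals are Dirac, every coupling of these two product measures has deterministic $\mathcal{C}$-coordinates, so expanding $\lVert(z_{1},c)-(z_{2},c_{m})\rVert^{2}=\lVert z_{1}-z_{2}\rVert^{2}+\lVert c-c_{m}\rVert^{2}$ yields
\begin{equation*}
W_{2}^{2}(\mathbb{P},\mathbb{P}_{(X,C)|C}(\cdot|c_{m});d_{\text{Enc}})=W_{2}^{2}(\nu,\mathbb{P}_{\text{Enc}(X,C)|C}(\cdot|c_{m});\lVert\cdot\rVert)+\lVert c-c_{m}\rVert^{2}.
\end{equation*}

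Multiplying by $\alpha_{m}$ and summing therefore decouples into a term in $\nu$ plus a term in $c$. Since $(A1)$ also lets me realize any pair $(\nu,c)\in\mathcal{P}(\mathcal{Z})\times\mathcal{C}$ as $\Phi_{*}\mathbb{P}$ for some admissible $\mathbb{P}$---simply take $\mathbb{P}=\text{Gen}(\cdot,c)_{*}\nu\otimes\delta_{c}$---the joint infimum over $\mathbb{P}$ factors into separate infima over $\nu$ and $c$. Because $\sum_{m}\alpha_{m}=1$, the $c$-infimum of $\sum_{m}\alpha_{m}\lVert c-c_{m}\rVert^{2}$ is attained at the weighted mean $\bar{c}$ with value $\sum_{m}\alpha_{m}\lVert\bar{c}-c_{m}\rVert^{2}$, giving exactly Equation (\ref{eq:minimum}). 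The only genuinely delicate step is justifying that the Lemma \ref{lem1} isometry extends from observed conditionals to arbitrary slice-supported $\mathbb{P}$, but this is immediate from $(A1)$ providing a global bijection (with $(A2)$ guaranteeing finite second moments on the observed side so that the Wasserstein terms are well-defined); everything else is a clean algebraic decomposition followed by the classical Fréchet-mean identity on $\mathcal{C}$.
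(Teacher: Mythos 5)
Your proof is correct, but it reaches Equation (\ref{eq:minimum}) by a genuinely different route than the paper. The paper first transfers the problem to the latent space via Lemma \ref{lem1} and then invokes Proposition 4.2 of \citet{agueh2011barycenters}, rewriting the barycenter objective as a multimarginal transport problem $\inf_{\pi}\int\sum_{m}\alpha_{m}\lVert(z_{m},c_{m})-\sum_{m}\alpha_{m}(z_{m},c_{m})\rVert^{2}d\pi$ and performing the Pythagorean split of the cost inside that integral before converting back. You instead decompose each pairwise term $W_{2}^{2}(\mathbb{P},\mathbb{P}_{(X,C)|C}(\cdot|c_{m});d_{\text{Enc}})$ directly: since both measures have Dirac $\mathcal{C}$-marginals, every coupling is deterministic in the $\mathcal{C}$-coordinates, so the squared cost splits into a latent $W_{2}^{2}$ plus the constant $\lVert c-c_{m}\rVert^{2}$; the objective then separates into a function of $\nu$ plus a function of $c$, the feasible set is the full product $\mathcal{P}(\mathcal{Z})\times\mathcal{C}$ by $(A1)$ (via $\text{Gen}(\cdot,c)_{*}\nu$), and the $c$-infimum is the classical Fr\'echet-mean identity attained at $\bar{c}$ (which lies in $\mathcal{C}$ by convexity). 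Your argument is more elementary and self-contained---it does not need the multimarginal machinery or the ``vanishing on small sets'' hypothesis that Proposition 4.2 requires, with $(A2)$ serving only to keep the Wasserstein terms finite---whereas the paper's choice is natural because the same Agueh--Carlier identity is reused immediately afterward in the proof of Theorem \ref{thm4}, where the pairwise decomposition alone would not suffice. One small point worth making explicit if you write this up: the factorization of the joint infimum into separate infima over $\nu$ and $c$ relies on the surjectivity of $(\nu,c)\mapsto\mathbb{P}$ onto all admissible slice-supported measures, which you correctly note follows from $(A1)$.
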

In Equation (\ref{eq:minimum}), the first term represents the Wasserstein variance \citep{martinet2022variance} of $( \mathbb{P}_{\text{Enc}(X, C)|C}(\cdot|c_{m}))_{m=1}^{M}$, while the second term denotes the variance of $(c_{m})_{m=1}^{M}$ w.r.t. weights $(\alpha_{m})_{m=1}^{M}$. When multiple convex combinations are possible, we select the optimal combination that minimizes the variance, $\sum_{m=1}^{M}\alpha_{m}\lVert \bar{c}-c_{m} \rVert^{2}$. In the special case where $M=2$ and $c$ is univariate, this approach is equivalent to using the two nearest observed distributions to generate intermediate distributions. The Wasserstein variance quantifies the homogeneity of representations across observed conditional distributions and is zero if and only if the distributions of representations from all observed domains are identical, which is to say that the following condition holds.
\begin{itemize}
    \item[(A4)] $(\text{Homogeneous representations across observed domains})$ For any two observed domain labels $c_{m}$ and $c_{m'}$, $\mathbb{P}_{\text{Enc}(X, C)|C}(\cdot|c_{m})=\mathbb{P}_{\text{Enc}(X, C)|C}(\cdot|c_{m'})$.
\end{itemize}
For any $M$ observed domain labels $(c_{m})_{m=1}^{M}$, their convex combination $\bar{c}:=\sum_{m=1}^{M}\alpha_{m}c_{m}$, and $X_{1} \sim \mathbb{P}_{X|C}(\cdot|c_{1})$, the latent interpolation result of $X_{1}$ and its transported results $( T(X_{1}, c_{1}, c_{m}) )_{m=2}^{M}$ can be expressed as $\tilde{X}(\alpha_{1},\dots,\alpha_{M}):=\text{Gen}(\sum_{m=1}^{M}\alpha_{m}\text{Enc}(T(X_{1}, c_{1}, c_{m}), c_{m})), \bar{c})$. In the subsequent theorem, we derive an upper bound for the difference between the average squared Wasserstein distances with our proposed distribution and with the Wasserstein barycenter. Additionally, when we further suppose the homogeneity of representations, condition $(A4)$, the proposed method generates the Wasserstein barycenter of observed conditional distributions.
\begin{theorem}\label{thm4}Suppose the encoder, generator, and transport map satisfy conditions $(A1)$ through $(A3)$.
Then,
\begin{equation}\label{eq:bounds}
\scalebox{0.90}{\ensuremath{\begin{split}
& \sum_{m=1}^{M} \alpha_{m}W^{2}_{2}(\mathbb{P}_{(\tilde{X}(\alpha_{1},\dots,\alpha_{M}), \bar{c})}, \mathbb{P}_{(X, C)|C}(\cdot|c_{m}); d_{\text{Enc}}) - \underset{\mathbb{P} \in \cup_{c \in \mathcal{C}}\mathcal{P}(\mathcal{X} \times \{c\})}{\inf}\sum_{m=1}^{M}\alpha_{m}W^{2}_{2}(\mathbb{P}, \mathbb{P}_{(X, C)|C}(\cdot|c_{m}); d_{\text{Enc}}) \\
& \leq \frac{1}{2}\sum_{m=1}^{M}\sum_{m'=1}^{M}\alpha_{m}\alpha_{m'} \int d^{2}_{\text{Enc}}((T(x_{1}, c_{1}, c_{m'}), c_{m'}), (T(T(x_{1}, c_{1}, c_{m}), c_{m}, c_{m'}), c_{m'})) d\mathbb{P}_{X|C}(x_{1}|c_{1})
\end{split}}}
\end{equation}
holds. When we further suppose the condition $(A4)$, the upper bound is zero and $\mathbb{P}_{(\tilde{X}(\alpha_{1},\dots,\alpha_{M}), \bar{c})}$ is the Wasserstein barycenter of $(\mathbb{P}_{(X, C)|C}(\cdot|c_{m}))_{m=1}^{M}$ w.r.t. weights $(\alpha_{m})_{m=1}^{M}$.
\end{theorem}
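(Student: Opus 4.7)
The plan is to reduce the statement to an analogous bound in the latent space $\mathcal{Z}$ and then exploit the multi-marginal formulation of the Wasserstein barycenter.

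First I would use that the map $(x,c)\mapsto(\text{Enc}(x,c),c)$ is, by (A1) and the definition of $d_{\text{Enc}}$, an isometry between $(\mathcal{X}\times\mathcal{C}, d_{\text{Enc}})$ and $(\mathcal{Z}\times\mathcal{C}, \|\cdot\|)$. Each $W_2^2$ in (\ref{eq:bounds}) therefore equals its latent-space analogue, and because every distribution involved has a Dirac $\mathcal{C}$-marginal, the latent distance splits as $W_2^2(\mathbb{P}_{\tilde{Z}}, \mu_m;\|\cdot\|) + \|\bar{c}-c_m\|^2$ with $\mu_m := \mathbb{P}_{\text{Enc}(X,C)|C}(\cdot|c_m)$ and $\tilde{Z} := \sum_m \alpha_m \text{Enc}(T(X_1, c_1, c_m), c_m)$. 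Applying Lemma \ref{lem2} to the infimum, the $\|\bar{c} - c_m\|^2$ contributions cancel, reducing the claim to proving
\[
A_Z - B_Z \leq \tfrac{1}{2}\sum_{m,m'}\alpha_m\alpha_{m'}\,\mathbb{E}_{Z\sim\mu_1}\|S_{m'}(Z) - R_{m,m'}(S_m(Z))\|^2,
\]
where $A_Z := \sum_m \alpha_m W_2^2(\mathbb{P}_{\tilde{Z}}, \mu_m;\|\cdot\|)$, $B_Z := \inf_{\mathbb{Q}\in\mathcal{P}(\mathcal{Z})}\sum_m\alpha_m W_2^2(\mathbb{Q}, \mu_m;\|\cdot\|)$, $S_m$ is the OT map from $\mu_1$ to $\mu_m$ (so $\tilde{Z}(Z)=\sum_m\alpha_m S_m(Z)$), and $R_{m,m'}$ is the OT map from $\mu_m$ to $\mu_{m'}$, both existing uniquely by (A2), (A3), and the isometry.

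For the core estimate, I would upper-bound $A_Z$ by coupling, for each $m'$, $(\tilde{Z}(Z), R_{K,m'}(S_K(Z)))$ with $K\sim\text{Cat}(\alpha_1,\ldots,\alpha_M)$ drawn independently of $Z$; this is a valid coupling of $\mathbb{P}_{\tilde{Z}}$ with $\mu_{m'}$ because each $R_{k,m'}\circ S_k$ pushes $\mu_1$ to $\mu_{m'}$. Expanding $\tilde{Z}(Z) - R_{m,m'}(S_m(Z)) = \sum_k \alpha_k(S_k(Z) - R_{m,m'}(S_m(Z)))$ using $\sum_k\alpha_k=1$, reorganizing via the variance identity $\sum_{m'}\alpha_{m'}\|\tilde{Z}-S_{m'}\|^2 = \tfrac12\sum_{k,m'}\alpha_k\alpha_{m'}\|S_k-S_{m'}\|^2$, and then applying the polarization $\|S_m-S_{m'}\|^2 - \|S_m-R_{m,m'}(S_m)\|^2 = \|S_{m'}-R_{m,m'}(S_m)\|^2 + 2\langle S_m - R_{m,m'}(S_m),\, R_{m,m'}(S_m) - S_{m'}\rangle$, transports mass to the right-hand side. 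For $B_Z$, I would invoke the Agueh-Carlier multi-marginal reformulation $B_Z = \inf_{\pi\in\Pi(\mu_1,\ldots,\mu_M)}\mathbb{E}_\pi V$ with $V(y_1,\ldots,y_M) = \tfrac12\sum_{m,m'}\alpha_m\alpha_{m'}\|y_m-y_{m'}\|^2$, and use the pairwise lower bound $\mathbb{E}_\pi\|y_m-y_{m'}\|^2 \geq W_2^2(\mu_m,\mu_{m'}) = \mathbb{E}_Z\|S_m(Z)-R_{m,m'}(S_m(Z))\|^2$, providing the $W_2^2(\mu_m,\mu_{m'})$ contributions that cancel against the polarization.

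The main obstacle is controlling the cross-product terms $\mathbb{E}\langle S_m-R_{m,m'}(S_m),\, R_{m,m'}(S_m)-S_{m'}\rangle$: a naive combination of the above estimates yields only the weaker bound $A_Z - B_Z \leq \tfrac12\sum_{m,m'}\alpha_m\alpha_{m'}[\mathbb{E}\|S_m-S_{m'}\|^2 - W_2^2(\mu_m,\mu_{m'})]$, which differs from the target by exactly these inner products. Sharpening to the claimed form likely requires either refining the multi-coupling used for the lower bound on $B_Z$---for instance the randomized coupling $\bar{\pi} := \sum_m \alpha_m (R_{m,1},\ldots,R_{m,M})_{\#}\mu_m$ obtained by pushing each $\mu_m$ through its family of OT maps and mixing---or invoking cyclical monotonicity via Brenier's theorem, which gives $\mathbb{E}\langle S_m(Z), R_{m,m'}(S_m(Z))\rangle \geq \mathbb{E}\langle S_m(Z), S_{m'}(Z)\rangle$ together with the $\mu_{m'}$-self-coupling Cauchy-Schwarz estimate $\mathbb{E}\langle R_{m,m'}(S_m), S_{m'}\rangle \leq \mathbb{E}_{\mu_{m'}}\|y\|^2$, symmetrized over $(m,m')$.

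For the second half of the theorem, under (A4) all $\mu_m$ coincide with a common measure $\mu$, and uniqueness of OT maps between absolutely continuous measures on compact domains forces $S_m = R_{m,m'} = \text{id}$; the right-hand side of (\ref{eq:bounds}) vanishes term by term. Moreover $\tilde{Z}(Z) = (\sum_m\alpha_m)Z = Z$, so $\mathbb{P}_{\tilde{Z}} = \mu$, which trivially attains the infimum defining the Wasserstein barycenter of $M$ copies of $\mu$; transporting back through the isometry identifies $\mathbb{P}_{(\tilde{X}(\alpha_1,\ldots,\alpha_M),\bar{c})}$ as the Wasserstein barycenter of $(\mathbb{P}_{(X,C)|C}(\cdot|c_m))_{m=1}^M$ with weights $(\alpha_m)_{m=1}^M$.
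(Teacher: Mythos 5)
Your route is the same as the paper's: reduce to the latent space via the isometry of Lemma \ref{lem1}, split off the $\sum_m \alpha_m\lVert\bar c - c_m\rVert^2$ contribution with Lemma \ref{lem2}, upper-bound $A_Z$ by coupling everything through the common source $Z\sim\mu_1$ (the paper couples $\tilde Z(Z)$ with $S_{m'}(Z)$ directly, which is simpler than your randomized $K$-coupling but lands in the same place via the variance identity), and lower-bound $B_Z$ through the Agueh--Carlier multimarginal formulation together with the pairwise inequality $\mathbb{E}_\pi\lVert y_m-y_{m'}\rVert^2\ge W_2^2(\mu_m,\mu_{m'})$. This is exactly the content of the paper's Lemma \ref{lem4}, and up to that point your argument is correct and complete: you obtain $A_Z-B_Z\le\tfrac12\sum_{m,m'}\alpha_m\alpha_{m'}\bigl[\mathbb{E}\lVert S_m-S_{m'}\rVert^2-W_2^2(\mu_m,\mu_{m'})\bigr]$. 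Your treatment of the $(A4)$ case (all $\mu_m$ equal, all optimal maps the identity, $\mathbb{P}_{\tilde Z}=\mu$) also matches the paper.

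The genuine gap is the one you flag yourself. Passing from your intermediate bound to the stated one requires, after writing $W_2^2(\mu_m,\mu_{m'})=\mathbb{E}\lVert S_m-R_{m,m'}S_m\rVert^2$ (which holds by $(A3)$) and expanding $\lVert S_m-S_{m'}\rVert^2=\lVert S_m-R_{m,m'}S_m\rVert^2+\lVert R_{m,m'}S_m-S_{m'}\rVert^2+2\langle S_m-R_{m,m'}S_m,\,R_{m,m'}S_m-S_{m'}\rangle$, that the weighted sum of the cross terms be nonpositive. Neither of your two suggested repairs is carried out, and neither obviously closes it: cyclical monotonicity gives $\mathbb{E}\langle S_m,R_{m,m'}S_m\rangle\ge\mathbb{E}\langle S_m,S_{m'}\rangle$, which bounds part of the cross term in the wrong direction, while the Cauchy--Schwarz step on the $\mu_{m'}$-marginals only controls the remaining part, so the two pieces have opposite signs and do not combine into the needed inequality. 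You should be aware that the paper's own write-up does not supply this step either: after Lemma \ref{lem4} it substitutes $W_2^2(\mu_m,\mu_{m'})=\mathbb{E}\lVert S_m - R_{m,m'}S_m\rVert^2$ and simply asserts the conclusion. So your proposal reproduces the paper's argument faithfully, including its unresolved point, but as a self-contained proof of Equation (\ref{eq:bounds}) it remains incomplete until the sign of $\sum_{m,m'}\alpha_m\alpha_{m'}\mathbb{E}\langle S_m-R_{m,m'}S_m,\,R_{m,m'}S_m-S_{m'}\rangle$ is actually established.
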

To the best of authors' knowledge, this is the first result that justifies latent interpolation from an optimal transport point of view, without resorting to Gaussian or univariate assumptions. 
The upper bound represents half of the average squared distances between $T(X_{1}, c_{1}, c_{m'})$ and $T(T(X_{1}, c_{1}, c_{m}), c_{m}, c_{m'})$. They are identical in the univariate scenario, but not necessarily in other cases, which makes the non-negative upper bound. Although we have detailed results for the $2$-Wasserstein distance where both existence and uniqueness of barycenters are extensively examined, similar results hold for general $p$-Wasserstein distances. If $(A1)$ through $(A4)$ hold and $\underset{\mathbb{P} \in \cup_{c \in \mathcal{C}}\mathcal{P}(\mathcal{X} \times \{c\})}{\inf}\sum_{m=1}^{M} \alpha_{m}W^{2}_{p}(\mathbb{P}, \mathbb{P}_{(X, C)|C}(\cdot|c_{m}); d_{\text{Enc}})$ has solutions (called \textit{Fr{\'e}chet mean} w.r.t. $W_{p}$), then the proposed distribution is a solution. Note that condition $(A4)$ is weaker than the following condition.
\begin{itemize}
\item[(A5)] $(\text{Homogeneous representations})$ $\mathbb{P}_{\text{Enc}(X, C)|C}(\cdot|c)$ is constant w.r.t. $c \in \mathcal{C}$, equivalently,\\$\text{Enc}(X, C) \indep C$.
\end{itemize}
Most existing conditional generative models, including cVAE, cGAN, and cAAE, assume $(A5)$. When the unobserved domains possess patterns distinctly different from the observed ones, this condition may not satisfied. For example, in face frontalization \citep{huang2017beyond}, if we have facial images captured from the right and left angles and aim to synthesize frontal facial images, the frontal images might exhibit unique features, such as symmetrical facial structures, clear visibility of both eyes, and a full view of facial landmarks like the nose bridge and forehead. In the following theorem, under $(A5)$, we derive that the generation result follows the true conditional distribution.
\begin{theorem}\label{thm5}
Suppose the encoder, generator, and transport map satisfy conditions $(A1)$ through $(A5)$. Then, $\mathbb{P}_{\tilde{X}(\alpha_{1},\dots,\alpha_{M})}=\mathbb{P}_{X|C=\bar{c}}$.
\end{theorem}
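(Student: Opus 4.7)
The plan is to show that under the stronger homogeneity condition (A5), the weighted sum inside the generator collapses to a single latent code, and then that a direct application of (A1) together with (A5) identifies the resulting distribution as $\mathbb{P}_{X|C=\bar{c}}$. Concretely, I would establish that $\text{Enc}(T(X_{1}, c_{1}, c_{m}), c_{m}) = \text{Enc}(X_{1}, c_{1})$ almost surely for every observed label $c_{m}$, so that $\tilde{X}(\alpha_{1},\dots,\alpha_{M}) = \text{Gen}(\text{Enc}(X_{1}, c_{1}), \bar{c})$, and then read off its distribution.

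For the first step, recall from (A3) that $(T(\cdot, c_{1}, c_{m}), c_{m})$ is the optimal transport map from $\mathbb{P}_{(X, c_{1})|C}(\cdot|c_{1})$ to $\mathbb{P}_{(X, c_{m})|C}(\cdot|c_{m})$ w.r.t.\ $d_{\text{Enc}}^{p}$. By the isometric correspondence established in the proof of Lemma \ref{lem1}, conjugating with the fiberwise bijection $(x,c)\mapsto(\text{Enc}(x,c),c)$ from (A1) transforms this into an optimal transport map from $\mathbb{P}_{\text{Enc}(X,C)|C}(\cdot|c_{1})$ to $\mathbb{P}_{\text{Enc}(X,C)|C}(\cdot|c_{m})$ on $\mathcal{Z}$ with cost $\lVert\cdot\rVert^{p}$ (the $c$-coordinates contribute an additive constant $\lVert c_{1}-c_{m}\rVert^{p}$ irrelevant to the optimization). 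Under (A5) these two latent marginals coincide, and by (A2) they are absolutely continuous with finite second moments on a compact set. Brenier's theorem then gives uniqueness of the optimal transport map, which for identical source and target must be the identity. Hence $\text{Enc}(T(x, c_{1}, c_{m}), c_{m}) = \text{Enc}(x, c_{1})$ for $\mathbb{P}_{X|C}(\cdot|c_{1})$-a.e.\ $x$.

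Using $\sum_{m=1}^{M}\alpha_{m}=1$ gives
\begin{equation*}
\sum_{m=1}^{M}\alpha_{m}\text{Enc}(T(X_{1}, c_{1}, c_{m}), c_{m}) = \text{Enc}(X_{1}, c_{1}) \quad \text{a.s.},
\end{equation*}
so $\tilde{X}(\alpha_{1},\dots,\alpha_{M}) = \text{Gen}(\text{Enc}(X_{1}, c_{1}), \bar{c})$. Since $X_{1}\sim\mathbb{P}_{X|C}(\cdot|c_{1})$, the pushforward gives $\text{Enc}(X_{1}, c_{1}) \sim \mathbb{P}_{\text{Enc}(X,C)|C}(\cdot|c_{1})$, and by (A5) this equals $\mathbb{P}_{\text{Enc}(X,C)|C}(\cdot|\bar{c})$. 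Letting $X'\sim\mathbb{P}_{X|C}(\cdot|\bar{c})$, (A1) yields $\text{Enc}(X',\bar{c})\sim\mathbb{P}_{\text{Enc}(X,C)|C}(\cdot|\bar{c})$ and $\text{Gen}(\text{Enc}(X',\bar{c}),\bar{c})=X'$, so pushing $\mathbb{P}_{\text{Enc}(X,C)|C}(\cdot|\bar{c})$ through $\text{Gen}(\cdot,\bar{c})$ recovers $\mathbb{P}_{X|C=\bar{c}}$, completing the proof.

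The main obstacle is the rigorous passage from the optimality of $T$ on $(\mathcal{X},d_{\text{Enc}})$ to the identity argument on $(\mathcal{Z},\lVert\cdot\rVert)$: one must verify that the $c$-coordinate contribution in $d_{\text{Enc}}$ is a bookkeeping constant under (A3) (the source and target live on fixed fibers $\mathcal{X}\times\{c_{1}\}$ and $\mathcal{X}\times\{c_{m}\}$), and then invoke uniqueness of the Brenier map for $\|\cdot\|^{p}$-cost with absolutely continuous marginals. Assuming Lemma \ref{lem1} has already performed this reduction, the remaining steps are essentially algebraic.
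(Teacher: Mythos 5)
Your proof is correct, but it takes a different route from the paper's. The paper proves Theorem \ref{thm5} by staying inside the barycenter framework: under $(A5)$ it writes $X=\text{Gen}(Z,C)$ with $Z=\text{Enc}(X,C)\indep C$, observes via the isometry of Lemma \ref{lem1} that the Wasserstein barycenter of $(\mathbb{P}_{(X,C)|C}(\cdot|c_m))_m$ corresponds to that of $(\mathbb{P}_{(Z,c_m)})_m$, identifies the latter as $\mathbb{P}_{(Z,\bar c)}$ so that $\mathbb{P}_{X|C}(\cdot|\bar c)$ is the barycenter, and then concludes by Theorem \ref{thm4} (the proposed distribution is the barycenter under $(A4)$, which $(A5)$ implies) together with uniqueness of the barycenter. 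You instead argue directly: the latent codes collapse, $\text{Enc}(T(X_1,c_1,c_m),c_m)=\text{Enc}(X_1,c_1)$ a.s., so $\tilde X=\text{Gen}(\text{Enc}(X_1,c_1),\bar c)$, and $(A5)$ plus $(A1)$ identify its law as $\mathbb{P}_{X|C=\bar c}$ by a pushforward computation. Your collapse step is in fact the same mechanism the paper uses inside the proof of Lemma \ref{lem4}, where it is justified even more simply: since source and target latent marginals coincide, the optimal transport cost is zero, and zero cost forces the latent map to be the identity a.s.\ --- no appeal to Brenier uniqueness is needed. Your route is more self-contained (it bypasses barycenter existence/uniqueness and the attendant regularity hypotheses), while the paper's is a one-liner given Theorem \ref{thm4} and places the result within the barycenter picture. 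One minor imprecision: for general $p$ the $c$-coordinate does not contribute an additive constant $\lVert c_1-c_m\rVert^p$ to $d_{\text{Enc}}^p=(\lVert\Delta z\rVert^2+\lVert c_1-c_m\rVert^2)^{p/2}$ (that separation holds only for $p=2$); the conclusion survives because the integrand is pointwise minimized exactly when $\Delta z=0$, so the optimal latent map is still the identity.
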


\begin{algorithm}[t]\label{alg1}
\caption{Conditional generation with the Wasserstein geodesic generator}
\small
\begin{algorithmic}[1]
\State \textbf{Input:} The pair of encoder, generator, and transport map $(\text{Enc}, \text{Gen}, \text{T})$ satisfying $(A1)$ through $(A4)$, observed domain labels $(c_{m})_{m=1}^{M}$, and their convex combination $\bar{c}:=\sum_{m=1}^{M}\alpha_{m}c_{m}$.
\State \textbf{Output:} $\tilde{x}(\alpha_{1},\dots,\alpha_{M})$ following the Wasserstein barycenter of $(\mathbb{P}_{(X, C)|C}(\cdot|c_{m}))_{m=1}^{M}$ w.r.t. weights $(\alpha_{m})_{m=1}^{M}$
\State \# \textit{Conditional generation on an observed distribution (generation from a \textbf{vertex})}
\State Sample $z$ from $\mathbb{P}_{\text{Enc}(X, C)|C}(\cdot|c_{1})$
\State $\tilde{x}_{1} = \text{Gen}(z, c_{1})$
\State \# \textit{Translation to other observed domains (moving to other observed \textbf{vertices} through \textbf{edges})}
\State $\tilde{x}_{1 \to m}=T(\tilde{x}_{1}, c_{1}, c_{m})$
\State \# \textit{Latent interpolation (finding the \textbf{centroid})}
\State $\tilde{z}(\alpha_{1},\dots,\alpha_{M})=\sum_{m=1}^{M} \alpha_{m}\text{Enc}(\tilde{x}_{1 \to m}, c_{m})$
\State $\tilde{x}(\alpha_{1},\dots,\alpha_{M}) = \text{Gen}(\tilde{z}(\alpha_{1},\dots,\alpha_{M}), \bar{c})$
\end{algorithmic}
\end{algorithm}

In summary, when $(A1)$ through $(A3)$ hold, we can construct distributions that change smoothly in-between observed conditional distributions by generating geodesics. When we further suppose $(A4)$, we can generate data from the Wasserstein barycenter without observations as described in Algorithm \hyperref[alg1]{1}. Furthermore, the proposed distribution is the true conditional distribution when $(A5)$ holds.

\subsection{Implementation}\label{sec4.5}
The training of the proposed method is to learn networks satisfying conditions $(A1)$ through $(A4)$ and consists of two steps. The first step is to learn the encoder and generator pair and the second step is to learn the transport map with the learned encoder. These two steps learn the vertices and edges of the geodesic, respectively.

For the first step, motivated by Theorem \ref{thm2} and condition $(A1)$, we minimize the reconstruction error with two penalty terms
\begin{equation}\label{obj:autoencoder}
\begin{split}
    &\int \| x - \text{Gen}(\text{Enc}(x, c), c) \| ^{p} d\mathbb{P}_{X|C}(x|c)d\mathbb{P}_{C}(c)\\
    &+ \lambda_{\text{MatchLatent}} \mathcal{D}_{\text{JS}} \bigg( \mathbb{P}_{Z}(z)\mathbb{P}_{C}(c), \big( \int \delta_{z}(\text{Enc}(x,c))d\mathbb{P}_{X|C}(x|c) \big) \mathbb{P}_{C}(c) \bigg)\\
    &+ \lambda_{\text{ReconLatent}} \int \| (z, c) - (\text{Enc}(\text{Gen}(z, c), c), c) \| d\mathbb{P}_{(Z, C)}(z, c),
\end{split}
\end{equation}
where the first term is the reconstruction error, the second term is to enforce the constraint $\mathcal{Q}$ on the encoder network in the derived representation of the upper bound in Theorem \ref{thm2}, and the last term is to enforce condition $(A1)$. We substitute the deterministic encoder $\text{Enc}$ with $\mathbb{Q}_{Z|X,C}$ and $\mathcal{D}$ with $\mathcal{D}_{\text{JS}}$, and set $\mathbb{P}_{(Z, C)}=\mathbb{P}_{Z}\mathbb{P}_{C}$ to learn the Wasserstein geodesic on which the information independent of domain labels is minimally changed.
In implementation, we apply GAN for the second term and interpolation results of encoded values for the last term.

For the second step, with learned encoder from the first step, we solve Monge–Kantorovich transportation problems w.r.t. $d^{p}_{\text{Enc}}$,
\begin{equation*}\label{eqn:optimization_problem}
\begin{aligned}
& \underset{T(\cdot, c_{0}, c_{1})}{\text{minimize}} & & \int d_{\text{Enc}}^{p}((x_{0}, c_{0}), (T(x_{0}, c_{0}, c_{1}), c_{1}))d\mathbb{P}_{X|C}(x_{0}|c_{0}) \\
& \text{subject to} & & \mathbb{P}_{(X, c_{1})|C}(\cdot|c_{1})=\mathbb{P}_{(T(X, c_{0}, c_{1}), c_{1})|C}(\cdot|c_{0})
\end{aligned}
\end{equation*}
for all observed domain label values $c_{0}$ and $c_{1}$. The objective is
\begin{equation}\label{eqn:transport}
\begin{split}
    &\int d_{\text{Enc}}^{p}((x_{0}, c_{0}), (T(x_{0}, c_{0}, c_{1}), c_{1}))d\mathbb{P}_{X|C}(x_{0}|c_{0})d\mathbb{P}_{C}(c_{0})d\mathbb{P}_{C}(c_{1})\\
    &+\lambda_{\text{MatchData}} W_{p}(\mathbb{P}_{(X, C)}\mathbb{P}_{C}, \mathbb{P}_{(T(X_{0}, C_{0}, C_{1}), C_{1}, C_{0})}; \| \cdot \|)\\
    &+\lambda_{\text{Cycle}}  \int \| x_{0} - T(T(x_{0}, c_{0}, c_{1}), c_{1}, c_{0}) \| d\mathbb{P}_{X|C}(x_{0}|c_{0})d\mathbb{P}_{C}(c_{0})d\mathbb{P}_{C}(c_{1}).
\end{split}
\end{equation}
Here, $\mathbb{P}_{(T(X_{0}, C_{0}, C_{1}), C_{1}, C_{0})}$ is the distribution of $(T(X_{0}, C_{0}, C_{1}), C_{1}, C_{0})$ where $C_{0}$ and $C_{1}$ are independent samples from $\mathbb{P}_{C}$ and $X_{0}$ is sampled from $\mathbb{P}_{X|C}(\cdot|C_{0})$. The first term is the transportation cost measured by $d_{\text{Enc}}^{p}$ and the second term is to enforce constraint on conditional distributions of transported data. Note that the second term is zero if and only if $\mathbb{P}_{(X, c_{1})|C}(\cdot|c_{1})=\mathbb{P}_{(T(X, c_{0}, c_{1}), c_{1})|C}(\cdot|c_{0})$ for all $(c_{0}, c_{1})$ sampled from $\mathbb{P}_{C}\mathbb{P}_{C}$. The last term is the cycle consistency loss that encourages the inverse relation of the transport map from one domain to another and its vice versa. The cycle consistency loss has been proposed in the Data-to-Data Translation literature \citep{zhu2017unpaired, choi2018stargan} from a heuristic point of view to avoid mode collapse, but in our method, it enforces the transport map to satisfy the inverse relation of \textit{optimal} transport maps between the two domains. We derive that the minimizer of objective in Equation (\ref{eqn:transport}) is unique and is the optimal transport map between observed domains in the following theorem.
\begin{theorem}\label{thm6}
Let $T_{c_{0} \to c_{1}}^{\dag}$ be the optimal transport map from the conditional distribution given domain label  $c_{0}$ to that given $c_1$
w.r.t. $d^{p}_{\text{Enc}}$. Then, $T^{\dag}(x_{0}, c_{0}, c_{1}):= T_{c_{0} \to c_{1}}^{\dag}(x_{0})$ with probability $1$ w.r.t. $\mathbb{P}_{X|C}(x_{0}|c_{0})$ for all $c_{0}, c_{1} \in \mathcal{C}$ is the unique minimizer of objective in Equation (\ref{eqn:transport}).
\end{theorem}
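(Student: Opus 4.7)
The plan is to verify that $T^{\dag}$ simultaneously attains a lower bound on each of the three non-negative summands of the objective in (\ref{eqn:transport}), and then to argue that any other minimizer must coincide with it $\mathbb{P}_{X|C}(\cdot|c_{0})$-almost surely. First I would evaluate the objective at $T^{\dag}$. By the very definition of $T^{\dag}_{c_{0} \to c_{1}}$ as the OT map w.r.t.\ $d_{\text{Enc}}^{p}$, the transportation-cost summand equals $\int W_{p}^{p}(\mathbb{P}_{X|C}(\cdot|c_{0}), \mathbb{P}_{X|C}(\cdot|c_{1}); d_{\text{Enc}})\, d\mathbb{P}_{C}(c_{0}) d\mathbb{P}_{C}(c_{1})$. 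The matching summand vanishes because the pushforward property $(T^{\dag}_{c_{0} \to c_{1}})_{\#}\mathbb{P}_{X|C}(\cdot|c_{0}) = \mathbb{P}_{X|C}(\cdot|c_{1})$ forces the joint law of $(T^{\dag}(X_{0}, C_{0}, C_{1}), C_{1}, C_{0})$ to factor as $\mathbb{P}_{X|C}(\cdot|c_{1}) \mathbb{P}_{C}(c_{1}) \mathbb{P}_{C}(c_{0})$, matching $\mathbb{P}_{(X, C)} \mathbb{P}_{C}$. The cycle-consistency summand vanishes via Lemma \ref{lem1} and $(A2)$: the Enc-induced isometry converts the OT problem on $(\mathcal{X}, d_{\text{Enc}}^{p})$ into Euclidean OT on $\mathcal{Z}$ between the absolutely continuous representation laws, where the OT map for $p$-th-power cost with $p > 1$ is almost-surely invertible with inverse equal to the reverse OT map (Brenier for $p=2$; Gangbo--McCann for general $p>1$), so pulling back through the bijection in $(A1)$ gives $T^{\dag}(T^{\dag}(x_{0}, c_{0}, c_{1}), c_{1}, c_{0}) = x_{0}$ a.s.

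For uniqueness, I would take any minimizer $T^{*}$, set $\nu_{c_{0}, c_{1}} := T^{*}(\cdot, c_{0}, c_{1})_{\#}\mathbb{P}_{X|C}(\cdot|c_{0})$, and argue in two stages. First, the transportation-cost summand at $T^{*}$ is bounded below by $\int W_{p}^{p}(\mathbb{P}_{X|C}(\cdot|c_{0}), \nu_{c_{0}, c_{1}}; d_{\text{Enc}})\, d\mathbb{P}_{C}(c_{0}) d\mathbb{P}_{C}(c_{1})$, with equality iff $T^{*}(\cdot, c_{0}, c_{1})$ is the OT map into $\nu_{c_{0}, c_{1}}$ a.e. Next, since $\mathbb{P}_{(X, C)} \mathbb{P}_{C}$ and $\mathbb{P}_{(T^{*}(X_{0}, C_{0}, C_{1}), C_{1}, C_{0})}$ share a common independent $\mathbb{P}_{C} \mathbb{P}_{C}$ marginal on the last two coordinates, the matching summand---by restricting to couplings preserving this marginal and comparing $\|\cdot\|$ with $d_{\text{Enc}}$ on the compact supports through $(A2)$---controls an integrated $W_{p}$ between $\nu_{c_{0}, c_{1}}$ and $\mathbb{P}_{X|C}(\cdot|c_{1})$ on the $\mathcal{X}$-coordinate. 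Combining these with the triangle inequality $W_{p}(\mathbb{P}_{X|C}(\cdot|c_{0}), \mathbb{P}_{X|C}(\cdot|c_{1})) \leq W_{p}(\mathbb{P}_{X|C}(\cdot|c_{0}), \nu_{c_{0}, c_{1}}) + W_{p}(\nu_{c_{0}, c_{1}}, \mathbb{P}_{X|C}(\cdot|c_{1}))$ and Minkowski's inequality in the outer $L^{p}(d\mathbb{P}_{C}(c_{0}) d\mathbb{P}_{C}(c_{1}))$ yields that the sum of the first two summands at $T^{*}$ is at least the value of the objective at $T^{\dag}$. Equality forces $\nu_{c_{0}, c_{1}} = \mathbb{P}_{X|C}(\cdot|c_{1})$ a.e.\ (eliminating the matching term) and $T^{*}(\cdot, c_{0}, c_{1}) = T^{\dag}_{c_{0} \to c_{1}}$ a.s.\ by uniqueness of OT maps under $(A2)$; the cycle-consistency summand at $T^{*}$ then vanishes automatically by the same argument as for $T^{\dag}$.

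The hard part will be establishing the combined lower bound on the first two summands. The first summand is an $L^{p}$-averaged $d_{\text{Enc}}^{p}$-cost, whereas the second is a single $W_{p}$ under the Euclidean metric on $\mathcal{X} \times \mathcal{C}^{2}$, so the $L^{p}$-versus-$L^{1}$ scaling and the two distinct ground metrics must be reconciled through Minkowski-type inequalities and a controlled comparison of $\|\cdot\|$ with $d_{\text{Enc}}$ via the continuous encoder on compact supports. In particular, verifying that the optimal coupling in the matching $W_{p}$ can be taken to preserve the common $(C_{1}, C_{0})$ marginal---so that the disintegration that passes the discrepancy to the $\mathcal{X}$-conditionals applies cleanly---requires care. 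Once this combined lower bound is in hand, uniqueness of OT under $(A2)$ closes the proof.
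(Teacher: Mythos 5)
Your first half (evaluating the objective at $T^{\dag}$) is correct and in fact more detailed than the paper's: the paper simply asserts $L_{\text{MatchData}}(T^{\dag})=L_{\text{Cycle}}(T^{\dag})=0$ "by definition of the optimal transport map," whereas you justify the vanishing of the cycle term through Lemma \ref{lem1}, $(A1)$--$(A2)$, and the a.s.\ invertibility of optimal maps for $p$-th power costs. The divergence, and the genuine gap, is in the uniqueness half. The paper does not attempt your combined lower bound at all: it argues that any minimizer $T$ must itself satisfy $L_{\text{MatchData}}(T)=L_{\text{Cycle}}(T)=0$ and $L_{\text{TransportCost}}(T)=L_{\text{TransportCost}}(T^{\dag})$ --- i.e., it treats the second and third terms as exact constraints --- and then concludes $T=T^{\dag}$ a.s.\ from the uniqueness of the optimal transport map guaranteed by $(A2)$. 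Your proposal instead tries to trade the matching penalty against the transport cost via the triangle inequality, and you openly flag this as "the hard part" without completing it. That is the crux of the theorem, so the proof is not done.

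Moreover, the specific route you sketch for that step runs into a concrete obstruction. You want the matching summand $W_{p}(\mathbb{P}_{(X,C)}\mathbb{P}_{C}, \mathbb{P}_{(T^{*}(X_{0},C_{0},C_{1}),C_{1},C_{0})};\|\cdot\|)$ to dominate an integrated conditional Wasserstein distance between $\nu_{c_{0},c_{1}}$ and $\mathbb{P}_{X|C}(\cdot|c_{1})$, by "restricting to couplings preserving the common $(C_{1},C_{0})$ marginal." But restricting the coupling class can only \emph{increase} the infimum: the unconstrained $W_{p}$ between the joints is a lower bound for the label-preserving (conditional sub-coupling) infimum, not an upper bound --- this is precisely the content of Theorem \ref{thm1} and Example \ref{ex1}, which show the conditional sub-coupling is in general a proper subset of $\Pi(\mathbb{P}_{X},\mathbb{P}_{Y})$. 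So the matching term does not, by itself, control the per-$(c_{0},c_{1})$ discrepancy in the direction you need. You would additionally need a quantitative comparison between $\|\cdot\|$ on $\mathcal{X}$ and $d_{\text{Enc}}$, which $(A1)$--$(A3)$ do not supply ($(A2)$ concerns absolute continuity of the representation laws, not continuity of $\text{Enc}$), and even then the resulting bound would only rule out competing minimizers when $\lambda_{\text{MatchData}}$ is large enough for the penalty to be exact. The paper avoids all of this by asserting the constraint-satisfaction of any minimizer outright; if you want a rigorous argument along your lines, you must either adopt that assertion or add hypotheses that make the penalties exact.
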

In implementation, we apply WGAN with gradient penalty \citep{gulrajani2017improved} for the second term, employ an auxiliary regressor to enhance the visual quality and diversity of generated samples and add a reconstruction error, $\int d(x, T(x, c, c)) d\mathbb{P}_{(X,C)}(x,c)$, for a regularization purpose.

With the learned encoder, generator, and transport map, we can generate samples whose conditional distributions are on the Wasserstein geodesic on which the information independent of domain labels is minimally changed.

\subsection{Relationship with Other Methods}\label{sec4.4}
If either of the two steps of our training algorithm is dropped and an appropriate modification is made, our algorithm reduces to either cAAE or CycleGAN.

Suppose the second step is dropped. The first step alone learns the encoder and generator pair through minimizing objective in Equation (\ref{obj:autoencoder}). If we remove $\mathbb{P}_{C}$ in the second term and drop the third term, objective in Equation (\ref{obj:autoencoder}) reduces to that of cAAE. Thus it is the $\mathbb{P}_C$ in the second term that enables the proposed algorithm to learn the features independent of domain labels.

Now suppose the first step is removed. Since the second step optimizes objective in Equation (\ref{eqn:transport}) that includes $d_{\text{Enc}}$, it inherently depends on the first step. However, if we redefine the distance not to depend on the encoder, the second step can be operated independently. By not inheriting the $d_{\text{Enc}}$, one no longer learns a transport map while minimally changing contents independent of conditioning variables. If we replace $d_{\text{Enc}}$ in the first term with the $l_{2}$ norm and $W_{p}$ in the second term with the Jensen-Shannon divergence, and if there are only two observed domain labels $c_{0}$ and $c_{1}$, objective in Equation (\ref{eqn:transport}) reduces to that of CycleGAN. Inheriting the results from Theorem \ref{thm6}, we can establish the following corollary.
\begin{corollary}\label{cor:cycle}
Let $T_{c_{i} \to c_{j}}^{\dag}$ be the optimal transport map from the conditional distribution given domain label  $c_{i}$ to that given $c_{j}$ w.r.t. $\| \cdot \|$, where $(i, j)=(0, 1)$ or  $(1, 0)$. Then, $T^{\dag}(x_{i}, c_{i}, c_{j}):= T_{c_{i} \to c_{j}}^{\dag}(x_{i})$ with probability $1$ w.r.t. $\mathbb{P}_{X|C}(x_{i}|c_{i})$ for $(i, j)=(0, 1), (1, 0)$ is the unique minimizer of the objective of CycleGAN.
\end{corollary}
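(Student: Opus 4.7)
The plan is to specialize the proof of Theorem~\ref{thm6} to the CycleGAN setting described in the paragraph preceding the corollary. After replacing $d_{\text{Enc}}$ by $\|\cdot\|$, replacing $W_{p}$ by $\mathcal{D}_{\text{JS}}$ in the second term, and restricting $\mathcal{C}$ to $\{c_{0}, c_{1}\}$, the reduced objective decomposes into three parts that mirror those of Equation~(\ref{eqn:transport}): a transport cost measured in $\|\cdot\|$, a divergence term that enforces a push-forward constraint, and the cycle-consistency penalty. The strategy is therefore to verify that each of the three structural arguments used in Theorem~\ref{thm6} carries over verbatim, with the Euclidean cost and JS divergence in place of their $d_{\text{Enc}}$ and $W_{p}$ counterparts.

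First, I would argue that the JS divergence term plays exactly the same role as $W_{p}$ did in Theorem~\ref{thm6}: both functionals are non-negative and vanish if and only if the joint distribution of transported samples coincides with the joint distribution of real samples. Consequently, any minimizer must satisfy $\mathbb{P}_{(X, c_{j})|C}(\cdot|c_{j}) = \mathbb{P}_{(T(X, c_{i}, c_{j}), c_{j})|C}(\cdot|c_{i})$ for each $(i,j) \in \{(0,1),(1,0)\}$, which is exactly the Monge marginal constraint. Next, among all maps meeting this constraint, the transport cost $\int \|x_{i} - T(x_{i}, c_{i}, c_{j})\|^{p}\, d\mathbb{P}_{X|C}(x_{i}|c_{i})$ is minimized uniquely (up to $\mathbb{P}_{X|C}$-null sets) by $T^{\dag}_{c_{i} \to c_{j}}$ by the very definition of the optimal transport map, with existence and uniqueness guaranteed by Brenier-type theorems under the compactness and absolute continuity hypotheses inherited from $(A2)$. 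Finally, the cycle-consistency term $\|x_{0} - T(T(x_{0}, c_{0}, c_{1}), c_{1}, c_{0})\|$ vanishes at $T^{\dag}$ because the inverse of an optimal transport map with respect to a strictly convex cost is the optimal transport map in the reverse direction $\mathbb{P}_{X|C}$-almost everywhere.

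Combining the three observations, the pair $(T^{\dag}_{c_{0} \to c_{1}}, T^{\dag}_{c_{1} \to c_{0}})$ simultaneously attains the infimum of each term---zero for the divergence and cycle parts, and the Monge minimum for the transport cost---so it is a global minimizer of the sum, and uniqueness follows from the uniqueness of optimal transport maps almost surely with respect to $\mathbb{P}_{X|C}$. The main obstacle is justifying the inverse-map identity $(T^{\dag}_{c_{0} \to c_{1}})^{-1} = T^{\dag}_{c_{1} \to c_{0}}$ at the level of generality needed: for $p = 2$ with Euclidean cost this is immediate from Brenier's theorem, while for general $p > 1$ it requires invoking strict convexity of $\|\cdot\|^{p}$ together with the absolutely continuity of the relevant conditional distributions. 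Once this step is secured, the argument of Theorem~\ref{thm6} transfers essentially line by line to the Euclidean, two-domain, JS-divergence regime, yielding the corollary.
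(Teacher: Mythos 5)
Your proposal is correct and follows essentially the same route as the paper, which proves the corollary simply by ``inheriting'' the argument of Theorem~\ref{thm6}: show the divergence and cycle terms vanish at $T^{\dag}$ (the latter because the reverse optimal map is the a.e.\ inverse of the forward one), that the transport-cost term is minimized by $T^{\dag}$ by definition, and that any minimizer must therefore satisfy the marginal constraint with minimal cost and hence coincide with $T^{\dag}$ almost surely. One minor caveat: existence and uniqueness of the Euclidean-cost optimal map requires absolute continuity of $\mathbb{P}_{X|C}(\cdot|c_{i})$ on $\mathcal{X}$ itself, which is not supplied by $(A2)$ (that condition concerns the encoded distributions), though the corollary's hypothesis already posits the maps $T^{\dag}_{c_{i}\to c_{j}}$, so this does not affect the argument.
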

\citet{lu2019guiding} specifically point out that theoretically, there is no claim on the detailed properties of the mapping established by CycleGAN. On the other hand, \citet{korotin2019wasserstein} used the cycle consistency term to promote inverse relations between optimal transport maps, albeit with an objective based on the dual form of $2$-Wasserstein distance, which distinguishes it from CycleGAN's objective. Because CycleGAN can be viewed a special case of our proposed method, now it can be interpreted in terms of optimal transport theory, which has not been established.

\begin{figure}[h]\label{fig:generation}
\centerline{
  \begin{minipage}[b]{0.5\textwidth}
      \begin{center}
        \includegraphics[width=.95\columnwidth]{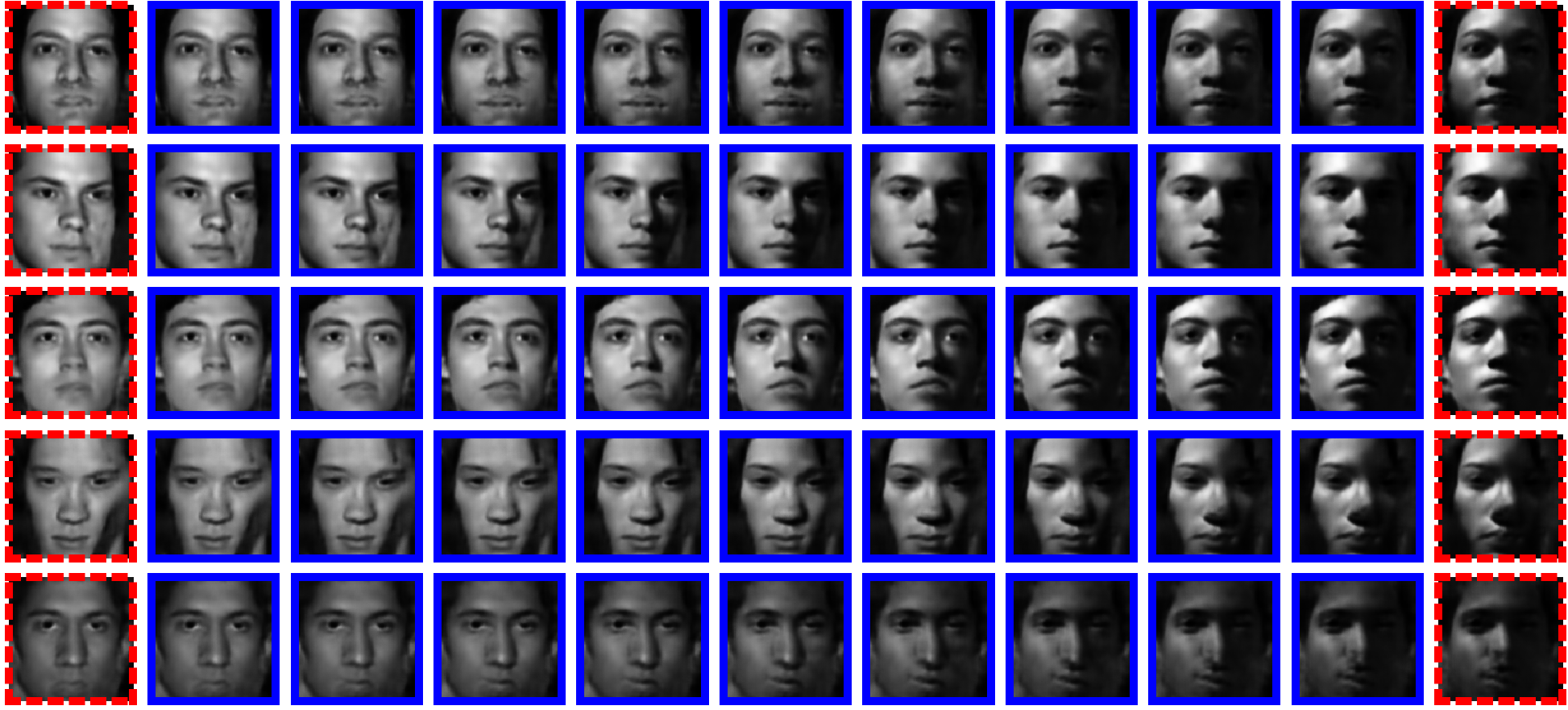}
      \end{center}
  \end{minipage}
  \begin{minipage}[b]{0.5\textwidth}
      \begin{center}
        \includegraphics[width=0.95\columnwidth]{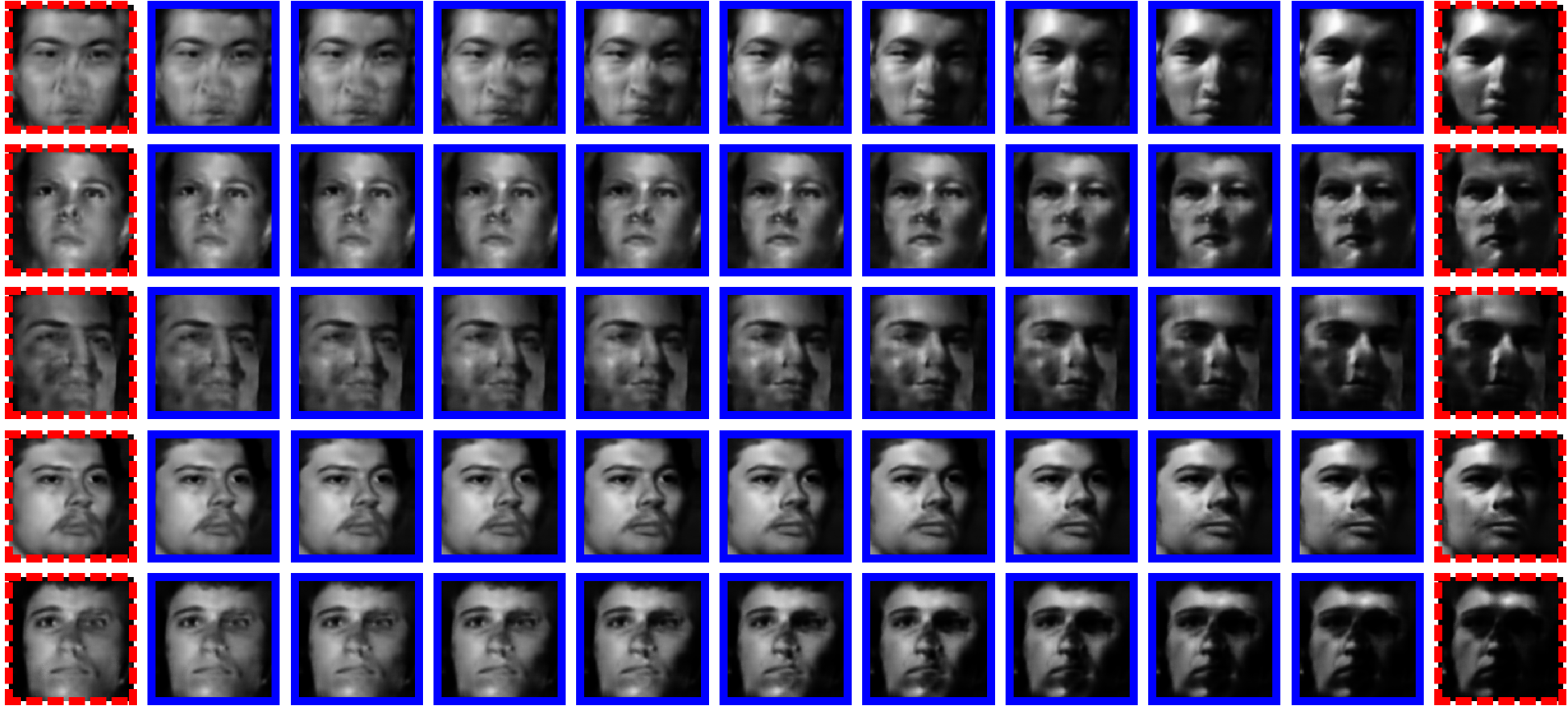}      
      \end{center}
  \end{minipage}
  }
    \caption{Conditional generation results by the proposed method (left) and cAAE (right). The proposed method produces face images with clearer eyes, noses, and mouths than baselines. For each method, the leftmost and rightmost columns show generation results for observed domains and intermediate columns show results for unobserved intermediate domains.}
\end{figure}

\begin{figure}[t]\label{fig:transportation}
\centerline{
  \begin{minipage}[b]{0.34\textwidth}
      \begin{center}
        \includegraphics[width=1.0\columnwidth]{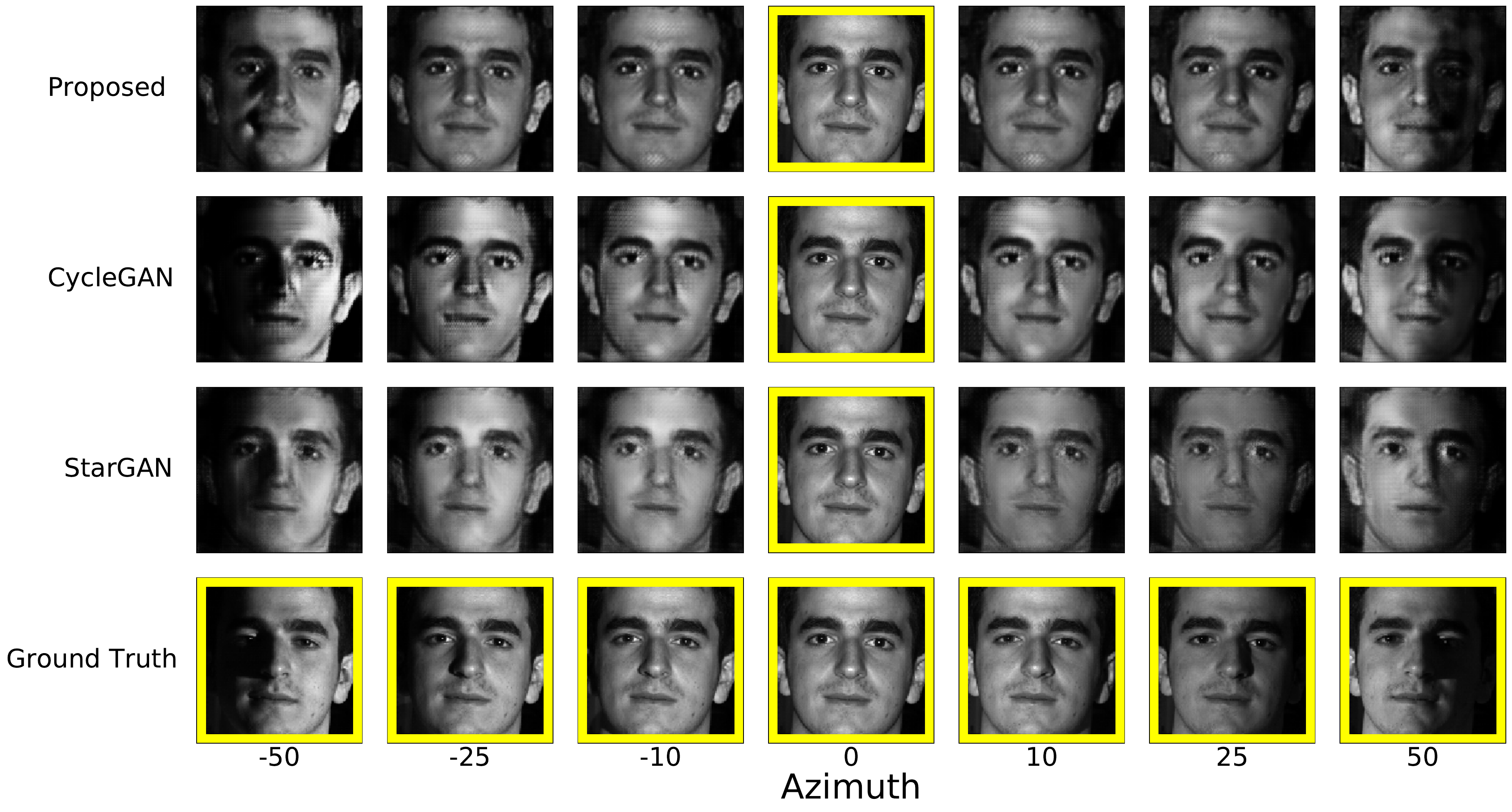}
      \end{center}
  \end{minipage}
  
  \begin{minipage}[b]{0.30\textwidth}
      \begin{center}
        \includegraphics[width=1.0\columnwidth]{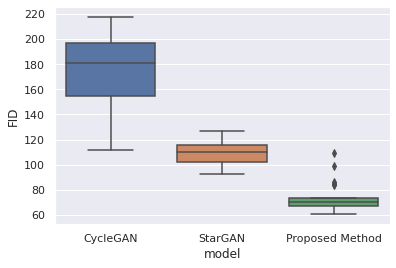}      
      \end{center}
  \end{minipage}
  
  \begin{minipage}[b]{0.36\textwidth}
      \begin{center}
        \includegraphics[width=1.0\columnwidth]{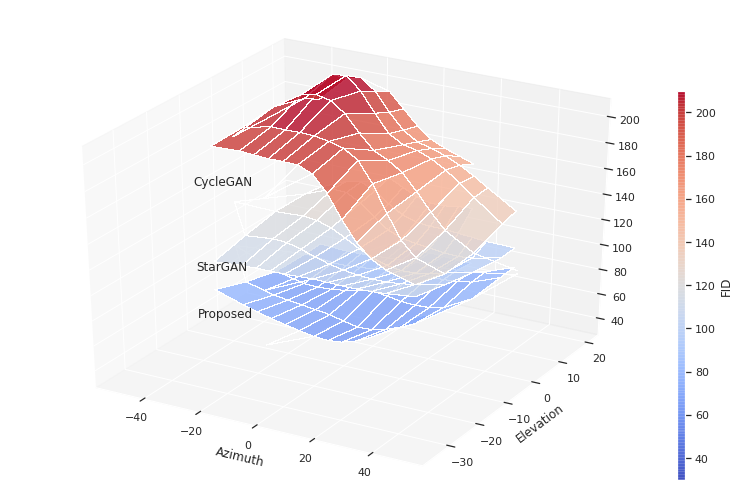}      
      \end{center}
  \end{minipage}
  }
    \caption{Comparison of transportation results by various methods. \textbf{Left:} A visualization of transportation results. The proposed method gradually casts a shadow reflecting the three-dimensional structure of the nose and mouth in the face, which makes the outcomes visually sharper and more plausible than baselines. For the first three rows, the middle column shows the real image and other columns show transportation results. \textbf{Middle:} Box plots of the FID ($\downarrow$) scores from various transport maps. The proposed method outperforms baselines. Means of FID scores of CycleGAN, StarGAN, and the proposed method with standard deviation is $174.3$ $(28.9)$, $109.7$ $(9.7)$, and $\textbf{74.2}$ $(10.9)$, respectively. \textbf{Right:} Surfaces of the FID ($\downarrow$) scores from various transport maps as functions of azimuth and elevation. In all combinations of light conditions, the proposed method outperforms baselines.}
\end{figure}

\section{Experiments} \label{sec5}
\subsection{Experimental Setting}
We conduct experiments on the Extended Yale Face Database B (Extended Yale-B) \citep{georghiades2001few} dataset. The Extended Yale-B dataset consists of face images from $29$ subjects, $10$ poses, and $64$ light conditions. We consider light conditions as domain labels. For light conditions, the azimuth and elevation of the light source are provided. The total number of images is about $15,000$.

We split the Extended Yale-B dataset by subjects to construct training, validation, and test sets. The number of subjects is $21$, $4$, and $4$ for training, validation, and test, respectively. For a data pre-processing, we apply a face detection algorithm proposed by \citet{viola2001rapid} to crop the face part.\footnote{We use the official code in OpenCV \citep{bradski2000opencv}.} The range of images is scaled to $[0, 1]$ and the horizontal flip with probability $0.5$ is applied during training.

For baselines, we consider cAAE, CycleGAN, and StarGAN \citep{choi2018stargan}. As described in Section \hyperref[sec4.5]{4.5}, our encoder and generator pair can reduce to cAAE and our transport map can reduce to CycleGAN. StarGAN is a state-of-the-art Data-to-Data Translation method and we add it as a baseline for the transport map. Algorithms of CycleGAN and StarGAN in their published works are inapplicable to continuous domain labels, so we add source and target light conditions as a input of CycleGAN and change the auxiliary classifier of StarGAN to regressor for experiments.

In all methods, architectures of the encoder and generator networks are adopted from DCGAN \citep{radford2015unsupervised}, and the architecture of the transport map is adopted from StarGAN. Architectures are modified to concatenate light conditions to latent variables, and we control the size of the networks for a fair comparison. For both the proposed method and baselines, we train the encoder and generator pair for $100,000$ iterations with batch size of $32$, and train the transport map for $50,000$ iterations with batch size of $16$. We use the Adam \citep{kingma2014adam} optimizer and set the initial learning rate to $0.0002$ and to linearly decrease to $0$ for the encoder and generator pair and to $0.0001$ for the transport map. Implementation details including architectures are provided in Appendix \ref{appB}.
\footnote{The implementation code is provided at the following link:\\ \url{http://github.com/kyg0910/Wasserstein-Geodesic-Generator-for-Conditional-Distributions}.}

For evaluation, we use the Fr{\'e}chet inception distance (FID) \citep{heusel2017gans}, a dissimilarity measure between distributions utilizing features of Inception-v3 \citep{szegedy2016rethinking}. We evaluate FID between distributions of the real and generated face images.

\subsection{Results}
We compare the proposed method and baselines with three tasks: (i) Conditional generation for unobserved intermediate domains, (ii) Data-to-Data Translation, and (iii) Latent interpolation with the real data and their translation results. Figures \hyperref[fig:generation]{1}, \hyperref[fig:transportation]{2}, and \hyperref[fig:latentinterpolation_real]{3} present results for tasks (i), (ii), and (iii), respectively. The (ii) and (iii) evaluate transport maps when an image is given. We reiterate CycleGAN and StarGAN cannot generate samples while the proposed method can.

Figure \hyperref[fig:generation]{1} presents the conditional generation results for unobserved intermediate domains. We compare the proposed method and cAAE. The proposed method produces face images with clearer eyes, noses, and mouths than baselines. For each method, the leftmost and rightmost columns show generation results for observed domains whose values of (azimuth, elevation) are $(0,0)$ and $(70,45)$, respectively, and intermediate columns show results for unobserved intermediate domains. For the proposed method, as described in Section \hyperref[sec4.2]{4.2}, we first generate samples in the leftmost column, then transport the samples to the domain of the rightmost column, and finally apply latent interpolation where $t$ increases from $0$ to $1$ at equal intervals. For cAAE, we fix latent variable $Z$ for each row and interpolate domain label $C$ with equal spacing.

Figure \hyperref[fig:transportation]{2} compares the transportation results. We compare the proposed method, CycleGAN, and StarGAN. The leftmost panel visualizes the transportation results. The proposed method gradually casts a shadow reflecting the three-dimensional structure of the nose and mouth in the face, which makes the outcomes visually sharper and more plausible than baselines. The bottom row shows face images from various azimuth for a fixed subject, pose, and elevation. The elevation is $0$ and azimuth values are shown at the bottom. For the first three rows, the middle column shows the real data in the fourth row and other columns show transportation results by various methods for observed domains corresponding to each column.
Each method translates the real data in the middle column to other observed domains. The middle panel shows the box plots of FID scores from various transport maps. The lower values are better and the proposed method outperforms baselines. The means of the FID scores of CycleGAN,  StarGAN, and the proposed method with standard deviation are 174.3 (28.9), 109.7 (9.7), and 74.2 (10.9), respectively. To calculate the FID scores, we transport real face images from (azimuth, elevation) of $(0,0)$ to other observed domains and evaluate FID between transportation results and real images for every domains. The absolute value of azimuth and elevation is less than or equal to $50$ and $35$, respectively, to remove outliers from extreme light conditions. The rightmost panel presents the surfaces of the FID scores from various transport maps as functions of azimuth and elevation. In all combinations of light conditions, the proposed method outperforms baselines. We estimate scores for unobserved domains by interpolating scores from adjacent observed domains to draw the plot. 
\begin{figure}[h]\label{fig:transportation_real}
\begin{center}
\centerline{\includegraphics[width=0.8\columnwidth]{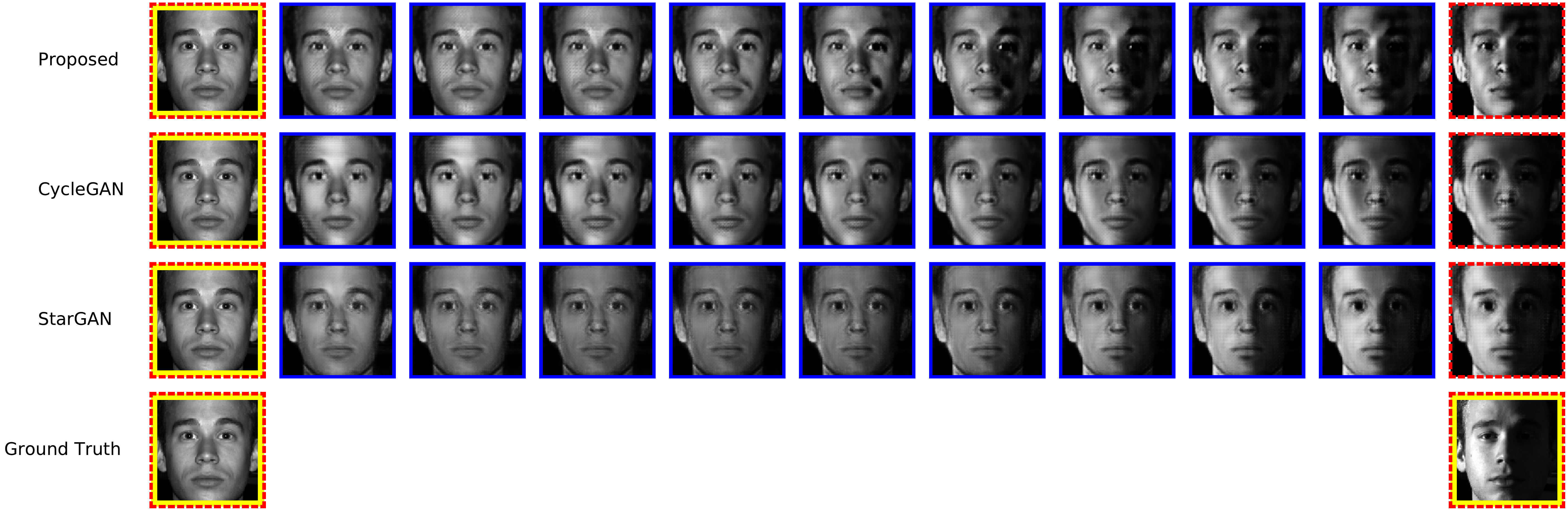}}
\end{center}
    \caption{A visualization of latent interpolation with the real data and their translation results by various methods. The bottom row shows real images from two observed domains. From the first to third rows, the leftmost column shows the ground-truth, the rightmost column shows transportation results of the ground-truth, and intermediate columns show latent interpolation results for unobserved intermediate domains.}
\label{fig:latentinterpolation_real}
\end{figure}
Figure \hyperref[fig:latentinterpolation_real]{3} presents the results of latent interpolation on unobserved intermediate domains with a real image and its translation result. We compare the proposed method, CycleGAN, and StarGAN. We fix the encoder and generator pair for all methods, and only change the transport map. The bottom row shows the ground-truth images of fixed subject and pose from two observed domains with (azimuth, elevation) of $(0, 0)$, for the leftmost, and $(70, 0)$, for the rightmost.
In the top three rows, the leftmost column shows the ground-truth, the rightmost column shows the transportation results of the ground-truth by various transport maps, and the intermediate columns show latent interpolation results for unobserved intermediate domains. As in Figure \hyperref[fig:transportation]{2}, the outputs of the proposed method are visually sharper and more plausible than baselines.

\section{Conclusion}\label{sec6}
We founded a theoretical framework for the space of conditional distributions given domain labels and proposed the \textit{Wasserstein geodesic generator}, a novel conditional generator that learns the Wasserstein geodesic. We derived a tractable upper bound of the Wasserstein distance between conditional distributions to learn those given observed domains and applied optimal transport maps between them to generate a constant-speed Wasserstein geodesic of the conditional distributions given \emph{unobserved} intermediate domains. Our work is the first to generate samples whose conditional distributions are fully characterized by a metric space w.r.t. a statistical distance. Experiments on face images with light conditions as domain labels demonstrate the efficacy of the proposed method with visually more plausible results and better FID scores than baselines.


\acks{The work of Young-geun Kim was supported by the National Research Foundation (NRF) of Korea under Grant NRF-2020R1A2C1A01011950 and by the National Institute of Mental Health of U.S. under Grant R01MH124106. The work of Kyungbok Lee and Myunghee Cho Paik was supported by the NRF under Grant NRF-2020R1A2C1A01011950.}


\newpage

\appendix
\section{Theoretical Results with Proofs}\label{appA}
\subsection{Further Discussions and Proofs about the Conditional Sub-coupling}\label{appA.1}
In this section, we discuss properties of the conditional sub-coupling, $\Pi(\mathbb{P}_{(X,C)}, \mathbb{P}_{(Y, C)}|\mathbb{P}_{C})$, and provide proofs. The following proposition describes properties of the conditional sub-coupling and corresponding properties of the upper bound of the expected Wasserstein distance between conditional distributions in Theorem \ref{thm1}.

\begin{proposition}\label{prop1} For any $\mathbb{P}_{(X, C)}$ and $\mathbb{P}_{(Y, C)}$, $\Pi(\mathbb{P}_{(X,C)}, \mathbb{P}_{(Y,C)}|\mathbb{P}_{C})$ is nonempty, included in $\Pi(\mathbb{P}_{X}, \mathbb{P}_{Y})$, and equal to $\Pi(\mathbb{P}_{X}, \mathbb{P}_{Y})$ if $(\mathbb{P}_{X|C}, \mathbb{P}_{Y|C})=(\mathbb{P}_{X}, \mathbb{P}_{Y})$. These imply that the RHS in Theorem \ref{thm1},
\begin{equation}\label{eqn:cond_wasserstein}
    \bigg( \underset{\pi ^{*}\in \Pi(\mathbb{P}_{(X,C)}, \mathbb{P}_{(Y, C)}|\mathbb{P}_{C})}{\inf}\int d^{p}(x, y)d\pi^{*}(x,y) \bigg)^{1/p},
\end{equation}
is finite, greater than or equal to $W_{p}(\mathbb{P}_{X}, \mathbb{P}_{Y})$, and equal to $W_{p}(\mathbb{P}_{X}, \mathbb{P}_{Y})$ if $(\mathbb{P}_{X|C}, \mathbb{P}_{Y|C})=(\mathbb{P}_{X}, \mathbb{P}_{Y})$.
\end{proposition}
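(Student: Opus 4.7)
The plan is to prove the three set-theoretic claims about $\Pi(\mathbb{P}_{(X,C)}, \mathbb{P}_{(Y,C)}|\mathbb{P}_{C})$ directly, then derive the three claims about the integral as immediate consequences via monotonicity of infima over nested feasible sets.

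First I would establish \emph{nonemptiness}. For each $c \in \mathcal{C}$, the product measure $\pi^{*}(\cdot|c) := \mathbb{P}_{X|C}(\cdot|c) \otimes \mathbb{P}_{Y|C}(\cdot|c)$ lies in $\Pi(\mathbb{P}_{X|C}(\cdot|c), \mathbb{P}_{Y|C}(\cdot|c))$. Writing $\pi^{*} := \int \pi^{*}(\cdot|c)\, d\mathbb{P}_{C}(c)$ gives an element of the conditional sub-coupling, so the set is nonempty. A measurability issue appears here (the map $c \mapsto \pi^{*}(\cdot|c)$ must be measurable to define the outer integral), but for the product coupling this follows from measurability of the regular conditional distributions.

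Next I would show \emph{inclusion} $\Pi(\mathbb{P}_{(X,C)}, \mathbb{P}_{(Y,C)}|\mathbb{P}_{C}) \subseteq \Pi(\mathbb{P}_{X}, \mathbb{P}_{Y})$. For any $\pi^{*} = \int \pi^{*}(\cdot|c)\, d\mathbb{P}_{C}(c)$ in the conditional sub-coupling and any measurable $A \subseteq \mathcal{X}$, the first marginal satisfies
\begin{equation*}
\pi^{*}(A \times \mathcal{X}) = \int \pi^{*}(A \times \mathcal{X}|c)\, d\mathbb{P}_{C}(c) = \int \mathbb{P}_{X|C}(A|c)\, d\mathbb{P}_{C}(c) = \mathbb{P}_{X}(A),
\end{equation*}
since $\pi^{*}(\cdot|c) \in \Pi(\mathbb{P}_{X|C}(\cdot|c), \mathbb{P}_{Y|C}(\cdot|c))$ guarantees the correct conditional marginals. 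The computation for the second marginal is symmetric. For the \emph{converse} under the hypothesis $(\mathbb{P}_{X|C}, \mathbb{P}_{Y|C})=(\mathbb{P}_{X}, \mathbb{P}_{Y})$, given any $\pi \in \Pi(\mathbb{P}_{X}, \mathbb{P}_{Y})$, the constant assignment $\pi^{*}(\cdot|c) := \pi$ lies in $\Pi(\mathbb{P}_{X|C}(\cdot|c), \mathbb{P}_{Y|C}(\cdot|c)) = \Pi(\mathbb{P}_{X}, \mathbb{P}_{Y})$ for every $c$, and $\int \pi\, d\mathbb{P}_{C}(c) = \pi$, giving the reverse inclusion.

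For the three claims about the integral in Equation (\ref{eqn:cond_wasserstein}), I would invoke compactness of $\mathcal{X}$ (hence boundedness of $d^{p}$) together with nonemptiness of the feasible set to conclude finiteness; monotonicity of the infimum over the nested sets $\Pi(\mathbb{P}_{(X,C)}, \mathbb{P}_{(Y,C)}|\mathbb{P}_{C}) \subseteq \Pi(\mathbb{P}_{X}, \mathbb{P}_{Y})$ to conclude the lower bound is at least $W_{p}(\mathbb{P}_{X}, \mathbb{P}_{Y})$; and set equality under the stated hypothesis to force equality of the infima. The main obstacle I anticipate is purely a measurability bookkeeping issue in making $c \mapsto \pi^{*}(\cdot|c)$ into a bona fide measurable family that can be integrated against $\mathbb{P}_{C}$; once that is handled via standard disintegration or regular conditional probability arguments on Polish spaces, the rest of the proof is a short chain of elementary identities and inclusion-based inequalities.
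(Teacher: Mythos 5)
Your proposal is correct and follows essentially the same route as the paper's proof: nonemptiness via the product-coupling family $\mathbb{P}_{X|C}(\cdot|c)\otimes\mathbb{P}_{Y|C}(\cdot|c)$, inclusion by computing the marginals of the mixture, equality under the hypothesis via the constant family $\pi^{*}(\cdot|c)=\pi$, and the corresponding claims about the infimum via monotonicity over nested feasible sets. Your extra remarks---flagging the measurability of $c\mapsto\pi^{*}(\cdot|c)$ and explicitly invoking compactness of $\mathcal{X}$ for finiteness---are slightly more careful than the paper's write-up but do not change the argument.
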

\begin{proof}
First, the conditional sub-coupling $\Pi(\mathbb{P}_{(X,C)}, \mathbb{P}_{(Y, C)}|\mathbb{P}_{C})$ contains
$\int\mathbb{P}_{X|C}(\cdot|c) \times \mathbb{P}_{Y|C}(\cdot|c) d\mathbb{P}_{C}(c)$, so it is nonempty. Next, for any $\pi^{*}_{(X,Y)} \in \Pi(\mathbb{P}_{(X,C)}, \mathbb{P}_{(Y, C)}|\mathbb{P}_{C})$, there is $\{ \pi^{*}_{(X,Y)|C}(\cdot|c) \}_{c \in \mathcal{C}}$ such that $\pi^{*}_{(X,Y)|C}(\cdot|c) \in \Pi(\mathbb{P}_{X|C}(\cdot|c), \mathbb{P}_{Y|C}(\cdot|c))$ and $\pi^{*}_{(X,Y)}(\cdot)=\int\pi^{*}_{(X,Y)|C}(\cdot|c)d\mathbb{P}_{C}(c)$.
This implies $\int\pi^{*}_{(X,Y)}(x, y)dx=\int\pi^{*}_{(X,Y)|C}(x,y|c)dxd\mathbb{P}_{C}(c)=\int\mathbb{P}_{Y|C}(y|c)d\mathbb{P}_{C}(c)=\mathbb{P}_{Y}(y)$ and $\int\pi^{*}_{(X,Y)}(x, y)dy = \mathbb{P}_{X}(x)$. Thus, $\Pi(\mathbb{P}_{(X,C)}, \mathbb{P}_{(Y, C)}|\mathbb{P}_{C}) \subseteq \Pi(\mathbb{P}_{X}, \mathbb{P}_{Y})$ and it implies that Equation (\ref{eqn:cond_wasserstein}) is greater than or equal to $W_{p}(\mathbb{P}_{X}, \mathbb{P}_{Y})$. Finally, if $(\mathbb{P}_{X|C}, \mathbb{P}_{Y|C})=(\mathbb{P}_{X}, \mathbb{P}_{Y})$, we have $\pi(\cdot) = \int \pi(\cdot)d\mathbb{P}_{C}(c)$ and $\pi \in \Pi(\mathbb{P}_{X|C}(\cdot|c), \mathbb{P}_{Y|C}(\cdot|c))$ for all $c \in \mathcal{C}$ and $\pi \in \Pi(\mathbb{P}_{X}, \mathbb{P}_{Y})$. Thus, $\Pi(\mathbb{P}_{X}, \mathbb{P}_{Y}) \subseteq \Pi(\mathbb{P}_{(X,C)}, \mathbb{P}_{(Y, C)}|\mathbb{P}_{C})$ and it concludes the proof.
\end{proof}
The following proposition provides a detailed discussion about the relation between $\Pi(\mathbb{P}_{X}, \mathbb{P}_{Y})$ and $\Pi(\mathbb{P}_{(X,C)}, \mathbb{P}_{(Y, C)}|\mathbb{P}_{C})$, including non-Gaussian cases, by deriving a necessary condition for a coupling from $\Pi(\mathbb{P}_{X}, \mathbb{P}_{Y})$ to be in the conditional sub-coupling.
\begin{proposition}\label{prop2}
For given two distributions $\mathbb{P}_{(X, C)}$ and $\mathbb{P}_{(Y, C)}$ and a distribution $\pi \in \Pi(\mathbb{P}_{X}, \mathbb{P}_{Y})$, we denote the covariance matrix of $\pi$ by $\begin{pmatrix} \Sigma_{XX} & \Sigma^{\pi}\\ (\Sigma^{\pi})^{T} & \Sigma_{YY} \end{pmatrix}$.
Then, \vskip -0.3cm
\begin{equation}\label{positive_semi_definite}
    \begin{pmatrix} \Sigma_{XX} & \Sigma^{\pi} & \Sigma_{XC} \\ (\Sigma^{\pi})^{T} & \Sigma_{YY} & \Sigma_{YC}\\ \Sigma_{XC}^{T} & \Sigma_{YC}^{T} & \Sigma_{CC} \end{pmatrix} \text{is positive semi-definite}
\end{equation}
if $\pi \in \Pi(\mathbb{P}_{(X,C)}, \mathbb{P}_{(Y, C)}|\mathbb{P}_{C})$. Furthermore, when $\mathbb{P}_{(X,C)}$, $\mathbb{P}_{(Y,C)}$, and $\pi$ are multivariate Gaussian distributions, Equation (\ref{positive_semi_definite}) holds if and only if $\pi \in\Pi(\mathbb{P}_{(X,C)}, \mathbb{P}_{(Y, C)}|\mathbb{P}_{C})$.
\end{proposition}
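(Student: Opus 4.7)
The plan is to translate membership in the conditional sub-coupling into the existence of a joint law on $(X, Y, C)$ whose covariance matrix is exactly the block matrix appearing in Equation (\ref{positive_semi_definite}). Since the covariance matrix of any probability measure is positive semi-definite, the forward implication then reduces to constructing such a joint law, while the Gaussian reverse implication reduces to showing that a Gaussian joint law with the given block covariance is already an element of the conditional sub-coupling.

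For the forward direction, given $\pi \in \Pi(\mathbb{P}_{(X,C)}, \mathbb{P}_{(Y,C)}|\mathbb{P}_{C})$, I would unpack Definition \ref{def1} to obtain a family $\{\pi^{*}(\cdot|c)\}_{c \in \mathcal{C}}$ of couplings of $(\mathbb{P}_{X|C}(\cdot|c), \mathbb{P}_{Y|C}(\cdot|c))$, and set $\tilde{\pi}(x, y, c) := \pi^{*}(x, y|c)\,\mathbb{P}_{C}(c)$ on $\mathcal{X} \times \mathcal{Y} \times \mathcal{C}$. Routine marginalizations of $\tilde{\pi}$ over $y$, over $x$, and over $c$ give $\mathbb{P}_{(X,C)}$, $\mathbb{P}_{(Y,C)}$, and $\pi$, respectively, by the construction of the conditional sub-coupling. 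Consequently the covariance matrix of $(X, Y, C)^{T}$ under $\tilde{\pi}$ is precisely the block matrix in Equation (\ref{positive_semi_definite}), and is therefore positive semi-definite.

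For the Gaussian reverse direction, I would proceed in the opposite direction: start from the PSD block matrix together with the mean vector $(\mu_{X}, \mu_{Y}, \mu_{C})^{T}$ read off from the given Gaussian marginals (consistency of $\mu_{C}$ is automatic since both $\mathbb{P}_{(X,C)}$ and $\mathbb{P}_{(Y,C)}$ share $\mathbb{P}_{C}$ as the $C$-marginal), and define $\tilde{\pi}$ to be the multivariate Gaussian on $\mathcal{X} \times \mathcal{Y} \times \mathcal{C}$ with these moments. Its $(X,C)$-, $(Y,C)$-, and $(X,Y)$-marginals are Gaussian with first two moments matching those of $\mathbb{P}_{(X,C)}$, $\mathbb{P}_{(Y,C)}$, and $\pi$, so by uniqueness of a Gaussian given its mean and covariance they coincide with these measures. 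Gaussian conditioning then yields a Gaussian $\tilde{\pi}(\cdot, \cdot|c)$ whose $X$- and $Y$-marginals agree with $\mathbb{P}_{X|C}(\cdot|c)$ and $\mathbb{P}_{Y|C}(\cdot|c)$, so that $\tilde{\pi}(\cdot, \cdot|c) \in \Pi(\mathbb{P}_{X|C}(\cdot|c), \mathbb{P}_{Y|C}(\cdot|c))$, and $\pi = \int \tilde{\pi}(\cdot, \cdot|c)\, d\mathbb{P}_{C}(c)$ exhibits $\pi$ as an element of $\Pi(\mathbb{P}_{(X,C)}, \mathbb{P}_{(Y,C)}|\mathbb{P}_{C})$.

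The main technical point is in the Gaussian direction: the argument relies on Gaussianity twice, first to pass from a PSD matrix and a mean vector to an actual joint distribution, and second to read off the required conditional marginals from the covariance structure alone. In the non-Gaussian case the first of these steps already fails, since a PSD block matrix need not determine a joint law with prescribed non-Gaussian marginals on $(X, C)$, $(Y, C)$, and $(X, Y)$; this is precisely why the equivalence is stated only in the Gaussian setting, while the forward implication remains valid in full generality.
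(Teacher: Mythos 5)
Your proposal is correct and follows essentially the same route as the paper: the forward direction constructs the joint law on $(X, Y, C)$ from the family $\{\pi^{*}(\cdot|c)\}_{c}$ and reads off the block covariance, and the Gaussian converse builds the multivariate Gaussian with the prescribed mean and PSD covariance and disintegrates it over $C$. Your write-up merely spells out a few steps the paper leaves implicit (uniqueness of a Gaussian given its first two moments, and the explicit conditioning argument).
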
\vskip -0.2cm
\begin{proof}\label{proof_prop2}
By the definition of $\Pi(\mathbb{P}_{(X,C)}, \mathbb{P}_{(Y, C)}|\mathbb{P}_{C})$, for any $\pi^{*} \in \Pi(\mathbb{P}_{(X,C)}, \mathbb{P}_{(Y, C)}|\mathbb{P}_{C})$, there is $\{\pi^{*}(\cdot|c) \}_{c \in \mathcal{C}}$ such that  $\pi^{*}(\cdot)=\int \pi^{*}(\cdot|c)d\mathbb{P}_{C}(c)$ and $\pi^{*}(\cdot|c) \in \Pi(\mathbb{P}_{X|C}(\cdot|c), \mathbb{P}_{Y|C}(\cdot|c))$ for all $c \in \mathcal{C}$.
This implies the existence of random variables $(X^{*}, Y^{*}, C^{*})$ such that $(X^{*}, Y^{*})|C^{*}=c \sim \pi^{*}(\cdot|c)$ for all $c \in \mathcal{C}$ and $C^{*} \sim \mathbb{P}_{C}$.
Now, we have $(X^{*}, Y^{*}) \sim \pi^{*}$, $(X^{*}, C^{*}) \sim \mathbb{P}_{(X,C)}$, and $(Y^{*}, C^{*}) \sim \mathbb{P}_{(Y,C)}$, so the covariance matrix of $(X^{*}, Y^{*}, C^{*})$ is $\begin{pmatrix} \Sigma_{XX} & \Sigma^{\pi} & \Sigma_{XC} \\ (\Sigma^{\pi})^{T} & \Sigma_{YY} & \Sigma_{YC}\\ \Sigma_{XC}^{T} & \Sigma_{YC}^{T} & \Sigma_{CC} \end{pmatrix}$ and it should be positive semi-definite.
For the final statement in the proposition, when $\mathbb{P}_{(X,C)}$, $\mathbb{P}_{(Y,C)}$, and $\pi$ are multivariate Gaussian distributions and Equation (\ref{positive_semi_definite}) holds, we can define $(X^{**}, Y^{**}, C^{**})$ following a multivarate Gaussian distribution denoted by $N \Bigg( \begin{pmatrix} \mu_{x} \\ \mu_{y} \\ \mu_{c} \end{pmatrix}, \begin{pmatrix} \Sigma_{XX} & \Sigma^{\pi} & \Sigma_{XC} \\ (\Sigma^{\pi})^{T} & \Sigma_{YY} & \Sigma_{YC}\\ \Sigma_{XC}^{T} & \Sigma_{YC}^{T} & \Sigma_{CC} \end{pmatrix} \Bigg)$ where $\mu_{x}$, $\mu_{y}$, and $\mu_{c}$ are means of $\mathbb{P}_{X}$, $\mathbb{P}_{Y}$, and $\mathbb{P}_{C}$, respectively. Since the distribution of $(X^{**}, Y^{**})|C^{**}=c$ is in $\Pi(\mathbb{P}_{X|C}(\cdot|c), \mathbb{P}_{Y|C}(\cdot|c))$ for all $c \in \mathcal{C}$ and $(X^{**}, Y^{**}) \sim \pi$, the proof is concluded.
\end{proof}
That is, all the probability measures $\pi$ that can not be utilized to define the distribution of $(X^{*}, Y^{*}, C^{*})$ whose marginals on $(X^{*}, Y^{*})$, $(X^{*}, C^{*})$, and $(Y^{*}, C^{*})$ are $\pi$, $\mathbb{P}_{(X, C)}$ and $\mathbb{P}_{(Y, C)}$, respectively, are excluded in the conditional sub-coupling. Now, we provide Example \ref{ex1} again and provide the corresponding proof.

\noindent \textbf{Example \ref{ex1}.} \textit{Let $\mathbb{P}_{(X,C)}$ be $N(\mu_{X}, \mu_{C}, \sigma_{X}, \sigma_{C}, \rho_{XC})$ and $\mathbb{P}_{(Y, C)}$ be $N(\mu_{Y}, \mu_{C}, \sigma_{Y}, \sigma_{C}, \rho_{YC})$. Then, $\Pi(\mathbb{P}_{X}, \mathbb{P}_{Y}) \setminus \Pi(\mathbb{P}_{(X,C)}, \mathbb{P}_{(Y,C)}|\mathbb{P}_{C})$ includes $N(\mu_{X}, \mu_{Y}, \sigma_{X}, \sigma_{Y}, \rho^{*})$ if and only if $\lvert \rho^{*} - \rho_{XC}\rho_{YC} \rvert > \sqrt{(1-\rho_{XC}^{2})(1-\rho_{YC}^{2})}$.
}
\begin{proof}\label{proof_ex1}
By Proposition \ref{prop2}, it is sufficient to solve
\begin{equation*}
    \begin{vmatrix} \sigma_{X}^{2} & \rho^{*}\sigma_{X}\sigma_{Y} & \rho_{XC}\sigma_{X}\sigma_{C} \\ \rho^{*}\sigma_{X}\sigma_{Y} & \sigma_{Y}^{2} & \rho_{YC}\sigma_{Y}\sigma_{C} \\ \rho_{XC}\sigma_{X}\sigma_{C} & \rho_{YC}\sigma_{Y}\sigma_{C} & \sigma_{C}^{2} \end{vmatrix} = -\sigma_{X}^{2}\sigma_{Y}^{2}\sigma_{C}^{2}\big((\rho^{*})^{2}-2\rho_{XC}\rho_{YC}\rho^{*} + (\rho_{XC}^{2}+\rho_{YC}^{2}-1) \big) <0.
\end{equation*}
It is a quadratic inequality with respect to $\rho^{*}$, and the solution is $\rho^{*}< \rho_{XC}\rho_{YC}-\sqrt{(1-\rho_{XC}^{2})(1-\rho_{YC}^{2})}$ or $\rho^{*}> \rho_{XC}\rho_{YC}+\sqrt{(1-\rho_{XC}^{2})(1-\rho_{YC}^{2})}$.
\end{proof}
\vskip -0.5cm
\subsection{Proofs of Theoretical Results}\label{appA.2}
\subsubsection{Proof of Theorem \ref{thm1}.}\label{appA.2.1}
For any $\pi^{*} \in \Pi(\mathbb{P}_{(X,C)}, \mathbb{P}_{(Y,C)}|\mathbb{P}_{C})$, there is $\{\pi^{*}(\cdot|c)\}_{c \in \mathcal{C}}$ such that $\pi^{*}(\cdot|c) \in \Pi(\mathbb{P}_{X|C}(\cdot|c), \mathbb{P}_{Y|C}(\cdot|c))$ and $\pi^{*}(\cdot)=\int \pi^{*}(\cdot|c) d\mathbb{P}_{C}(c)$. By the definition of Wasserstein distance, $W_{p}^{p}(\mathbb{P}_{X|C}(\cdot|c), \mathbb{P}_{Y|C}(\cdot|c))\leq \int d^{p}(x, y) d\pi^{*}(x, y|c)$. This implies that $\int W_{p}^{p}(\mathbb{P}_{X|C}(\cdot|c), \mathbb{P}_{Y|C}(\cdot|c)) d\mathbb{P}_{C}(c) \leq \int d^{p}(x, y) d \big(\int \pi^{*}(\cdot|c)d\mathbb{P}_{C}(c) \big)(x, y)=\int d^{p}(x, y)d\pi^{*}(x,y)$. Now, taking infimum over all $\Pi(\mathbb{P}_{(X,C)}, \mathbb{P}_{(Y,C)}|\mathbb{P}_{C})$ concludes the proof.

\subsubsection{Proof of Theorem \ref{thm2}.}\label{appA.2.2}
We first provide a lemma to prove Theorem \ref{thm2}.

\begin{lemma}\label{lem3} For any two distributions $\mathbb{P}_{(X, C)}$ and $\mathbb{P}_{(Z,C)}$ and $G: \mathcal{Z} \times \mathcal{C} \rightarrow \mathcal{X}$, let $\tilde{\mathcal{Q}}(\mathbb{P}_{(X,C)}, \mathbb{P}_{(Z, C)}, G)$ be the set of all probability measures $\pi$ such that there exists $(X^{*}, \tilde{X}^{*}, Z^{*}, C^{*})$ satisfying $(X^{*}, \tilde{X}^{*}) \sim \pi$, $(X^{*}, C^{*}) \sim \mathbb{P}_{(X,C)}$, $\tilde{X}^{*}=G(Z^{*}, C^{*})$, and $(Z^{*}, C^{*}) \sim \mathbb{P}_{(Z,C)}$. Then, $\tilde{\mathcal{Q}}(\mathbb{P}_{(X,C)}, \mathbb{P}_{(Z, C)}, G)=\Pi(\mathbb{P}_{(X,C)}, \mathbb{P}_{(G(Z, C), C)}|\mathbb{P}_{C})$.
\end{lemma}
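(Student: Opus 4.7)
The plan is to establish the set equality by proving both inclusions directly from the definitions, treating $C^*$ as a pivot variable that ties together the two characterizations.

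For the inclusion $\tilde{\mathcal{Q}}(\mathbb{P}_{(X,C)}, \mathbb{P}_{(Z,C)}, G) \subseteq \Pi(\mathbb{P}_{(X,C)}, \mathbb{P}_{(G(Z,C),C)}\mid\mathbb{P}_{C})$, I would start from a generic $\pi \in \tilde{\mathcal{Q}}$ with the guaranteed quadruple $(X^*, \tilde{X}^*, Z^*, C^*)$, and define $\pi^*(\cdot\mid c)$ to be the regular conditional distribution of $(X^*, \tilde{X}^*)$ given $C^* = c$. Then I would verify the three requirements of Definition~\ref{def1}: (i) $\pi = \int \pi^*(\cdot\mid c)\, d\mathbb{P}_C(c)$ by disintegration and the fact that $C^* \sim \mathbb{P}_C$; (ii) the first marginal of $\pi^*(\cdot\mid c)$ coincides with $\mathbb{P}_{X\mid C}(\cdot\mid c)$ because $(X^*, C^*) \sim \mathbb{P}_{(X,C)}$; and (iii) the second marginal equals $\mathbb{P}_{G(Z,C)\mid C}(\cdot\mid c)$ because $\tilde{X}^* = G(Z^*, C^*)$ and $(Z^*, C^*) \sim \mathbb{P}_{(Z,C)}$.

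For the reverse inclusion, I would start with $\pi^* \in \Pi(\mathbb{P}_{(X,C)}, \mathbb{P}_{(G(Z,C),C)}\mid\mathbb{P}_{C})$ together with its disintegration $\{\pi^*(\cdot\mid c)\}_{c\in\mathcal{C}}$, and construct the required quadruple stepwise on an enlarged probability space. First draw $C^* \sim \mathbb{P}_C$; then, given $C^* = c$, draw $(X^*, \tilde{X}^*) \sim \pi^*(\cdot\mid c)$; and finally, given $(\tilde{X}^*, C^*) = (\tilde{x}, c)$, draw $Z^*$ from the regular conditional $\mathbb{P}_{Z\mid G(Z,C),C}(\cdot\mid\tilde{x}, c)$ induced by $\mathbb{P}_{(Z,C)}$ and $G$. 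The checks are: $(X^*, \tilde{X}^*) \sim \pi^*$ is immediate from the construction; $(X^*, C^*) \sim \mathbb{P}_{(X,C)}$ follows from the first marginal of $\pi^*(\cdot\mid c)$; the identity $\tilde{X}^* = G(Z^*, C^*)$ holds almost surely because $Z^*$ is sampled from the conditional distribution supported on $\{z : G(z,c) = \tilde{x}\}$; and finally integrating the conditional $\mathbb{P}_{Z\mid G(Z,C),C}(\cdot\mid \tilde{x}, c)$ against the marginal $\mathbb{P}_{G(Z,C)\mid C}(\cdot\mid c)$ recovers $\mathbb{P}_{Z\mid C}(\cdot\mid c)$, yielding $(Z^*, C^*) \sim \mathbb{P}_{(Z,C)}$.

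The main technical obstacle is the existence of the regular conditional distributions used in both directions, particularly the conditional $\mathbb{P}_{Z\mid G(Z,C),C}$ required to invert $G$ probabilistically in the second inclusion. I would handle this by appealing to the standing compactness/Polish-space setup of Section~\ref{sec3.1} together with the measurability of $G$, which guarantees that all disintegrations exist and are almost-surely unique; no additional assumption beyond what has already been imposed on $\mathcal{X}$, $\mathcal{Z}$, and $\mathcal{C}$ is needed. Once the two inclusions are combined the lemma follows, and the resulting equivalence is exactly what Theorem~\ref{thm2} needs to re-express the conditional sub-coupling infimum as an infimum over encoders in $\mathcal{Q}$.
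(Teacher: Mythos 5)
Your proof is correct and follows essentially the same two-inclusion argument as the paper: in both directions one conditions on $C^{*}$ to pass between the joint coupling $\pi$ and the family $\{\pi^{*}(\cdot\mid c)\}_{c\in\mathcal{C}}$ of conditional couplings. Your explicit stepwise construction of $Z^{*}$ via the regular conditional $\mathbb{P}_{Z\mid G(Z,C),C}(\cdot\mid\tilde{x},c)$ is a more careful justification of the step the paper merely asserts (``there exists $Z^{*}$ such that $Z^{*}\mid C^{*}\sim\mathbb{P}_{Z\mid C}(\cdot\mid C^{*})$ and $\tilde{X}^{*}=G(Z^{*},C^{*})$''), but it is the same underlying idea.
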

\begin{proof}
First, we prove $\tilde{\mathcal{Q}}(\mathbb{P}_{(X,C)}, \mathbb{P}_{(Z, C)}, G) \subseteq \Pi(\mathbb{P}_{(X,C)}, \mathbb{P}_{(G(Z, C), C)}|\mathbb{P}_{C})$. By definition, for any $\pi \in \tilde{\mathcal{Q}}(\mathbb{P}_{(X,C)}, \mathbb{P}_{(Z, C)}, G)$, there exists $(X^{*}, \tilde{X}^{*}, Z^{*}, C^{*})$ satisfying $(X^{*}, \tilde{X}^{*}) \sim \pi$, $(X^{*}, C^{*}) \sim \mathbb{P}_{(X,C)}$, $\tilde{X}^{*}=G(Z^{*}, C^{*})$, and $(Z^{*}, C^{*}) \sim \mathbb{P}_{(Z,C)}$. Since $X^{*}|C^{*} \sim \mathbb{P}_{X|C}(\cdot|C=C^{*})$ and $\tilde{X}^{*}|C^{*} \sim \mathbb{P}_{G(Z,C)|C}(\cdot|C=C^{*})$, the distribution of $(X^{*}, \tilde{X}^{*})|C^{*}$ is an element of $\Pi(\mathbb{P}_{X|C}(\cdot|C=C^{*}), \mathbb{P}_{G(Z,C)|C}(\cdot|C=C^{*}))$. Since $C^{*} \sim \mathbb{P}_{C}$, it is shown that $\pi \in \Pi(\mathbb{P}_{(X,C)}, \mathbb{P}_{(G(Z, C), C)}|\mathbb{P}_{C})$.

Next, we prove $\tilde{\mathcal{Q}}(\mathbb{P}_{(X,C)}, \mathbb{P}_{(Z, C)}, G) \supseteq \Pi(\mathbb{P}_{(X,C)}, \mathbb{P}_{(G(Z, C), C)}|\mathbb{P}_{C})$. For any $\pi \in \Pi(\mathbb{P}_{(X,C)}, \mathbb{P}_{(G(Z, C), C)}|\mathbb{P}_{C})$, there exists $(X^{*}, \tilde{X}^{*}, C^{*})$ such that $(X^{*}, \tilde{X}^{*}) \sim \pi$, $X^{*}|C^{*} \sim \mathbb{P}_{X|C}(\cdot|C=C^{*})$, $\tilde{X}^{*}|C^{*} \sim \mathbb{P}_{G(Z, C)|C}(\cdot|C=C^{*})$, and  $C^{*} \sim \mathbb{P}_{C}$. Since $\tilde{X}^{*}|C^{*} \sim \mathbb{P}_{G(Z, C)|C}(\cdot|C=C^{*})$, there exists $Z^{*}$ such that $Z^{*}|C^{*} \sim \mathbb{P}_{Z|C}(\cdot|C=C^{*})$ and $\tilde{X}^{*} = G(Z^{*}, C^{*})$. As $(X^{*}, \tilde{X}^{*}, Z^{*}, C^{*})$ satisfies $(X^{*}, \tilde{X}^{*}) \sim \pi$, $(X^{*}, C^{*}) \sim \mathbb{P}_{(X,C)}$, $\tilde{X}^{*}=G(Z^{*}, C^{*})$, and $(Z^{*}, C^{*}) \sim \mathbb{P}_{(Z,C)}$, it is shown that $\pi \in \tilde{\mathcal{Q}}(\mathbb{P}_{(X,C)}, \mathbb{P}_{(Z, C)}, G)$, which concludes the proof.
\end{proof}
Now, we prove Theorem \ref{thm2}. Showing that $\underset{\pi \in \tilde{\mathcal{Q}}(\mathbb{P}_{(X,C)}, \mathbb{P}_{(Z, C)}, G)}{\inf} \int d^{p}(x, \tilde{x})d\pi(x, \tilde{x})$ is equal to $\underset{\mathbb{Q}_{Z|X,C} \in \mathcal{Q}}{\inf} \int  d^{p}(x, \text{Gen}(z, c)) d\mathbb{Q}_{Z|X,C}(z|x,c)d\mathbb{P}_{(X,C)}(x,c)$ is sufficient by Lemma \ref{lem1} where $\mathcal{Q}$ denotes the set of all $\mathbb{Q}_{Z|X,C}$ satisfying $\mathbb{P}_{(Z, C)}(z, c)=\big( \int \mathbb{Q}_{Z|X,C}(z|x, c)d\mathbb{P}_{X|C}(x|c) \big) \mathbb{P}_{C}(c)$.

First, we show that LHS $\geq$ RHS. For any $\pi \in \tilde{\mathcal{Q}}(\mathbb{P}_{(X,C)}, \mathbb{P}_{(Z, C)}, G)$, there exists $(X^{*}, \tilde{X}^{*}, Z^{*}, C^{*})$ satisfying $(X^{*}, \tilde{X}^{*}) \sim \pi$, $(X^{*}, C^{*}) \sim \mathbb{P}_{(X,C)}$, $\tilde{X}^{*}=G(Z^{*}, C^{*})$, and $(Z^{*}, C^{*}) \sim \mathbb{P}_{(Z,C)}$. This implies that
\begin{equation*}
    \begin{split}
        \int d^{p}(x, \tilde{x})d\pi(x, \tilde{x}) &=\mathbb{E}_{(X^{*}, \tilde{X}^{*}, Z^{*}, C^{*})}d^{p}(x^{*}, \tilde{x}^{*})\\
        &=\mathbb{E}_{(Z^{*}, C^{*})} \mathbb{E}_{(X^{*}, \tilde{X}^{*})|Z^{*}, C^{*}} \big( d^{p}(X^{*}, \tilde{X}^{*}) |Z^{*}, C^{*} \big)\\
        &=\mathbb{E}_{(Z^{*}, C^{*})} \mathbb{E}_{(X^{*}, G(Z^{*}, C^{*}))|Z^{*}, C^{*}} \big( d^{p}(X^{*}, G(Z^{*}, C^{*}))  |Z^{*}, C^{*} \big)\\
        &=\mathbb{E}_{(Z^{*}, C^{*})} \mathbb{E}_{X^{*}|Z^{*}, C^{*}} \big( d^{p}(X^{*}, G(Z^{*}, C^{*}))  |Z^{*}, C^{*} \big)\\
        &=\mathbb{E}_{(X^{*}, C^{*})} \mathbb{E}_{Z^{*}|X^{*}, C^{*}} \big( d^{p}(X^{*}, G(Z^{*}, C^{*}))  |X^{*}, C^{*} \big)\\
        &=\int  d^{p}(x^{*}, \text{Gen}(z^{*}, c^{*})) d\mathbb{Q}_{Z^{*}|X^{*},C^{*}}(z^{*}|x^{*},c^{*})d\mathbb{P}_{(X^{*},C^{*})}(x^{*},c^{*}).
    \end{split}
\end{equation*}
We denote the distribution of $Z^{*}|X^{*}, C^{*}$ by $\mathbb{Q}_{Z^{*}|X^{*}, C^{*}}$. Since $Z^{*}|C^{*} \sim \mathbb{P}_{Z|C}(\cdot|C=C^{*})$ and $X^{*}|C^{*} \sim \mathbb{P}_{X|C}(\cdot|C=C^{*})$, $\mathbb{P}_{Z|C}(z^{*}|c^{*})=\int \mathbb{Q}_{Z^{*}|X^{*}, C^{*}}(z^{*}|x^{*},c^{*})d\mathbb{P}_{X|C}(x^{*}|c^{*})$ for all $x^{*} \in \mathcal{X}$, $z^{*} \in \mathcal{Z}$, and $c^{*} \in \mathcal{C}$.
Thus, $\mathbb{Q}_{Z^{*}|X^{*}, C^{*}} \in \mathcal{Q}$. This and $(X^{*}, C^{*}) \sim \mathbb{P}_{(X, C)}$ imply $\int d^{p}(x, \tilde{x})d\pi(x, \tilde{x}) \geq \underset{\mathbb{Q}_{Z|X,C} \in \mathcal{Q}}{\inf} \int  d^{p}(x, \text{Gen}(z, c)) d\mathbb{Q}_{Z|X,C}(z|x,c)d\mathbb{P}_{(X,C)}(x,c)$. It concludes LHS $\geq$ RHS.

Next, we show LHS $\leq$ RHS. For any $\mathbb{Q}_{Z|X,C} \in \mathcal{Q}$, there exists $(X^{*}, Z^{*}, C^{*})$ satisfying $Z^{*}|X^{*}, C^{*} \sim \mathbb{Q}_{Z|X,C}(\cdot|X=X^{*}, C=C^{*})$, $(X^{*}, C^{*}) \sim \mathbb{P}_{(X, C)}$, and $(Z^{*}, C^{*}) \sim \mathbb{P}_{(Z, C)}$. We denote $\tilde{X}^{*}:=G(Z^{*}, C^{*})$ and the distribution of $(X^{*}, \tilde{X}^{*})$ by $\pi^{*}$. Then,
\begin{equation*}
\scalebox{0.88}{\ensuremath{\begin{split}
    \int  d^{p}(x^{*}, \text{Gen}(z^{*}, c^{*})) d\mathbb{Q}_{Z|X,C}(z^{*}|x^{*},c^{*})d\mathbb{P}_{(X,C)}(x^{*},c^{*}) &=\mathbb{E}_{(X^{*}, C^{*})} \mathbb{E}_{Z^{*}|X^{*}, C^{*}} \big( d^{p}(X^{*}, G(Z^{*}, C^{*})) |X^{*}, C^{*} \big)\\
    &=\mathbb{E}_{(Z^{*}, C^{*})} \mathbb{E}_{X^{*}|Z^{*}, C^{*}} \big( d^{p}(X^{*}, G(Z^{*}, C^{*})) |Z^{*}, C^{*} \big)\\
    &=\mathbb{E}_{(Z^{*}, C^{*})} \mathbb{E}_{(X^{*}, \tilde{X}^{*})|Z^{*}, C^{*}} \big( d^{p}(X^{*}, \tilde{X}^{*}) |Z^{*}, C^{*} \big)\\
    &=\mathbb{E}_{(X^{*}, \tilde{X}^{*}, Z^{*}, C^{*})} d^{p}(X^{*}, \tilde{X}^{*})\\
    &= \int d^{p}(x^{*}, \tilde{x}^{*}) d\pi^{*}(x, \tilde{x}^{*}).
\end{split}}}
\end{equation*}
Here, $\pi^{*} \in \tilde{\mathcal{Q}}(\mathbb{P}_{(X,C)}, \mathbb{P}_{(Z, C)}, G)$ because $(X^{*}, \tilde{X}^{*}, Z^{*}, C^{*})$ satisfies $(X^{*}, \tilde{X}^{*}) \sim \pi^{*}$, $(X^{*}, C^{*}) \sim \mathbb{P}_{(X,C)}$, $\tilde{X}^{*}=G(Z^{*}, C^{*})$, and $(Z^{*}, C^{*}) \sim \mathbb{P}_{(Z,C)}$. Thus, we have $\underset{\pi \in \tilde{\mathcal{Q}}(\mathbb{P}_{(X,C)}, \mathbb{P}_{(Z, C)}, G)}{\inf} \int d^{p}(x, \tilde{x})d\pi(x, \tilde{x}) \leq \int  d^{p}(x, \text{Gen}(z, c)) d\mathbb{Q}_{Z|X,C}(z|x,c)d\mathbb{P}_{(X,C)}(x,c)$, which concludes the proof.

\subsubsection{Proof of Lemma \ref{lem1}.}\label{appA.2.3}
For any $\mathbb{Q}_{(X,C,X'C')}$ s.t. $\mathbb{Q}_{(X,C)}=\mathbb{P}_{(X,C)|C}(\cdot|c)$ and $\mathbb{Q}_{(X',C')}=\mathbb{P}_{(X,C)|C}(\cdot|c')$, by the definition of the Wasserstein distance,
\begin{equation*}
\scalebox{0.86}{\ensuremath{\begin{split}
\int d^{p}_{\text{Enc}}((x, c), (x', c'))d\mathbb{Q}_{(X,C,X',C')}(x,c,x',c') &= \int \lVert (\text{Enc}(x, c), c) - (\text{Enc}(x', c'), c') \rVert^{p} d\mathbb{Q}_{(X,C,X',C')}(x,c,x',c') \\
& \geq W^{p}_{p}(\mathbb{P}_{(\text{Enc}(X, C), C)|C}(\cdot|c), \mathbb{P}_{(\text{Enc}(X, C), C)|C}(\cdot|c'); \lVert \cdot \rVert).
\end{split}}}
\end{equation*}
This implies that $W_{p}(\mathbb{P}_{(X,C)|C}(\cdot|c), \mathbb{P}_{(X,C)|C}(\cdot|c'); d_{\text{Enc}})$ is greater than or equal to $W_{p}(\mathbb{P}_{(\text{Enc}(X, C), C)|C}(\cdot|c), \mathbb{P}_{(\text{Enc}(X, C), C)|C}(\cdot|c'); \lVert \cdot \rVert)$. Similarly, for any $\mathbb{Q}_{(Z,C,Z',C')}$ s.t. $\mathbb{Q}_{(Z, C)}=\mathbb{P}_{(\text{Enc}(X, C), C)|C}(\cdot|c)$ and $\mathbb{Q}_{(Z', C')}=\mathbb{P}_{(\text{Enc}(X, C), C)|C}(\cdot|c')$, 
\begin{equation*}
\scalebox{0.87}{\ensuremath{\begin{split}
\int \lVert (z, c) - (z', c') \rVert^{p} d\mathbb{Q}_{(Z, C, Z', C')}(z, c, z', c') &= \int d^{p}_{\text{Enc}}( (\text{Gen}(z, c), c), (\text{Gen}(z', c'), c') ) d\mathbb{Q}_{(Z, C, Z', C')}(z, c, z', c') \\
&\geq W^{p}_{p}(\mathbb{P}_{(X, C)|C}(\cdot|c), \mathbb{P}_{(X, C)|C}(\cdot|c'); d_{\text{Enc}})
\end{split}}}
\end{equation*}
holds by $(A1)$ and the definition of the Wasserstein distance, which implies $W_{p}(\mathbb{P}_{(\text{Enc}(X, C), C)|C}(\cdot|c), \mathbb{P}_{(\text{Enc}(X, C), C)|C}(\cdot|c'); \lVert \cdot \rVert) \geq W_{p}(\mathbb{P}_{(X, C)|C}(\cdot|c), \mathbb{P}_{(X, C)|C}(\cdot|c'); d_{\text{Enc}})$.

\subsubsection{Proof of Theorem \ref{thm3}.}\label{appA.2.4}
By Lemma \ref{lem1}, it is sufficient to show that the distribution of $((1-t)\text{Enc}(X_{0}, c_{0})+t\text{Enc}(T(X_{0}, c_{0}, c_{1}), c_{1}), (1-t)c_{0}+tc_{1})$, denoted by $w(t)$, is the constant-speed Wasserstein geodesic from $\mathbb{P}_{\text{Enc}(X, C)|C}(\cdot|c_{0})$ to $\mathbb{P}_{\text{Enc}(X, C)|C}(\cdot|c_{1})$ in $W_{p}(\cdot, \cdot; \lVert \cdot \rVert)$. First, by $(A2)$, $w(0):=\mathbb{P}_{(\text{Enc}(X_{0}, c_{0}), c_{0})}=\mathbb{P}_{(\text{Enc}(X, c_{0}), c_{0})}(\cdot|c_{0})=\mathbb{P}_{(\text{Enc}(X, C), C)|C}(\cdot|c_{0})$. By $(A3)$, $\mathbb{P}_{(T(X,C,c_{1}), c_{1})|C}(\cdot|c_{0})=\mathbb{P}_{(X, C)|C}(\cdot|c_{1})$, which implies $\mathbb{P}_{(\text{Enc}(T(X,C,c_{1}), c_{1}), c_{1})|C}(\cdot|c_{0})=\mathbb{P}_{(\text{Enc}(X, c_{1}), C)|C}(\cdot|c_{1})=\mathbb{P}_{(\text{Enc}(X, C), C)|C}(\cdot|c_{1})$. This and $(A2)$ imply that $w(1):=\mathbb{P}_{(\text{Enc}(T(X_{0}, c_{0}, c_{1}), c_{1}), c_{1})}=\mathbb{P}_{(\text{Enc}(T(X, C, c_{1}), c_{1}), c_{1})|C}(\cdot|c_{0})=\mathbb{P}_{(\text{Enc}(X, C), C)|C}(\cdot|c_{1})$. Next, for any $t \leq t'$, by the definition of the Wasserstein distance, $(A3)$, and Lemma \ref{lem1}, we have
\begin{equation*}
\scalebox{0.92}{\ensuremath{
    \begin{aligned}
        & W_{p}(w(t), w(t'); \lVert \cdot \rVert)&\leq& \Big(\int \lVert ((1-t)\text{Enc}(x_{0}, c_{0})+t\text{Enc}(T(x_{0}, c_{0}, c_{1}), c_{1}), \tilde{c}_{t}) - \\
        & & &((1-t')\text{Enc}(x_{0}, c_{0})+t'\text{Enc}(T(x_{0}, c_{0}, c_{1}), c_{1}), \tilde{c}_{t'}) \rVert^{p} d\mathbb{P}_{X|C}(x_{0}|c_{0}) \Big)^{1/p}\\
        & & =&|t-t'|\Big( \int \| (\text{Enc}(x_{0}, c_{0}), c_{0})-(\text{Enc}(T(x_{0}, c_{0}, c_{1}), c_{1}), c_{1}) \|^{p} d\mathbb{P}_{X|C}(x_{0}|c_{0}) \Big)^{1/p}\\
        & & =&|t-t'|\Big( \int d^{p}_{\text{Enc}}((x_{0}, c_{0}), (T(x_{0}, c_{0}, c_{1}), c_{1})) d\mathbb{P}_{X|C}(x_{0}|c_{0}) \Big)^{1/p}\\
        & & =&|t-t'| W_{p}(\mathbb{P}_{(X, C)|C}(\cdot|c_{0}), \mathbb{P}_{(X, C)|C}(\cdot|c_{1}); d_{\text{Enc}}) \\
        & & =&|t-t'| W_{p}(\mathbb{P}_{(\text{Enc}(X, C), C)|C}(\cdot|c_{0}), \mathbb{P}_{(\text{Enc}(X, C), C)|C}(\cdot|c_{1}); \lVert \cdot \rVert).&
    \end{aligned}}}
\end{equation*}
Here, the last term is $|t-t'|W_{p}(w(0), w(1); \lVert \cdot \rVert)$ and it implies
\begin{equation*}
    \begin{split}
        W_{p}(w(0), w(1); \lVert \cdot \rVert)&\leq W_{p}(w(0), w(t); \lVert \cdot \rVert + W_{p}(w(t), w(t'); \lVert \cdot \rVert + W_{p}(w(t'), w(1); \lVert \cdot \rVert)\\
        & \leq (t+(t'-t)+(1-t'))W_{p}(w(0), w(1); \lVert \cdot \rVert)\\
        & = W_{p}(w(0), w(1); \lVert \cdot \rVert).
    \end{split}
\end{equation*}
Thus, equalities hold and it concludes the proof.

\subsubsection{Proof of Lemma \ref{lem2}.}\label{appA.2.5}
By Lemma \ref{lem1}, showing that $\underset{\mathbb{P} \in \cup_{c \in \mathcal{C}}\mathcal{P}(\mathcal{Z} \times \{c\})}{\inf}\sum_{m=1}^{M}\alpha_{m}W^{2}_{2}(\mathbb{P}, \mathbb{P}_{(\text{Enc}(X, C), C)|C}(\cdot|c_{m}); \lVert \cdot \rVert)$ is equal to $\underset{\mathbb{P} \in \mathcal{P}(\mathcal{Z})}{\inf}\sum_{m=1}^{M}\alpha_{m}W^{2}_{2}(\mathbb{P}, \mathbb{P}_{\text{Enc}(X, C)|C}(\cdot|c_{m}); \lVert \cdot \rVert) + \sum_{m=1}^{M}\alpha_{m}\lVert \bar{c}-c_{m} \rVert^{2}$ is sufficient. By Proposition 4.2 in \citeauthor{agueh2011barycenters} (\citeyear{agueh2011barycenters}), for any measures $( \mathbb{P}_{m})_{m=1}^{M}$ vanishing on small sets,\footnote{A measure defined on $D$-dimensional spaces is said to vanish on small sets if it vanishes on $(D-1)$-rectifiable sets \citep{gangbo1998optimal, agueh2011barycenters}.}
\begin{equation}\label{eq:dual_multimarginal}
\underset{\mathbb{P}}{\inf} \sum_{m=1}^{M}\alpha_{m}W^{2}_{2}(\mathbb{P}, \mathbb{P}_{m}; \lVert \cdot \rVert)=  \underset{\pi \in \Pi(\mathbb{P}_{1}, \dots, \mathbb{P}_{M})}{\inf} \int \sum_{m=1}^{M}\alpha_{m} \lVert x_{m} - \sum_{m=1}^{M}\alpha_{m}x_{m} \rVert^{2} d\pi(x_{1},\dots,x_{M})
\end{equation}
and $\mathbb{P}_{X^{*}}=\mathbb{P}_{\sum_{m=1}^{M}\alpha_{m}X^{*}_{m}}$ hold where $\Pi(\mathbb{P}_{1}, \dots, \mathbb{P}_{M})$ is the set of joint distributions whose marginals are $( \mathbb{P}_{m})_{m=1}^{M}$, $\mathbb{P}_{X^{*}}$ is the unique solution of the LHS, and $\mathbb{P}_{(X^{*}_{1},\dots,X^{*}_{M})}$ is the unique solution of the RHS. By Equation (\ref{eq:dual_multimarginal}),
\begin{equation*}
\scalebox{0.83}{\ensuremath{
\begin{split}
&\underset{\mathbb{P} \in \cup_{c \in \mathcal{C}}\mathcal{P}(\mathcal{Z} \times \{c\})}{\inf}\sum_{m=1}^{M}\alpha_{m}W^{2}_{2}(\mathbb{P}, \mathbb{P}_{(\text{Enc}(X, C), C)|C}(\cdot|c_{m}); \lVert \cdot \rVert)\\
&= \underset{\mathbb{Q}_{(Z_{1},\dots,Z_{M})} \in \Pi(\mathbb{P}_{\text{Enc}(X,C)|C}(\cdot|c_{1}),\dots,\mathbb{P}_{\text{Enc}(X,C)|C}(\cdot|c_{M}))}{\inf}\int \sum_{m=1}^{M}\alpha_{m} \lVert (z_{m}, c_{m}) - \sum_{m=1}^{M}\alpha_{m}(z_{m}, c_{m}) \rVert^{2} d\mathbb{Q}_{(Z_{1},\dots,Z_{M})}(z_{1},\dots,z_{M})\\
&= \underset{\mathbb{Q}_{(Z_{1},\dots,Z_{M})} \in \Pi(\mathbb{P}_{\text{Enc}(X,C)|C}(\cdot|c_{1}),\dots,\mathbb{P}_{\text{Enc}(X,C)|C}(\cdot|c_{M}))}{\inf}\int \sum_{m=1}^{M}\alpha_{m} \lVert z_{m} - \sum_{m=1}^{M}\alpha_{m}z_{m} \rVert^{2} d\mathbb{Q}_{(Z_{1},\dots,Z_{M})}(z_{1},\dots,z_{M})\\
&\quad + \sum_{m=1}^{M} \alpha_{m} \lVert \bar{c} - c_{m} \rVert^{2}\\
&=\underset{\mathbb{P} \in \mathcal{P}(\mathcal{Z})}{\inf}\sum_{m=1}^{M}\alpha_{m}W^{2}_{2}(\mathbb{P}, \mathbb{P}_{\text{Enc}(X, C)|C}(\cdot|c_{m}); \lVert \cdot \rVert) + \sum_{m=1}^{M}\alpha_{m}\lVert \bar{c}-c_{m} \rVert^{2}.
\end{split}}}
\end{equation*}

\subsubsection{Proof of Theorem \ref{thm4}.}\label{appA.2.6}
We first derive a lemma for Theorem \ref{thm4}.
\begin{lemma}\label{lem4}Suppose the encoder, generator, and transport map satisfy conditions $(A1)$ through $(A3)$.
Then, 
\begin{equation}\label{s_eq:bounds}
\scalebox{0.99}{\ensuremath{\begin{split}
&(1/2)\sum_{m=1}^{M}\sum_{m'=1}^{M}\alpha_{m}\alpha_{m'}W^{2}_{2}(\mathbb{P}_{(X,C)|C}(\cdot|c_{m}), \mathbb{P}_{(X,C)|C}(\cdot|c_{m'}); d_{\text{Enc}})\\
&\leq \underset{\mathbb{P} \in \cup_{c \in \mathcal{C}}\mathcal{P}(\mathcal{X} \times \{c\})}{\inf}\sum_{m=1}^{M}\alpha_{m}W^{2}_{2}(\mathbb{P}, \mathbb{P}_{(X, C)|C}(\cdot|c_{m}); d_{\text{Enc}}) \\
&\leq \sum_{m=1}^{M} \alpha_{m}W^{2}_{2}(\mathbb{P}_{(\tilde{X}(\alpha_{1},\dots,\alpha_{M}), \bar{c})}, \mathbb{P}_{(X, C)|C}(\cdot|c_{m}); d_{\text{Enc}}) \\
&\leq (1/2)\sum_{m=1}^{M}\sum_{m'=1}^{M}\alpha_{m}\alpha_{m'}\int d^{2}_{\text{Enc}}((T(x_{1}, c_{1}, c_{m}), c_{m}), (T(x_{1}, c_{1}, c_{m'}), c_{m'})) d\mathbb{P}_{X|C}(x_{1}|c_{1})
\end{split}}}
\end{equation}
holds. When we further suppose the condition $(A4)$, $\mathbb{P}_{(\tilde{X}(\alpha_{1},\dots,\alpha_{M}), \bar{c})}$ is the Wasserstein barycenter of $(\mathbb{P}_{(X, C)|C}(\cdot|c_{m}))_{m=1}^{M}$ w.r.t. weights $(\alpha_{m})_{m=1}^{M}$ and all the terms in Equation (\ref{eq:bounds}) are equal to $\sum_{m=1}^{M}\alpha_{m}\lVert \bar{c} - c_{m}\rVert^{2}$.
\end{lemma}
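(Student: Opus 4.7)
The plan is to prove each of the three inequalities in~(\ref{s_eq:bounds}) separately and then argue the collapse under~(A4). The middle inequality $\inf_{\mathbb{P}}\sum_m\alpha_m W_2^2(\mathbb{P},\cdot) \le \sum_m\alpha_m W_2^2(\mathbb{P}_{(\tilde X,\bar c)},\cdot)$ is immediate, because $\mathbb{P}_{(\tilde X(\alpha_1,\dots,\alpha_M),\bar c)}$ is supported on the slice $\mathcal{X}\times\{\bar c\}$ and therefore belongs to $\cup_c\mathcal{P}(\mathcal{X}\times\{c\})$. The heavy lifting is in the outer two inequalities, both of which hinge on Lemma~\ref{lem1} (the isometry between $(\mathcal{X}\times\mathcal{C}, d_{\text{Enc}})$ and $(\mathcal{Z}\times\mathcal{C}, \|\cdot\|)$) and on the elementary variance identity $\sum_{m}\alpha_m\|v_m-\bar v\|^2 = \tfrac{1}{2}\sum_{m,m'}\alpha_m\alpha_{m'}\|v_m-v_{m'}\|^2$ whenever $\bar v=\sum_m\alpha_m v_m$.

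\textbf{Lower bound.} I would start from the decomposition already established in the proof of Lemma~\ref{lem2}: the infimum on $\cup_c\mathcal{P}(\mathcal{X}\times\{c\})$ splits into the latent Wasserstein-variance term plus $\sum_m\alpha_m\|\bar c-c_m\|^2$. Using the multimarginal dual invoked there, I would rewrite the latent term as $\inf_{\pi\in\Pi}\int \sum_m\alpha_m\|z_m-\bar z\|^2\, d\pi$, apply the variance identity to the integrand, and swap the weighted sum with the infimum to produce $\tfrac{1}{2}\sum_{m,m'}\alpha_m\alpha_{m'}W_2^2(\mathbb{P}_{\text{Enc}(X,C)|C}(\cdot|c_m), \mathbb{P}_{\text{Enc}(X,C)|C}(\cdot|c_{m'}); \|\cdot\|)$. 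Because the $c$-coordinate is deterministic on each slice, Pythagoras lets the label variance $\tfrac{1}{2}\sum_{m,m'}\alpha_m\alpha_{m'}\|c_m-c_{m'}\|^2$ merge additively with the latent term, and Lemma~\ref{lem1} converts the assembled sum of pairwise $(\mathcal{Z}\times\mathcal{C},\|\cdot\|)$-Wasserstein distances back into $(\mathcal{X}\times\mathcal{C}, d_{\text{Enc}})$-Wasserstein distances, giving the claimed bound.

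\textbf{Upper bound.} I would exhibit an explicit coupling between $\mathbb{P}_{(\tilde X,\bar c)}$ and each $\mathbb{P}_{(X,C)|C}(\cdot|c_m)$ driven by a single draw $X_1\sim\mathbb{P}_{X|C}(\cdot|c_1)$: pair $(\tilde X(\alpha_1,\dots,\alpha_M),\bar c)$ with $(T(X_1,c_1,c_m),c_m)$, which is admissible because~(A3) ensures $T(X_1,c_1,c_m)\sim\mathbb{P}_{X|C}(\cdot|c_m)$. By~(A1), the $d_{\text{Enc}}^2$-cost of this coupling equals $\|v_m-\bar v\|^2$ with $v_m=(\text{Enc}(T(X_1,c_1,c_m),c_m),c_m)$ and $\bar v=\sum_{m'}\alpha_{m'}v_{m'}=(\text{Enc}(\tilde X,\bar c),\bar c)$. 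Weighting by $\alpha_m$, summing over $m$, and applying the variance identity directly converts the bound into the claimed $\tfrac{1}{2}\sum_{m,m'}\alpha_m\alpha_{m'}\int d_{\text{Enc}}^2((T(x_1,c_1,c_m),c_m),(T(x_1,c_1,c_{m'}),c_{m'}))\,d\mathbb{P}_{X|C}(x_1|c_1)$.

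\textbf{Main obstacle: the (A4) collapse.} The hardest part is forcing all four quantities in the chain to coincide with $\sum_m\alpha_m\|\bar c-c_m\|^2$ once the latent conditional distributions agree across observed domains. The key claim is that under~(A4) the optimal transport becomes latent-trivial: $\text{Enc}(T(X,c_1,c_m),c_m)=\text{Enc}(X,c_1)$ almost surely. To establish this I would use Lemma~\ref{lem1} to transfer~(A3) into an optimal-transport statement on $(\mathcal{Z}\times\mathcal{C},\|\cdot\|)$, whose separable quadratic cost has a constant $c$-part; under~(A4) the two latent marginals coincide and~(A2) supplies absolute continuity, so the Monge problem has a unique solution, which must be the identity map on the latent coordinate. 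Substituting this identity collapses every pairwise latent term in the upper bound and reduces it to $\tfrac{1}{2}\sum_{m,m'}\alpha_m\alpha_{m'}\|c_m-c_{m'}\|^2=\sum_m\alpha_m\|\bar c-c_m\|^2$, while the lower bound produces the same value because the latent Wasserstein distances vanish. All inequalities thus become equalities, $\mathbb{P}_{(\tilde X,\bar c)}$ attains the infimum, and the standard Agueh--Carlier uniqueness of the latent barycenter (transferred via Lemma~\ref{lem1}) identifies $\mathbb{P}_{(\tilde X,\bar c)}$ as the Wasserstein barycenter.
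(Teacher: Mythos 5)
Your proposal is correct and follows essentially the same route as the paper's proof: the middle inequality by membership of $\mathbb{P}_{(\tilde{X},\bar{c})}$ in the feasible set, the outer two via Lemma \ref{lem1}, the Agueh--Carlier multimarginal formulation, the variance identity $\sum_{m}\alpha_{m}\lVert v_{m}-\bar{v}\rVert^{2}=\tfrac{1}{2}\sum_{m,m'}\alpha_{m}\alpha_{m'}\lVert v_{m}-v_{m'}\rVert^{2}$, and the explicit single-draw coupling, with the $(A4)$ collapse obtained from the latent transport cost vanishing so that $\text{Enc}(T(X_{1},c_{1},c_{m}),c_{m})=\text{Enc}(X_{1},c_{1})$ almost surely. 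The only cosmetic difference is that you invoke uniqueness of the Monge solution where the paper argues directly that zero optimal cost forces the latent displacement to vanish; both are valid.
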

\begin{proof}
First, we derive Equation (\ref{s_eq:bounds}). The second inequality is trivial by the definition of the infimum. For the first inequality, by Lemma \ref{lem1}, the definition of the Wasserstein distance, and Equation (\ref{eq:dual_multimarginal}),
\begin{equation*}
\begin{split}
& (1/2)\sum_{m=1}^{M}\sum_{m'=1}^{M}\alpha_{m}\alpha_{m'}W^{2}_{2}(\mathbb{P}_{(\text{Enc}(X, C), C)|C}(\cdot|c_{m}), \mathbb{P}_{(\text{Enc}(X, C), C)|C}(\cdot|c_{m'}); \lVert \cdot \rVert) \\
&\leq (1/2)\sum_{m=1}^{M}\sum_{m'=1}^{M}\alpha_{m}\alpha_{m'}\int \lVert (z_{m}, c_{m}) - (z_{m'}, c_{m'}) \rVert^{2} d\mathbb{Q}^{*}_{(Z_{m}, Z_{m'})}(z_{m}, c_{m}, z_{m'}, c_{m'})
\end{split}
\end{equation*}
where $\mathbb{Q}^{*}_{(Z_{1},\dots,Z_{M})}$ is the unique solution of $\underset{\pi \in \Pi(\mathbb{P}_{\text{Enc}(X, C)|C}(\cdot|c_{1}),\dots,\mathbb{P}_{\text{Enc}(X, C)|C}(\cdot|c_{M}))}{\inf} \int \sum_{m=1}^{M}\alpha_{m} \lVert z_{m} - \sum_{m=1}^{M}\alpha_{m}z_{m} \rVert^{2} \pi(z_{1},\dots,z_{M})$. Since 
\begin{equation}\label{eq:variance_equality}
(1/2)\sum_{m=1}^{M}\sum_{m'=1}^{M}\alpha_{m}\alpha_{m'}\lVert a_{m} - a_{m'} \rVert^{2}=\sum_{m=1}^{M}\alpha_{m}\lVert a_{m} - \sum_{m=1}^{M}\alpha_{m}a_{m} \rVert^{2}    
\end{equation}
holds for any sequence $(a_{m})_{m=1}^{M}$, we have that $(1/2)\sum_{m=1}^{M}\sum_{m'=1}^{M}\alpha_{m}\alpha_{m'}\int \lVert (z_{m}, c_{m}) - (z_{m'}, c_{m'}) \rVert^{2} d\mathbb{Q}^{*}_{(Z_{m}, Z_{m'})}(z_{m}, c_{m}, z_{m'}, c_{m'})$ is equal to $\sum_{m=1}^{M}\alpha_{m}\int \lVert z_{m} - \sum_{m=1}^{M}\alpha_{m}z_{m} \rVert^{2}d\mathbb{Q}^{*}_{(Z_{1},\dots,Z_{M})}(z_{1},\dots,z_{M})+\sum_{m=1}^{M}\alpha_{m} \lVert \bar{c} - c_{m} \rVert^{2}$. The first term in the RHS equals to $\underset{\pi \in \Pi(\mathbb{P}_{\text{Enc}(X, C)|C}(\cdot|c_{1}),\dots,\mathbb{P}_{\text{Enc}(X, C)|C}(\cdot|c_{M}))}{\inf} \int \sum_{m=1}^{M}\alpha_{m} \lVert z_{m} - \sum_{m=1}^{M}\alpha_{m}z_{m} \rVert^{2} \pi(z_{1},\dots,z_{M})$. This and Lemmas \ref{lem1} and \ref{lem2} imply
\begin{equation*}
\begin{split}
& (1/2)\sum_{m=1}^{M}\sum_{m'=1}^{M}\alpha_{m}\alpha_{m'}W^{2}_{2}(\mathbb{P}_{(\text{Enc}(X, C), C)|C}(\cdot|c_{m}), \mathbb{P}_{(\text{Enc}(X, C), C)|C}(\cdot|c_{m'}); \lVert \cdot \rVert) \\
& \leq \underset{\mathbb{P} \in \cup_{c \in \mathcal{C}}\mathcal{P}(\mathcal{Z} \times \{c\})}{\inf}\sum_{m=1}^{M}\alpha_{m}W^{2}_{2}(\mathbb{P}, \mathbb{P}_{(\text{Enc}(X, C), C)|C}(\cdot|c_{m}); \lVert \cdot \rVert) \\
& \leq \sum_{m=1}^{M} \alpha_{m}W^{2}_{2}(\mathbb{P}_{ (\sum_{m=1}^{M}\alpha_{m}\text{Enc}(T(X_{1}, c_{1}, c_{m}), c_{m}), \bar{c}), \bar{c}) }, \mathbb{P}_{(\text{Enc}(X, C), C)|C}(\cdot|c_{m}); \lVert \cdot \rVert),
\end{split}
\end{equation*}
which concludes the first inequality by Lemma \ref{lem1}. For the last inequality, by the definition of the Wasserstein distance, $W^{2}_{2}(\mathbb{P}_{ (\sum_{m=1}^{M}\alpha_{m}\text{Enc}(T(X_{1}, c_{1}, c_{m}), c_{m}), \bar{c}), \bar{c}) }, \mathbb{P}_{(\text{Enc}(X, C), C)|C}(\cdot|c_{m}); \lVert \cdot \rVert) \leq \int \lVert (\sum_{m=1}^{M}\alpha_{m}\text{Enc}(T(X_{1}, c_{1}, c_{m}), c_{m}), \bar{c}) - (T(x_{1}, c_{1}, c_{m'}), c_{m'}) \rVert^{2} d\mathbb{P}_{X|C}(x_{1}|c_{1})$. This, Equation (\ref{eq:variance_equality}), and Lemma \ref{lem1} conclude the proof for the last inequality.

Next, we derive that the distribution of the latent interpolation result is the Wasserstein barycenter when $(A4)$ holds. Since the Wasserstein variance of $(\mathbb{P}_{\text{Enc}(X,C)|C}(\cdot|c_{m}))_{m=1}^{M}$ is zero, by Lemmas \ref{lem1} and \ref{lem2}, it is sufficient to show that 
\begin{equation}\label{eq:barycenter}
    \sum_{m=1}^{M}\alpha_{m}W^{2}_{2}(\mathbb{P}_{\text{Gen}(\sum_{m=1}^{M}\alpha_{m}\text{Enc}(T(X_{1}, c_{1}, c_{m}), c_{m}), \bar{c})}, \mathbb{P}_{(X, C)|C}(\cdot|c_{m}); d_{\text{Enc}}) = \sum_{m=1}^{M}\alpha_{m}\lVert \bar{c}-c_{m} \rVert^{2}.
\end{equation}
By $(A4)$, the optimal transportation cost from $\mathbb{P}_{\text{Enc}(X, C)|C}(\cdot|c_{1})$ to $\mathbb{P}_{\text{Enc}(X, C)|C}(\cdot|c_{m})$ is zero, which implies that $\text{Enc}(X_{1}, c_{1})=\text{Enc}(T(X_{1}, c_{1}, c_{m}), c_{m})$ almost surely for all $m$. Thus, $\mathbb{P}_{\text{Gen}(\sum_{m=1}^{M}\alpha_{m}\text{Enc}(T(X_{1}, c_{1}, c_{m}), c_{m}))}=\mathbb{P}_{\text{Gen}(\text{Enc}(X_{1}, c_{1}), c_{m})}=\mathbb{P}_{\text{Gen}(\text{Enc}(X, C), c_{m})|C}(\cdot|c_{1})=\mathbb{P}_{\text{Gen}(\text{Enc}(X, C), c_{m})|C}(\cdot|c_{m})=\mathbb{P}_{X|C}(\cdot|c_{m})$ for all $m$. This implies that the LHS in Equation (\ref{eq:barycenter}) equals to $\sum_{m=1}^{M}\alpha_{m}W^{2}_{2}(\mathbb{P}_{(X, \bar{c})|C}(\cdot|c_{m}), \mathbb{P}_{(X, c_{m})|C}(\cdot|c_{m}); d_{\text{Enc}})=\sum_{m=1}^{M}\alpha_{m}\lVert \bar{c} - c_{m} \rVert^{2}$, which concludes the proof. For general $p$-Wasserstein distances, $ \sum_{m=1}^{M}\alpha_{m}W^{2}_{p}(\mathbb{P}_{\text{Gen}(\sum_{m=1}^{M}\alpha_{m}\text{Enc}(T(X_{1}, c_{1}, c_{m}), c_{m}), \bar{c})}, \mathbb{P}_{(X, C)|C}(\cdot|c_{m}); d_{\text{Enc}})=\sum_{m=1}^{M}\alpha_{m}W^{2}_{p}(\mathbb{P}_{(X, \bar{c})|C}(\cdot|c_{m}), \mathbb{P}_{(X, c_{m})|C}(\cdot|c_{m}); d_{\text{Enc}})=\sum_{m=1}^{M}\alpha_{m}\lVert \bar{c} - c_{m} \rVert^{2}$ holds.

Last, we derive that all the terms in Equation (\ref{s_eq:bounds}) are equal to $\sum_{m=1}^{M}\alpha_{m}\lVert \bar{c} - c_{m} \rVert^{2}$ when $(A4)$ holds. Since $\mathbb{P}_{\text{Gen}(\sum_{m=1}^{M}\alpha_{m}\text{Enc}(T(X_{1}, c_{1}, c_{m}), c_{m}))}=\mathbb{P}_{X|C}(\cdot|c_{m})$ for all $m$, derived in the above paragraph, implies that $\mathbb{P}_{(X, C)|C}(\cdot|c_{m'})=\mathbb{P}_{(X, c_{m'})|C}(\cdot|c_{m})$, the lower bound becomes $(1/2)\sum_{m=1}^{M}\sum_{m}^{M}\alpha_{m}\alpha_{m'}\lVert c_{m} - c_{m'} \rVert^{2}$. Similarly, the $\text{Enc}(X_{1}, c_{1})=\text{Enc}(T(X_{1}, c_{1}, c_{m}), c_{m})$ for all $m$ implies that the upper bound becomes $(1/2)\sum_{m=1}^{M}\sum_{m}^{M}\alpha_{m}\alpha_{m'}\lVert c_{m} - c_{m'} \rVert^{2}$. Now, Equation (\ref{eq:variance_equality}) concludes the proof.
\end{proof}
Now, we show Theorem \ref{thm4}. By Lemma \ref{lem4}, we have
\begin{equation*}
\scalebox{0.92}{\ensuremath{\begin{split}
& \sum_{m=1}^{M} \alpha_{m}W^{2}_{2}(\mathbb{P}_{(\tilde{X}(\alpha_{1},\dots,\alpha_{M}), \bar{c})}, \mathbb{P}_{(X, C)|C}(\cdot|c_{m}); d_{\text{Enc}}) - \underset{\mathbb{P} \in \cup_{c \in \mathcal{C}}\mathcal{P}(\mathcal{X} \times \{c\})}{\inf}\sum_{m=1}^{M}\alpha_{m}W^{2}_{2}(\mathbb{P}, \mathbb{P}_{(X, C)|C}(\cdot|c_{m}); d_{\text{Enc}})\\
& \leq \frac{1}{2} \sum_{m=1}^{M}\sum_{m'=1}^{M}\alpha_{m}\alpha_{m'}\big( \int d^{2}_{\text{Enc}}((T(x_{1}, c_{1}, c_{m}), c_{m}), (T(x_{1}, c_{1}, c_{m'}), c_{m'})) d\mathbb{P}_{X|C}(x_{1}|c_{1}) \\
& \qquad \qquad \qquad \qquad \qquad - W^{2}_{2}(\mathbb{P}_{(X,C)|C}(\cdot|c_{m}), \mathbb{P}_{(X,C)|C}(\cdot|c_{m'}); d_{\text{Enc}}) \big).
\end{split}}}
\end{equation*}
By (A3), $\int d^{2}_{\text{Enc}}((T(x_{1}, c_{1}, c_{m}), c_{m}), (T(T(x_{1}, c_{1}, c_{m}), c_{m}, c_{m'})) d\mathbb{P}_{X|C}(x_{1}|c_{1})$ is equal to $W^{2}_{2}(\mathbb{P}_{(X,C)|C}(\cdot|c_{m}), \mathbb{P}_{(X,C)|C}(\cdot|c_{m'}); d_{\text{Enc}})$, which concludes the proof of Equation (4). The upper bound is zero and the distribution of latent interpolation result is the Wasserstein barycenter by Lemma \ref{lem4}.

\subsubsection{Proof of Theorem \ref{thm5}.}\label{appA.2.7}
By $(A5)$, the data generation structure can be expressed as $X=\text{Gen}(Z, C)$ where $Z:=\text{Enc}(X, C)$ independent of $C$. The Wasserstein barycenter of $(\mathbb{P}_{(X, C)|C}(\cdot|c_{m}))_{m=1}^{M}=(\mathbb{P}_{(\text{Gen}(Z, C), C)|C}(\cdot|c_{m}))_{m=1}^{M}$ w.r.t. $W_{2}(\cdot, \cdot; d_{\text{Enc}})$ is the same as that of $(\mathbb{P}_{(Z, c_{m})})_{m=1}^{M}$ w.r.t. $W_{2}(\cdot, \cdot; \lVert \cdot \rVert)$, which implies that $\mathbb{P}_{X|C}(\cdot|\bar{c})$ is the Wasserstein barycenter. Now, Theorem \ref{thm4} concludes the proof.

\subsubsection{Proof of Theorem \ref{thm6}.}\label{appA.2.8}
We denote the first, second, and third term in objective (4) as follows.

\begin{itemize}
    \item $L_{\text{TransportCost}}(T):=\int d_{\text{Enc}}^{p}((x_{0}, c_{0}), (T(x_{0}, c_{0}, c_{1}), c_{1}))d\mathbb{P}_{X|C}(x_{0}|c_{0})d\mathbb{P}_{C}(c_{0})d\mathbb{P}_{C}(c_{1})$
    \item $L_{\text{MatchData}}(T):=W_{p}(\mathbb{P}_{(X, C)}\mathbb{P}_{C}, \mathbb{P}_{(T(X_{0}, C_{0}, C_{1}), C_{1}, C_{0})}; \| \cdot \|)$
    \item $L_{\text{Cycle}}(T):=\int \| x_{0} - T(T(x_{0}, c_{0}, c_{1}), c_{1}, c_{0}) \| d\mathbb{P}_{X|C}(x_{0}|c_{0})d\mathbb{P}_{C}(c_{0})d\mathbb{P}_{C}(c_{1})$
\end{itemize}
By definition of the optimal transport map, $L_{\text{MatchData}}(T^{\dag})=L_{\text{Cycle}}(T^{\dag})=0$ and $L_{\text{TransportCost}}(T^{\dag}) \leq L_{\text{TransportCost}}(T)$ for all $T: \mathcal{X} \times \mathcal{C}^{2} \to \mathcal{X}$. Thus, $T^{\dag}$ is a minimizer. Let $T$ be a minimizer of objective (4). Then, $L_{\text{MatchData}}(T)=L_{\text{Cycle}}(T)=0$ and $L_{\text{TransportCost}}(T) = L_{\text{TransportCost}}(T^{\dag})$. By definition of the optimal transport map, it implies $T(x_{0}, c_{0}, c_{1}):= T_{c_{0} \to c_{1}}^{\dag}(x_{0})$ with probability $1$ w.r.t. $\mathbb{P}_{X|C}(x_{0}|c_{0})$ for all $c_{0}, c_{1} \in \mathcal{C}$. Thus, $T=T^{\dag}$ and it concludes that $T^{\dag}$ is the unique minimizer.

\section{Details on Experiments}\label{appB}

\subsection{Implementation Details}\label{appB.1}
\begin{table}[ht]\label{tab:arc_enc_gen}
\caption{Architectures of encoder and generator networks.}
\centering
\resizebox{1.0\columnwidth}{!}{
\begin{tabular}{cc}
\toprule
Encoder                                                       & Generator                                                        \\ \hline
Conv. with kernel 11x11, filter size 128, stride 1, padding 5 & ConvTran. with kernel 4x4, filter size 1024, stride 1, padding 0 \\
BatchNormalization                                            & BatchNormalization                                               \\
LeakyReLU with slope 0.2                                      & LeakyReLU with slope 0.2                                         \\
Conv. with kernel 6x6, filter size 128, stride 2, padding 2   & ConvTran. with kernel 4x4, filter size 1024, stride 2, padding 1 \\
BatchNormalization                                            & BatchNormalization                                               \\
LeakyReLU with slope 0.2                                      & LeakyReLU with slope 0.2                                         \\
Conv. with kernel 4x4, filter size 256, stride 2, padding 1   & ConvTran. with kernel 4x4, filter size 512, stride 2, padding 1  \\
BatchNormalization                                            & BatchNormalization                                               \\
LeakyReLU with slope 0.2                                      & LeakyReLU with slope 0.2                                         \\
Conv. with kernel 4x4, filter size 512, stride 2, padding 1   & ConvTran. with kernel 4x4, filter size 256, stride 2, padding 1  \\
BatchNormalization                                            & BatchNormalization                                               \\
LeakyReLU with slope 0.2                                      & LeakyReLU with slope 0.2                                         \\
Conv. with kernel 4x4, filter size 1024, stride 2, padding 1  & ConvTran. with kernel 4x4, filter size 128, stride 2, padding 1  \\
BatchNormalization                                            & BatchNormalization                                               \\
LeakyReLU with slope 0.2                                      & LeakyReLU with slope 0.2                                         \\
Conv. with kernel 4x4, filter size 1024, stride 2, padding 1  & ConvTran. with kernel 6x6, filter size 128, stride 2, padding 2  \\
BatchNormalization                                            & BatchNormalization                                               \\
LeakyReLU with slope 0.2                                      & LeakyReLU with slope 0.2                                         \\
Conv. with kernel 4x4, filter size 128, stride 1, padding 0   & ConvTran. with kernel 11x11, filter size 1, stride 1, padding 5  \\
\multicolumn{1}{l}{}                                          & Sigmoid      \\
\bottomrule
\end{tabular}}
\end{table}

\begin{table}[ht]\label{tab:arc_trans}
\caption{The architecture of the transport map network.}
\centering
\resizebox{1.0\columnwidth}{!}{
\begin{tabular}{cc}
\toprule
Transport map                                                    & Residual block                                              \\ \hline
Conv. with kernel 11x11, filter size 64, stride 1, padding 5     & Conv. with kernel 3x3, filter size 256, stride 1, padding 1 \\
BatchNormalization                                               & BatchNormalization                                          \\
ReLU                                                             & ReLU                                                        \\
Conv. with kernel 4x4, filter size 128, stride 2, padding 1      & Conv. with kernel 3x3, filter size 256, stride 1, padding 1 \\
BatchNormalization                                               & BatchNormalization                                          \\
ReLU                                                             &                                                             \\
Conv. with kernel 4x4, filter size 256, stride 2, padding 1      &                                                             \\
BatchNormalization                                               &                                                             \\
ReLU                                                             &                                                             \\
Residual block 1                                                 &                                                             \\
Residual block 2                                                 &                                                             \\
Residual block 3                                                 &                                                             \\
Residual block 4                                                 &                                                             \\
Residual block 5                                                 &                                                             \\
Residual block 6                                                 &                                                             \\
ConvTrans. with kernel 4x4, filter size 128, stride 2, padding 1 &                                                             \\
BatchNormalization                                               &                                                             \\
ReLU                                                             &                                                             \\
ConvTrans. with kernel 4x4, filter size 64, stride 2, padding 1  &                                                             \\
BatchNormalization                                               &                                                             \\
ReLU                                                             &                                                             \\
ConvTrans. with kernel 11x11, filter size 1, stride 1, padding 5 &                                                             \\
Sigmoid                                                          &      \\  \bottomrule
\end{tabular}}
\end{table}

\begin{figure*}[ht]\label{fig:further_generation}
 \vskip 0.2in
  \centerline{
  \begin{minipage}[b]{0.45\textwidth}
      \begin{center}
        \includegraphics[width=.95\columnwidth]{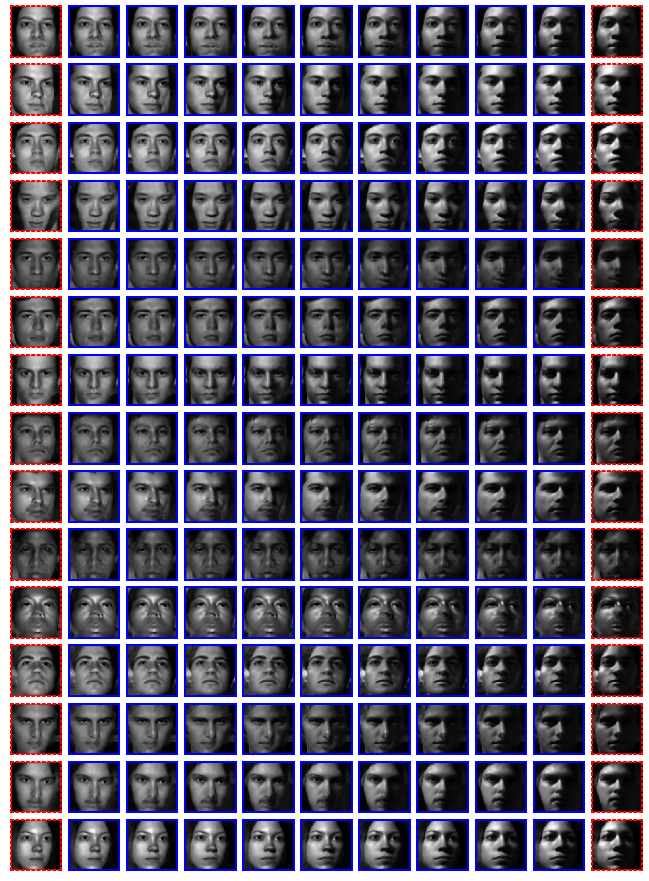}
      \end{center}
  \end{minipage}
  \begin{minipage}[b]{0.45\textwidth}
      \begin{center}
        \includegraphics[width=0.95\columnwidth]{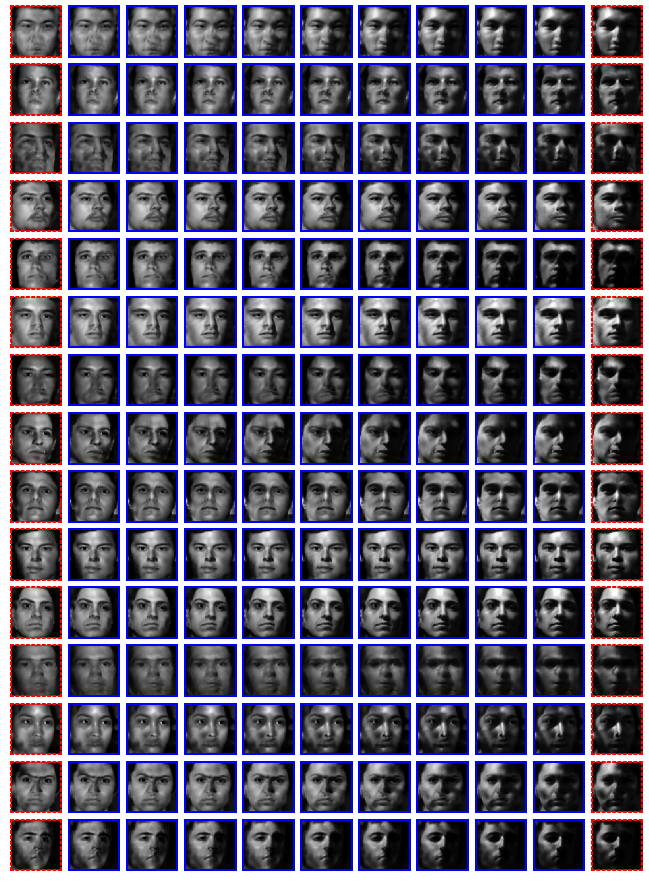}      
      \end{center}
  \end{minipage}
  }
  \caption{Conditional generation results by the proposed method (left) and cAAE (right). The proposed method produces face images with clearer eyes, noses, and mouths than baselines. For each method, the leftmost and rightmost columns show generation results for observed domains and intermediate columns show results for unobserved intermediate domains.}
  \vskip 0.2in
\end{figure*}

\begin{table}[ht]\label{tab:arc_disc_reg}
\caption{Architectures of discriminator for generator, discriminator for transport map, and auxiliary regressor.}
\centering
\resizebox{1.0\columnwidth}{!}{
\begin{tabular}{ccc}
\toprule
Discriminator for generator & Discriminator for transport map                              & Auxiliary regressor \\ \hline
Linear with filter size 512 & Conv. with kernel 4x4, filter size 64, stride 2, padding 1   & Conv. with kernel 4x4, filter size 64, stride 2, padding 1   \\
ReLU                        & LeakyReLU with slope 0.01                                    & LeakyReLU with slope 0.01                                    \\
Linear with filter size 512 & Conv. with kernel 4x4, filter size 128, stride 2, padding 1  & Conv. with kernel 4x4, filter size 128, stride 2, padding 1  \\
ReLU                        & LeakyReLU with slope 0.01                                    & LeakyReLU with slope 0.01                                    \\
Linear with filter size 512 & Conv. with kernel 4x4, filter size 256, stride 2, padding 1  & Conv. with kernel 4x4, filter size 256, stride 2, padding 1  \\
ReLU                        & LeakyReLU with slope 0.01                                    & LeakyReLU with slope 0.01                                    \\
Linear with filter size 512 & Conv. with kernel 4x4, filter size 512, stride 2, padding 1  & Conv. with kernel 4x4, filter size 512, stride 2, padding 1  \\
ReLU                        & LeakyReLU with slope 0.01                                    & LeakyReLU with slope 0.01                                    \\
Linear with filter size 1   & Conv. with kernel 4x4, filter size 1024, stride 2, padding 1 & Conv. with kernel 4x4, filter size 1024, stride 2, padding 1 \\
Sigmoid                     & LeakyReLU with slope 0.01                                    & LeakyReLU with slope 0.01                                    \\
                            & Conv. with kernel 4x4, filter size 2048, stride 2, padding 1 & Conv. with kernel 4x4, filter size 2048, stride 2, padding 1 \\
                            & LeakyReLU with slope 0.01                                    & LeakyReLU with slope 0.01                                    \\
                            & Conv. with kernel 3x3, filter size 1, stride 1, padding 1    & Conv. with kernel 3x3, filter size 2, stride 1, padding 1    \\
                            & Average pooling with kernel 2x2, stride 2                    & Average pooling with kernel 2x2, stride 2     \\
\bottomrule
\end{tabular}}
\end{table}

Our work consists of three main networks, encoder, generator, and transport map, and three auxiliary networks, discriminator for generator, discriminator for transport map, and auxiliary regressor. Architectures of the encoder and generator networks are adopted from DCGAN \citep{radford2015unsupervised}, and the architecture of the transport map is adopted from StarGAN \citep{choi2018stargan}. Architectures are modified to concatenate light conditions to latent variables. Table \hyperref[tab:arc_enc_gen]{1} shows architectures of encoder and generator networks. Conv and ConvTran denote convolutional layer and convolutional transpose layer, respectively. Table \hyperref[tab:arc_trans]{2} shows the architecture of the transport map network. We apply skip connection to input features in intermediate layers of convolutional layers into convolutional transpose layers. Table \hyperref[tab:arc_disc_reg]{3} shows architectures of discriminator for generator, discriminator for transport map, and auxiliary regressor.

We control the size of the networks of baselines for a fair comparison. For conditional AAE (cAAE, \citealp{makhzani2015adversarial}), architectures are the same as ours except encoder and discriminator input only latent variable. For CycleGAN \citep{zhu2017unpaired}, architectures are the same as ours. For StarGAN, architectures are the same as ours except translator map inputs only source data and target domain labels. The dimension of latent variable is $128$. For both the proposed method and baselines, we train the encoder and generator pair for $100,000$ iterations with batch size of $32$, and train the transport map for $50,000$ iterations with batch size of $16$. We use the Adam \citep{kingma2014adam} optimizer and set the initial learning rate to $0.0002$ and to linearly decrease to $0$ for the encoder and generator pair and to $0.0001$ for the transport map. In the first step of training encoder and generator pair, we update encoder and generator for every $5$ iterations while update discriminator for generator for every iteration. In the second step of training transport map, we update transport map and auxiliary regressor for every $5$ iterations while update discriminator for transport map for every iteration. For a data pre-processing, we apply a face detection algorithm proposed by \citet{viola2001rapid} to crop the face part. The resolution of image is scaled to $128$, the range of images is scaled to $[0, 1]$, and the horizontal flip with probability $0.5$ is applied during training.

We denote losses as follows and provide values of coefficients for losses. 
\begin{itemize}
    \item $L_{Recon}(\text{Enc}, \text{Gen}):=\int \| x - \text{Gen}(\text{Enc}(x, c), c) \| ^{p} d\mathbb{P}_{X|C}(x|c)d\mathbb{P}_{C}(c)$
    \item $L_{MatchLatent}(\text{Enc}, \text{Gen}):=\mathcal{D}_{\text{JS}} \bigg( \mathbb{P}_{Z}(z)\mathbb{P}_{C}(c), \big( \int \delta_{z}(\text{Enc}(x,c))d\mathbb{P}_{X|C}(x|c) \big) \mathbb{P}_{C}(c) \bigg)$
    \item $L_{ReconLatent}(\text{Enc}, \text{Gen}):=\int \| (z, c) - (\text{Enc}(\text{Gen}(z, c), c), c) \| d\mathbb{P}_{(Z, C)}(z, c)$
    \item $L_{\text{TransportCost}}(T):=\int d_{\text{Enc}}^{p}((x_{0}, c_{0}), (T(x_{0}, c_{0}, c_{1}), c_{1}))d\mathbb{P}_{X|C}(x_{0}|c_{0})d\mathbb{P}_{C}(c_{0})d\mathbb{P}_{C}(c_{1})$
    \item $L_{\text{MatchData}}(T):=W_{p}(\mathbb{P}_{(X, C)}\mathbb{P}_{C}, \mathbb{P}_{(T(X_{0}, C_{0}, C_{1}), C_{1}, C_{0})}; \| \cdot \|)$
    \item $L_{\text{Cycle}}(T):=\int \| x_{0} - T(T(x_{0}, c_{0}, c_{1}), c_{1}, c_{0}) \| d\mathbb{P}_{X|C}(x_{0}|c_{0})d\mathbb{P}_{C}(c_{0})d\mathbb{P}_{C}(c_{1})$
\end{itemize}
The coefficients of $L_{Recon}(\text{Enc}, \text{Gen})$, $L_{MatchLatent}(\text{Enc}, \text{Gen})$, $L_{ReconLatent}(\text{Enc}, \text{Gen})$, $L_{\text{MatchData}}(T)$, and $L_{\text{Cycle}}(T)$ are $100.0$, $1.0$, $0.1$, $1.0$, and $5.0$, respectively. For $L_{\text{TransportCost}}(T)$, we consider $\{0.1, 1.0, 10.0, 100.0\}$ and choose the model yielding the best validation FID scores. The coefficient of gradient penalty loss, regression loss, and reconsturction error in the second step, $\int d(x, T(x, c, c)) d\mathbb{P}_{(X,C)}(x,c)$, is $100.0$, $1.0$, and $10.0$, respectively. For CycleGAN, the coefficient of identity mapping loss is $1.0$. We extended the definition of $d_{\text{Enc}}$ by introducing a hyperparameter $\epsilon$ where the extended formula is $d_{\text{Enc}}((x_{1},c_{1}), (x_{2},c_{2})):= \big( \lVert \text{Enc}(x_{1}, c_{1}) - \text{Enc}(x_{2}, c_{2}) \rVert^{2} + \epsilon \lVert c_{1}-c_{2} \rVert^{2} \big)^{1/2}$ to balance distances on $\mathcal{Z}$ and on $\mathcal{C}$.

\subsection{Further Results}\label{appB.2}

Figure \hyperref[fig:further_generation]{4} presents further conditional generation results for unobserved intermediate domains. As in Figure 1, the proposed method produces face images with clearer eyes, noses, and mouths than baselines.

\begin{figure}[ht]
\vskip 0.2in
\begin{center}
\centerline{\includegraphics[width=0.8\columnwidth]{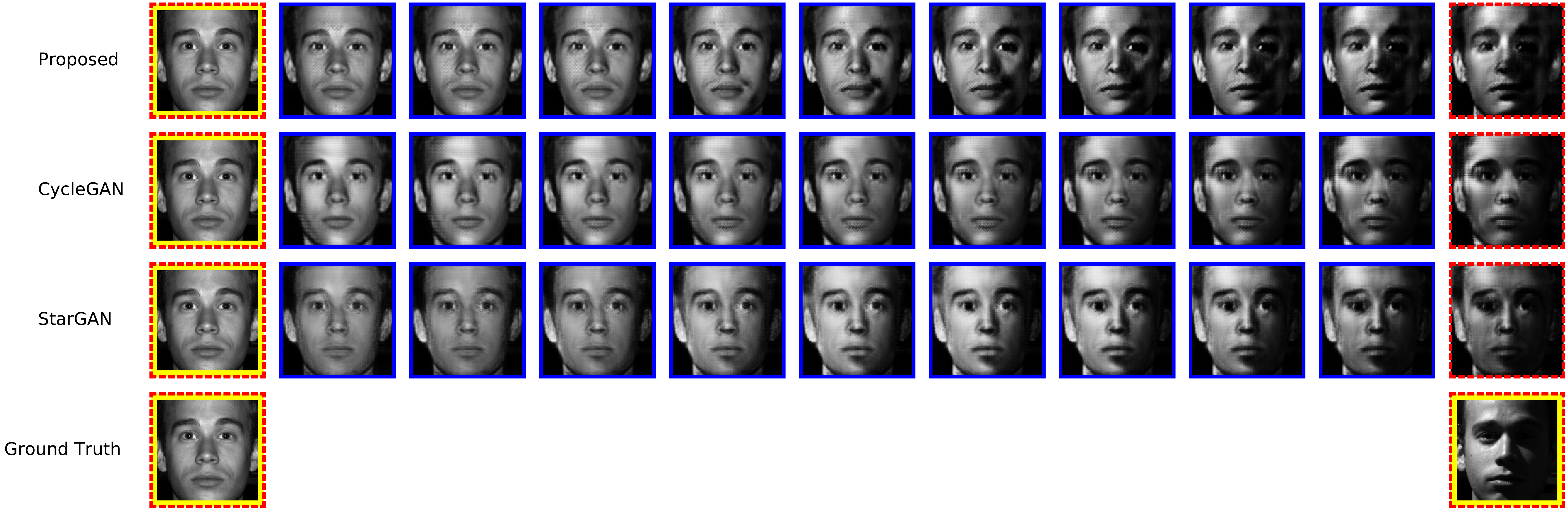}}
\caption{A visualization of latent interpolation with the real data and their translation results by various methods. The bottom row shows real images from two observed domains. From the first to third rows, the leftmost column shows the ground-truth, the rightmost column shows transportation results of the ground-truth, and intermediate columns show latent interpolation results for unobserved intermediate domains.}
\label{fig:further_latentinterpolation_real}
\end{center}
\vskip -0.2in
\end{figure}

Figure \hyperref[fig:further_latentinterpolation_real]{5} presents further results of latent interpolation on unobserved intermediate domains with a real image and its translation result. The bottom row shows the ground-truth images of fixed subject and pose from two observed domains with (azimuth, elevation) of $(0, 0)$, for the leftmost, and $(70, 45)$, for the rightmost. As in Figure 3 of the manuscript, the outputs of the proposed method are visually sharper and more plausible than the baselines.

\subsection{Computing Infrastructure}\label{appB.3}
We use about one hundred CPU cores and ten GPUs (five GeForce GTX 1080, two TITAN X, and three TITAN V) for experiments. A full training of the proposed method requires about \(15\) GPU hours for the encoder and generator pair and \(14\) GPU hours for the transport map.



\vskip 0.2in
\bibliography{ref}

\end{document}